\documentclass[twoside,11pt]{article}

\usepackage{arxiv-macros-new}

\begin{document}

\title{Universal Lower Bound for Causal Structure Learning with Interventions}
\author{Vibhor Porwal\thanks{Adobe Research,
    Bangalore. \href{mailto:vibhorporwal99@gmail.com}{\texttt{vibhorporwal99@gmail.com}}}
  \and Piyush Srivastava\thanks{Tata Institute of Fundamental Research,
    Mumbai. \href{mailto:piyush.srivastava@tifr.res.in}{\texttt{piyush.srivastava@tifr.res.in}}}
  \and Gaurav Sinha\thanks{Adobe Research,
    Bangalore. \href{mailto:gasinha@adobe.com}{\texttt{gasinha@adobe.com}}}}
\maketitle
\begin{NoHyper} {\let\thefootnote\relax \footnotetext{An earlier version of this
      paper was presented at AISTATS 2022 under the title ``Almost Optimal
      Universal Lower Bound for Learning Causal DAGs with Atomic
      Interventions''.}  }
\end{NoHyper}
\date{}

\begin{abstract}A well-studied challenge that arises in the structure learning problem of causal directed acyclic graphs (DAG) is that using observational data, one can only learn the graph up to a ``Markov equivalence class'' (MEC). The remaining undirected edges have to be oriented using interventions, which can be very expensive to perform in applications. Thus, the problem of minimizing the number of interventions needed to fully orient the MEC has received a lot of recent attention, and is also the focus of this work.
Our first result is a new universal lower bound on the number of single-node interventions that any algorithm (whether active or passive) would need to perform in order to orient a given MEC. Our second result shows that this bound is, in fact, within a factor of two of the size of the smallest set of single-node interventions that can orient the MEC. Our lower bound is provably better than previously known lower bounds. Further, using simulations on synthetic graphs and by giving examples of special graph families, we show that our bound is often significantly better. To prove our lower bound, we develop the notion of \Gcb{} (\gcb{}) orderings, which are topological orderings of DAGs without v-structures and satisfy certain special properties. We also use the techniques developed here to extend our results to the setting of multi-node interventions.
\end{abstract}

\newpage

\tableofcontents

\newpage

\section{Introduction}
Causal Bayesian Networks (CBN) provide a very convenient framework for modeling
causal relationships between a collection of random
variables~\citep{Pearl2009}.  A CBN is fully specified by (a) a directed acyclic
graph (DAG), whose nodes model random variables of interest, and whose edges
depict immediate causal relationships between the nodes, and (b) a conditional
probability distribution (CPD) of each variable given its parent variables (in
the DAG) such that the joint distribution of all variables factorizes as a
product of these conditionals.  The generality of the framework has led to CBN
becoming a popular tool for the modeling of causal relationships in a 
variety of fields, with health science~\citep{Shen2020}, molecular cell
biology~\citep{friedman_inferring_2004}, and computational advertising
\citep{Bottou2013} being a few examples.

It is well known that the underlying DAG of a CBN is not uniquely determined by
the joint distribution of its nodes. In fact, the joint distribution only
determines the DAG up to its Markov Equivalence Class (MEC), which is
represented as a partially directed graph with well-defined combinatorial
properties~\citep{verma_equivalence_1990,chickering_transformational_1995,meek_causal_1995,andersson_characterization_1997}.
Information about which nodes are \emph{adjacent} is encoded in the MEC, but the
\emph{direction} of several edges remains undetermined. Thus, learning
algorithms based only on the observed joint distribution \citep{Glymour2019}
cannot direct these remaining edges. As a result, algorithms which use
additional \emph{interventional} distributions were developed
(\cite{squires2020} and references therein). In addition to the joint
distribution, these algorithms also assume access to interventional
distributions generated as a result of randomizing some \emph{target} vertices
in the original CBN (a process called \emph{intervention}) and thereby breaking
their dependence on any of their ancestors. A natural and well-motivated
\citep{eberhardt_number_2012} question, therefore, is to find the minimum number
of interventions required to fully resolve the orientations of the undirected
edges in the MEC.

Interventions, especially on a large set of nodes, however, can be expensive to
perform \citep{kocaoglu_cost-optimal_2017}. In this respect, the setting of
\emph{atomic interventions}, where each intervention is on a single node, is
already very interesting and finding the smallest number of atomic interventions
that can orient the MEC is well studied \citep{squires2020}.
A long line of work, including those cited above, has considered in various
settings the problem of designing methods for finding the smallest set of atomic
interventions that would fully orient all edges of a given MEC.  An important
distinction between such methods is whether they are \emph{active}~\citep{He2008},
i.e., where the directions obtained via the current intervention are available
before one decides which further interventions to perform; or \emph{passive},
where all the interventions to be performed have to be specified beforehand.
Methods can also differ in whether or not randomness is used in selecting the
targets of the interventions.  An important question, therefore, is to
understand how many interventions must be performed by any given method to
fully orient an MEC.

\paragraph{Universal Lower Bounds} While several works have reported lower bounds
(on minimum number of atomic interventions required to orient an MEC) in different
settings, a very satisfying solution concept for such lower bounds, called
\emph{universal lower bounds}, was proposed by~\cite{squires2020}.  A universal
lower bound of $L$ atomic interventions for orienting a given MEC means that if a
set of atomic interventions is of size less than $L$, then for \emph{every}
ground-truth DAG $D$ in the MEC, the set $S$ will fail to fully orient the MEC.
Thus, a universal lower bound has two universality properties. First, the value of
a universal lower bound depends only upon the MEC, and applies to \emph{every} DAG
in the MEC.  Second, the lower bound applies to \emph{every} set of interventions
that would fully orient the MEC, without regards to the method by which the
intervention set was produced.

In this work, we address the problem of obtaining tight universal lower bounds.
The goal is to find a universal lower bound such that for any DAG $D$ in the MEC,
the smallest set of atomic interventions that can orient the MEC into $D$ has size
bounded above by a constant factor of the universal lower bound. Similar to
\cite{squires2020}, we work in the setting of \emph{causally sufficient} models,
i.e. there are no hidden confounders, selection bias or feedback. To the best of
our knowledge, this is the first work that addresses the problem of tight (up to
a constant factor) universal lower bounds. We note that the best known universal
lower bounds \citep{squires2020} so far are not tight and provide concrete examples
of graph families that illustrate this in \Cref{subsubsec:examples}.

\subsection{Our Contributions and Organization of the Paper}
We prove a new universal lower bound on the size of any set of atomic
interventions that can orient a given MEC, improving upon previous
work~\citep{squires2020}.  We further prove that our lower bound is optimal
within a factor of 2 in the class of universal lower bounds: we show that for
any DAG $D$ in the MEC, there is a set of atomic interventions of size at most
twice our lower bound, that would fully orient the MEC if the unknown
ground-truth DAG were $D$. 

We also compare our new lower bound with the one obtained previously by
\cite{squires2020}.  We prove analytically that our lower bound is at least as
good as the one given by~\cite{squires2020}.  We further give examples of graph
classes where our bound is significantly better (in fact, it is apparent from
our proof that the graphs in which the two lower bounds are close must have very
special properties).  We then supplement these theoretical findings with
simulation results comparing our lower bound with the ``true'' optimal answer
and with the lower bound in previous work.

Further, using the techniques developed in our work, we explore how tight our
lower bound remains in the setting of multi-node interventions. See \Cref{sec:multi-node-interventions} for more details.

Our lower bound is based on elementary combinatorial arguments drawing upon the
theory of chordal graphs, and centers around a notion of certain special
topological orderings of DAGs without v-structures, which we call \emph{\Gcb{}
  (\gcb{}) orderings} (\cref{def:nice-ordering}).  This is in contrast to the
earlier work of \cite{squires2020}, where they had to develop sophisticated
notions of directed clique trees and residuals in order to prove their lower
bound.  We expect that the notion of \gcb{} orderings may also be of interest in
the design of optimal intervention sets.

Many of the proofs are deferred to the appendix.  In particular,
Section~\ref{sec:some-folkl-results} of the appendix gives, for the sake of
completeness, proofs of many folklore observations concerning the notion of
interventional Markov equivalence.

\subsection{Related Work}

The theoretical underpinning for many works dealing with the use of
  interventions for orienting an MEC can be said to be the notion of
  ``interventional'' Markov equivalence~\citep{HB12}, which, roughly speaking,
  says that given a collection $\mathcal{I}$ of sets of targets for
  interventions, two DAGs $D_1$ and $D_2$ are \emph{$\mathcal{I}$-Markov
    equivalent} if and only if for all $S \in \mathcal{I}$, the DAGs obtained by
  removing from $D_1$ and $D_2$ the incoming edges of all vertices in $S$ are in
  the same MEC~\citep[Theorem 10]{HB12}.  Thus, interventions have the
  capability of distinguishing between DAGs in the same Markov Equivalence
  class, and in particular, ``interventional'' Markov equivalence classes can be
  finer than MECs~\citep[see also \cref{thm:hb-i-essential} below]{HB12}.

  As described above, the problem of learning the orientations of a CBN using
  interventions has been studied in a wide variety of settings.  Lower bounds
  and algorithms for the problem have been obtained in the setting of
  interventions of arbitrary sizes and with various cost
  models~\citep{eberhardt_almost_2012,shanmugamKDV15,kocaoglu_cost-optimal_2017},
  in the setting when the underlying model is allowed to contain feedback loops
  (and is therefore not a CBN in the usual
  sense)~\citep{hyttinen_experiment_2013, hyttinen_discovering_2013}, in
  settings where hidden variables are
  present~\citep{addanki_efficient_2020,addanki_intervention_2020},
  and in interventional ``sample efficiency'' settings~\citep{AgrawalSYSU19,greenewald_sample_2019}.
  The related notion of orienting the maximum possible number of edges given a
  fixed budget on the number or cost of interventions has also been
  studied~\citep{hauser_two_2014,ghassami2018budgeted,AhmadiTeshniziS20}.
  However, to the best of our knowledge, the work of \cite{squires2020} was the
  first to isolate the notion of a universal lower bound, and prove a lower bound
  in that setting.

\section{Preliminaries}
\label{sec:preliminaries}
\paragraph{Graphs} A \emph{partially directed graph} (or just \emph{graph})
$G = (V, E)$ consists of a set $V$ of \emph{nodes} or \emph{vertices} and a set
$E$ of \emph{adjacencies}.  Each adjacency in $E$ is of the form $a \undir{} b$
or $a \rightarrow b$, where $a, b \in V$ are \emph{distinct} vertices, with the
condition that for any $a, b \in V$, at most one of $a \undir b$, $a \dir b$ and
$b \rdir a$ is present in $E$.\footnote{$a \undir b$ and $b \undir a$ are
  treated as equal.}
If there is an adjacency in $E$ containing both $a$ and
$b$, then we say that $a$ and $b$ are \emph{adjacent} in $G$, or that there is
an \emph{edge} between $a$ and $b$ in $G$.  If $a \undir b \in E$, then we say
that the edge between $a$ and $b$ in $G$ is undirected, while if
$ a \dir b \in E$ then we say that the edge between $a$ and $b$ is
\emph{directed} in $G$.  $G$ is said to be \emph{undirected} if all its
adjacencies are undirected, and \emph{directed} if all its adjacencies are
directed.  Given a directed graph $G$, and a vertex $v$ in $G$, we denote by
$\pa[G]{v}$ the set of nodes $u$ in $G$ such that $u \dir v$ is present in $G$.
A vertex $v$ in $G$ is said to be a \emph{child} of $u$ if $u \in \pa[G]{v}$.
An \emph{induced subgraph} of $G$ is a graph whose vertices are some subset $S$
of $V$, and whose adjacencies $E[S]$ are all those adjacencies in $E$ both of
whose elements are in $S$. This induced subgraph of $G$ is denoted as $G[S]$.
The \emph{skeleton} of $G$, denoted $\skel{G}$, is an undirected graph with
nodes $V$ and adjacencies $a \undir b$ whenever $a$, $b$ are adjacent in $G$.
 
A \emph{cycle} in a graph $G$ is a sequence of vertices
$v_1, v_2, v_3, \dots, v_{n + 1} = v_1$ (with $n \geq 3$) such that for each
$1 \leq i \leq n$, either $v_{i} \dir v_{i + 1}$ or $v_{i} \undir v_{i+1}$ is
present in $E$.  The \emph{length} of the cycle is $n$, and the cycle is said to
be \emph{simple} if $v_1, v_2, \dots, v_n$ are distinct.  The cycle is said to
have a \emph{chord} if two non-consecutive vertices in the cycle are adjacent in
$G$, i.e., if there exist $1 \leq i < j \leq n$ such that
$j - i \neq \pm 1\; (\text{mod } n)$ and such that $v_i$ and $v_j$ are adjacent
in $G$.  The cycle is said to be \emph{directed} if for some $1 \leq i \leq n$,
$v_{i} \dir v_{i + 1}$ is present in $G$. A graph $G$ is said to be a
\emph{chain graph} if it has no directed cycles. The \emph{chain components} of
a chain graph $G$ are the connected components left after removing all the
directed edges from $G$.  A \emph{directed acyclic graph} or DAG is a directed
graph without directed cycles.  Note that both DAGs and undirected graphs are
chain graphs.  An undirected graph $G$ is said to be \emph{chordal} if any
simple cycle in $G$ of length at least 4 has a chord.

A \emph{clique} $C$ in a graph $G = (V, E)$ is a subset of nodes of $G$ such
that any two distinct $u$ and $v$ in $C$ are adjacent in $G$. The clique $C$ is
\emph{maximal} if for all $v \in V \setminus C$, the set $C \cup \inb{v}$ is not
a clique.

A \emph{perfect elimination ordering} (PEO), $\sigma = (v_1, \dots, v_n)$ of a
graph $G$ is an ordering of the nodes of $G$ such that $\forall i \in [n]$,
$ne_G(v_i) \cap \{v_1, \dots, v_{i-1}\}$ is a clique in $G$, where $ne_G(v_i)$
is the set of nodes adjacent to $v_i$.\footnote{Our definition of a PEO uses the
  same ordering convention as \cite{hauser_two_2014}.} A graph is chordal if and
only if it has a perfect elimination ordering~\citep{blair_introduction_1993}. A
\emph{topological ordering}, $\sigma$ of a DAG $D$ is an ordering of the nodes
of $D$ such that $\sigma(a) < \sigma(b)$ whenever $a \in \pa[D]{b}$, where
$\sigma(u)$ denotes the index of $u$ in $\sigma$. We say that $D$ is oriented
according to an ordering $\sigma$ to mean that $D$ has a topological ordering
$\sigma$.

A \emph{v-structure} in a graph $G$ is an induced subgraph of the form
$b \rightarrow a \leftarrow c$ (v-structures are also known as
\emph{unshielded colliders}). It follows easily from the definitions that by
orienting the edges of a chordal graph according to a perfect elimination
ordering, we get a DAG without v-structures, and that the skeleton of a DAG
without v-structures is chordal (see Proposition 1 of \cite{hauser_two_2014}).
In fact, any topological ordering of a DAG $D$ without v-structures is a perfect
elimination ordering of $\skel{D}$.

\paragraph{Interventions} An \emph{intervention} $I$ on a partially directed graph
$G$ is specified as a subset of \emph{target} vertices of $G$.
Operationally, an intervention at $I$ is interpreted as completely randomizing
the distributions of the random variables corresponding to the vertices in $I$.
We work here in the ``infinite sample'' setting, where, under standard
assumptions, performing the intervention $I$ reveals at least the directions
of all edges between vertices in $I$ and $V \setminus I$ (see
\cref{thm:hb-i-essential} below for a more formal statement of the extent to
which a set of interventions orients the edges of a partially directed graph).
An \emph{intervention set} is a set of interventions.  In this paper, we make the
standard assumption that the ``empty'' intervention, in which no vertices are
intervened upon, is \emph{always} included in any intervention set we consider:
this corresponds to assuming that information from purely observational data is
always available (see, e.g., the discussion surrounding Definition 6 of \cite{HB12}).
The \emph{size} of an {intervention set} $\mathcal{I}$ is the number of
interventions in $\mathcal{I}$, not counting the empty intervention.

$\mathcal{I}$ is a set of \emph{atomic interventions} if $\abs{I} = 1$ for all
non-empty $I \in \mathcal{I}$ (an intervention $I$ is said to \emph{non-atomic} if $\abs{I} > 1$).  With a slight abuse of notation, we denote
a set of atomic interventions
$\mathcal{I} = \inb{\emptyset, \inb{v_1}, \dots, \inb{v_k}}$ as just the set
$I = \inb{v_1, \dots, v_k}$ when it is clear from the context that we are
talking about a set of atomic interventions.
\begin{figure*}[t]
    \centering
    \renewcommand{\thesubfigure}{\roman{subfigure}}
    \begin{subfigure}[b]{0.24\textwidth}
      \centering
      \begin{tikzpicture}
        \node (a){$a$};
        \node[right of = a](b){$b$};
        \node[above left of = b](c){$c$};
        \draw[dir] (a) -- (b);
        \draw[dir] (c) -- (a);
      \end{tikzpicture}
      \caption{}
      \label{fig:B}
    \end{subfigure}
    \begin{subfigure}[b]{0.24\textwidth}
      \centering
      \begin{tikzpicture}
        \node (a){$a$};
        \node[above right of = a](c){$c$};
        \node[right of = a](b){$b$};
        \draw[dir] (a) -- (b);
        \draw[dir] (c) -- (b);
      \end{tikzpicture}
      \caption{}
      \label{fig:A}
    \end{subfigure}
    \begin{subfigure}[b]{0.24\textwidth}
      \centering
      \begin{tikzpicture}
        \node (a){$a$};
        \node[above right of = a](c){$c$};
        \node[below right of = c](b){$b$};
        \draw[dir] (a) -- (b);
        \draw[dir] (c) -- (b);
        \draw[dir] (a) -- (c);
      \end{tikzpicture}
      \caption{}
      \label{fig:C}
    \end{subfigure}
    \begin{subfigure}[b]{0.24\textwidth}
      \centering
      \begin{tikzpicture}
        \node (a){$a$};
        \node[left of = a](c1){$c_1$};
        \node[right of = a](c2){$c_2$};
        \node[below of = a](b){$b$};
        \draw[dir] (a) -- (b);
        \draw[dir] (c1) -- (b);
        \draw[dir] (c2) -- (b);
        \draw (c1) -- (a) -- (c2);
      \end{tikzpicture}
      \caption{}
      \label{fig:D}
    \end{subfigure}
    \vspace{\baselineskip}
    \caption{Strong
      Protection~\citep{andersson_characterization_1997,HB12}
      \label{fig:strong-protection}}
  \end{figure*}
Given an intervention set $\mathcal{I}$ and a DAG $D$, we denote, following
\cite{HB12}, by $\mathcal{E}_{\mathcal{I}}(D)$ the partially directed graph
representing the set of all DAGs that are $\mathcal{I}$-Markov equivalent to
$D$. $\mathcal{E}_{\mathcal{I}}(D)$ is also known as the $\mathcal{I}$-essential
graph of $D$ (see Fig. 2 of \cite{hauser_two_2014} for an example).
For a formal definition of $\mathcal{I}$-Markov equivalence, we
refer to Definitions~7 and 9 of \cite{HB12}; we use instead the following
equivalent characterization developed in the same paper.
\begin{restatable}[\textbf{Characterization of $\mathcal{I}$-essential graphs,
Definition 14 and Theorem 18 of \cite{HB12}}]{theorem}{hbcharthm} \label{thm:hb-i-essential}
    Let $D$ be a DAG and $\mathcal{I}$ an intervention set containing the empty set. A graph $H$ is an
    $\mathcal{I}$-essential graph of $D$ if and only if $H$ has the same
    skeleton as $D$, all directed edges of $H$ are directed in the same
    direction as in $D$, all v-structures of $D$ are directed in $H$, and
  \begin{enumerate}
  \item \label{item:chain-chordal} $H$ is a chain graph with chordal chain components.
  \item \label{item:directed-by-Meek-rule-1} For any three vertices $a, b, c$ of $H$,
  the subgraph of $H$ induced by $a$, $b$ and $c$ is not $a \rightarrow b - c$.
  \item \label{item:directed-by-intervention} If $a \rightarrow b$ in $D$ (so
    that $a$, $b$ are adjacent in $H$) and there is an intervention
    $J \in \mathcal{I}$ such that $\abs{J \cap \inb{a, b}} = 1$, then
    $a \rightarrow b$ is directed in $H$.
  \item \label{item:strong-i-protection} Every directed edge $a \rightarrow b$
    in $H$ is strongly $\mathcal{I}$-protected.  An edge $a \rightarrow b$ in
    $H$ is said to be strongly $\mathcal{I}$-protected if either (a) there is an
    intervention $J \in \mathcal{I}$ such that $\abs{J \cap \inb{a, b}} = 1$, or (b) at
    least one of the four graphs in \Cref{fig:strong-protection} appears as an
    induced subgraph of $H$, and $a \rightarrow b$ appears in that induced
    subgraph in the configuration indicated in the figure.
  \end{enumerate}
\end{restatable}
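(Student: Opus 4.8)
The plan is to prove both directions of the equivalence by leaning on the characterization of $\mathcal{I}$-Markov equivalence recalled in the Related Work (Theorem~10 of \cite{HB12}): two DAGs are $\mathcal{I}$-Markov equivalent if and only if for every $S \in \mathcal{I}$ the intervened DAGs $D_1^{(S)}, D_2^{(S)}$ obtained by deleting the incoming edges of all vertices in $S$ lie in a common observational MEC, i.e.\ share skeleton and v-structures. Taking $S = \emptyset$ (which is always present) already forces every DAG $\mathcal{I}$-equivalent to $D$ to have the skeleton and v-structures of $D$; hence the graph $H = \mathcal{E}_{\mathcal{I}}(D)$, which records the edges oriented consistently across the whole $\mathcal{I}$-MEC, automatically has $D$'s skeleton, agrees with $D$ on all its directed edges, and directs all v-structures of $D$. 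This disposes of the ``prefix'' clauses of the statement, and reduces the theorem to characterizing \emph{precisely which edges are common to all members of the $\mathcal{I}$-MEC}, which is exactly the content of the four numbered conditions.

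For the ``only if'' direction I would verify each property of $H = \mathcal{E}_{\mathcal{I}}(D)$ in turn. Clause~\ref{item:directed-by-intervention} is the most transparent: if $a \dir b$ in $D$ and some $J \in \mathcal{I}$ has $\abs{J \cap \inb{a,b}} = 1$, then exactly one endpoint loses its in-edges in $D^{(J)}$, so any DAG orienting this edge the other way would have an intervened skeleton differing from that of $D^{(J)}$; the intervened-graph criterion thus pins the orientation, and the edge is directed in $H$. Clause~\ref{item:directed-by-Meek-rule-1} (Meek's rule~1) is immediate, since an induced $a \dir b \undir c$ oriented as $c \dir b$ would create a v-structure absent from $D$, so $b \dir c$ is forced in every member of the MEC and the edge cannot be common-undirected. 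Clause~\ref{item:chain-chordal} holds because the directed edges of $H$ agree with the acyclic $D$ and so cannot close a directed cycle, while the sub-DAG of $D$ supported on any one chain component has no v-structure (such a v-structure would be directed in $H$), whence by the preliminary fact that a v-structure-free DAG has chordal skeleton that component is chordal. Clause~\ref{item:strong-i-protection} splits into the intervention case~(a), covered exactly by Clause~\ref{item:directed-by-intervention}, and the purely combinatorial case~(b), which is the Andersson--Madigan--Perlman characterization of observational essential graphs applied within each chain component.

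For the ``if'' direction I would show that a graph $H$ satisfying all listed properties equals $\mathcal{E}_{\mathcal{I}}(D)$ by establishing two facts about its edges. First, every consistent extension of $H$ --- an acyclic orientation of the undirected edges that respects the already-directed edges and introduces no new v-structure --- is a DAG $\mathcal{I}$-Markov equivalent to $D$; this is checked against the intervened-graph criterion, using Clause~\ref{item:directed-by-intervention} to guarantee that every intervention-crossing edge retains $D$'s orientation, so that each $D^{(S)}$ keeps the required skeleton and v-structures. Second, every directed edge of $H$ keeps its orientation in all such extensions (hence is genuinely common), while every undirected edge of $H$ can be oriented both ways by suitable extensions (hence is genuinely undirected). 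The latter is where chordality of the chain components enters: a chordal component admits perfect elimination orderings realizing either orientation of a prescribed edge, and Clauses~\ref{item:directed-by-Meek-rule-1} and~\ref{item:strong-i-protection} ensure these local orientations extend across all of $H$ without manufacturing forbidden v-structures.

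The hard part is the completeness half of Clause~\ref{item:strong-i-protection} together with the ``undirected edges really are undirected'' half of the backward direction: both amount to showing that an edge which is \emph{not} strongly $\mathcal{I}$-protected can actually be reversed inside the $\mathcal{I}$-MEC. Producing the required alternative DAG is delicate, since flipping one undirected edge may cascade, forcing further reorientations in order to preserve acyclicity, the v-structure pattern of $D$, and the intervention constraints simultaneously. I expect to control this by arguing chain-component by chain-component, using that each component is chordal and that no intervention-separated pair survives undirected inside it (all such edges are already directed by Clause~\ref{item:directed-by-intervention}), so that the orientation can be flipped locally through a choice of perfect elimination ordering and then patched with the fixed directed part via the strong-protection configurations of \Cref{fig:strong-protection}.
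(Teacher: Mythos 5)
Your proposal takes a fundamentally different route from the paper, and the difference matters. The paper does \emph{not} prove this theorem from the definition of $\mathcal{I}$-Markov equivalence: it treats the substance of the statement as a citation of Definition~14, Theorem~10(iv) and Theorem~18 of \cite{HB12} (which establish both that $\mathcal{E}_{\mathcal{I}}(D)$ satisfies all the listed conditions and that the four numbered conditions characterize the class of $\mathcal{I}$-essential graphs), and then supplies only a short folklore argument (Remark~\ref{supp:rem:hauser-buhlman} and its proof in the appendix) showing \emph{uniqueness}: any two graphs satisfying all the conditions, including the ``prefix'' conditions tying them to $D$, must coincide. That uniqueness argument is a minimal-counterexample induction over a topological ordering of $D$ --- pick the first edge (by the position of its head) directed in one graph but not the other, and check that each of the five ways it could be strongly $\mathcal{I}$-protected contradicts one of the other conditions in the second graph. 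Combined with the cited fact that $\mathcal{E}_{\mathcal{I}}(D)$ itself satisfies everything, uniqueness yields the ``if'' direction with no need to ever construct an alternative DAG in the $\mathcal{I}$-MEC.

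The genuine gap in your proposal sits exactly where you flag ``the hard part'': showing that an edge which is not strongly $\mathcal{I}$-protected can actually be reversed inside the $\mathcal{I}$-MEC. This is the completeness half of Theorem~18 of \cite{HB12}, and its proof there (building on Andersson--Madigan--Perlman for the observational case) is long and technical; it cannot be dispatched by observing that a chordal component admits perfect elimination orderings realizing either orientation of a prescribed edge. The obstruction is that the reversed orientation must be extended to an acyclic orientation of the \emph{entire} graph that (i) preserves every v-structure of $D$, (ii) creates no new v-structure, (iii) agrees with the already-directed part of $H$, and (iv) induces the correct skeleton and v-structures in every intervened graph $D^{(S)}$, $S \in \mathcal{I}$ --- including v-structures of $D^{(S)}$ that are \emph{not} v-structures of $D$ but are created by deleting incoming edges, a case your sketch of the ``if'' direction also omits. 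Flipping one edge can cascade through the directed part of $H$ via exactly the strong-protection configurations, and controlling that cascade is the content of the cited result. As written, your argument asserts the conclusion of the hard step rather than proving it; either you must reproduce the HB12 construction in full, or you should restructure the proof as the paper does --- cite \cite{HB12} for existence and for the characterization of the class of $\mathcal{I}$-essential graphs, and prove only the elementary uniqueness statement.
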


\section{Universal Lower Bound}

In this section, we establish our main technical result
(\Cref{lemma:good-topological-ordering}). Our new lower bound
(\cref{theorem:main-theorem,theorem:main-theorem-arbitrary-DAGs}) then follows
easily from this combinatorial result, without having to resort to the
sophisticated machinery of residuals and directed clique trees developed in
previous work \citep{squires2020}.

We begin with a definition that isolates two important properties of certain
topological orderings of DAGs without v-structures.  Given a DAG $D$ without
v-structures, and a maximal clique $C$ of \skel{D}, we denote by $\sink[D]{C}$
any vertex in $D$ such that $C = \pa[D]{\sink[D]{C}} \cup \inb{\sink[D]{C}}$.
The fact that \sink[D]{C} is uniquely defined, and that
$\sink[D]{C_1} \neq \sink[D]{C_2}$ when $C_1$ and $C_2$ are distinct maximal
cliques of \skel{D} is guaranteed by the following observation. (The standard proof of
this is deferred to Section~\ref{sec:supp:proof-sink-nodes}.)
\begin{observation} \label{lemma:sink-nodes} Let $D$ be a DAG without
  v-structures.  Then, for every maximal clique $C$ of $\skel{D}$, there is a
  unique vertex $v$ of $D$, denoted $\sink[D]{C}$, such that
  $C = \pa[D]{v} \cup \inb{v}$.  Further, for any two distinct maximal cliques
  $C_1$ and $C_2$ in \skel{D}, we have $\sink[D]{C_1} \neq \sink[D]{C_2}$.
\end{observation}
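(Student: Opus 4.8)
The plan is to identify $\sink[D]{C}$ with the unique \emph{sink} of the induced subgraph $D[C]$, and then to use the absence of v-structures together with the maximality of $C$ to pin down its parent set. First I would observe that since $C$ is a clique of $\skel{D}$, every pair of vertices in $C$ is adjacent in $D$, so $D[C]$ is a tournament; as $D$ is acyclic, $D[C]$ is a transitive tournament and hence admits a unique topological ordering. Let $v$ be its last vertex. By transitivity every other vertex of $C$ points into $v$, so $C \setminus \inb{v} \subseteq \pa[D]{v}$ and therefore $C \subseteq \pa[D]{v} \cup \inb{v}$. This establishes the ``easy'' inclusion and produces the candidate for $\sink[D]{C}$.

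The crux is the reverse inclusion $\pa[D]{v} \subseteq C$. Suppose toward a contradiction that some $u \in \pa[D]{v}$ lies outside $C$. Because $C$ is a \emph{maximal} clique and $u \notin C$, the set $C \cup \inb{u}$ is not a clique, so $u$ must be non-adjacent to some $w \in C$; note that $w \neq v$, since $u \dir v$ makes $u$ and $v$ adjacent. But $w \in C \setminus \inb{v}$ gives $w \dir v$, and together with $u \dir v$ and the non-adjacency of $u$ and $w$ this is exactly a v-structure $u \dir v \leftarrow w$, contradicting the hypothesis that $D$ has none. Hence $\pa[D]{v} \subseteq C$, and combined with the previous paragraph we get $C = \pa[D]{v} \cup \inb{v}$. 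I expect this v-structure argument, resting jointly on maximality of $C$ and the no-v-structure assumption, to be the only nontrivial step.

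Finally, both uniqueness claims follow formally. For a fixed $C$, if $v_1$ and $v_2$ both satisfy $C = \pa[D]{v_i} \cup \inb{v_i}$, then $v_1 \in \pa[D]{v_2} \cup \inb{v_2}$ and symmetrically $v_2 \in \pa[D]{v_1} \cup \inb{v_1}$; so if $v_1 \neq v_2$ we would have both $v_1 \dir v_2$ and $v_2 \dir v_1$, a directed $2$-cycle contradicting acyclicity, forcing $v_1 = v_2$. For distinct maximal cliques $C_1 \neq C_2$, I would argue by contrapositive: if $\sink[D]{C_1} = \sink[D]{C_2} = v$, then $C_1 = \pa[D]{v} \cup \inb{v} = C_2$, contradicting $C_1 \neq C_2$. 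Thus distinct maximal cliques have distinct sink vertices, completing the proof.
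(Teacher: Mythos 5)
Your proof is correct and follows essentially the same route as the paper's: both identify $\sink[D]{C}$ as the unique sink of the tournament $D[C]$ and use the absence of v-structures together with maximality of $C$ to show $\pa[D]{v} \subseteq C$. The only cosmetic difference is in the last claim, where you derive distinctness of sinks directly from the identity $C_i = \pa[D]{v} \cup \inb{v}$, while the paper constructs an explicit v-structure from non-adjacent vertices of $C_1 \setminus C_2$ and $C_2 \setminus C_1$; both are valid.
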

We refer to each vertex $v$ of $D$ that is equal to $\sink[D]{C}$ for some
maximal clique of \skel{D} as a \emph{\sinkv{}} vertex of the DAG $D$. Note also
that $\sink[D]{C}$ is also the unique node with out-degree $0$ in the induced
subgraph $D[C]$.

\begin{definition}[\textbf{\GCB{} (\gcb{}) ordering}]\label{def:nice-ordering} Let
  $\sigma$ be a topological ordering of a DAG $D$ without v-structures. Let
  $s_1, s_2, s_3, \dots, s_r$ be the \sinkv{} vertices of $D$ indexed so that
  $\sigma(s_i) < \sigma(s_j)$ when $i < j$. (Here $r$ is the number of maximal
  cliques in \skel{D}.) Then, $\sigma$ is said to be a \emph{\Gcb{} (\gcb{}) ordering}
  of $D$ if it satisfies the following two properties:
  \begin{enumerate}
  \item \textbf{P1: Clique block property} Define $L_1(\sigma)$ to be the set of
    nodes $u$ which occur before or at the same position as $s_1$ in $\sigma$
    i.e., $\sigma(u) \leq \sigma(s_1)$. Similarly, for $2 \leq i \leq r$, define
    $L_i(\sigma)$ to be the set of nodes which occur in $\sigma$ before or at
    the same position as $s_i$, but strictly after $s_{i-1}$ (i.e.,
    $\sigma(s_{i-1}) < \sigma(u) \leq \sigma(s_i)$).
Then, for each $1\leq i \leq r$ the subgraph induced by $L_i(\sigma)$ in
    $\skel{D}$ is a (not necessarily maximal) clique.
  \item \textbf{P2: Shared parents property} If vertices $a$ and $b$ in $D$ are
    consecutive in $\sigma$ (i.e., $\sigma(b) = \sigma(a) + 1$), and also lie in
    the same $L_i(\sigma)$ for some $1 \leq i \leq r$, then all parents of $a$
    are also parents of $b$ in $D$.
  \end{enumerate}
\end{definition}
\begin{figure*}[t]
  \centering
  \begin{subfigure}[b]{0.39\textwidth}
    \centering
    \begin{tikzpicture}
      \node (a){$a$};
\node (b)[above of=a]{$\ubar{b}$};
\node (c)[right of=b]{$c$};
\node (d)[above of=c]{$\ubar{e}$};
\node (e)[right of=c]{$d$};
\node (f)[below of=c]{$\ubar{f}$};
\draw[dir] (a) -- (b);
\draw[dir] (b) -- (c);
      \draw[dir] (b) -- (d);
\draw[dir] (c) -- (d);
      \draw[dir] (c) -- (e);
      \draw[dir] (c) -- (f);
\draw[dir] (e) -- (f);
    \end{tikzpicture}\\
    A DAG $D$ without v-structures
  \end{subfigure}
  \begin{subfigure}[b]{0.6\textwidth}
    \centering
    \begin{align*}
      \sigma'
      &\defeq \underbrace{a\,\ubar{b}}_{L_1(\sigma')}\quad
        \underbrace{c\,d\,\ubar{e}}_{L_2(\sigma')}\quad
        \underbrace{\ubar{f}}_{L_3(\sigma')}\\
      \sigma
      &\defeq \underbrace{a\,\ubar{b}}_{L_1(\sigma)}\quad
        \underbrace{c\,d\,\ubar{f}}_{L_2(\sigma)}\quad
        \underbrace{\ubar{e}}_{L_3(\sigma)}\\
      \tau
      &\defeq \underbrace{a\,\ubar{b}}_{L_1(\tau)}\quad
        \underbrace{c\,\ubar{e}}_{L_2(\tau)}\quad
        \underbrace{d\,\ubar{f}}_{L_3(\tau)}
    \end{align*}
  \end{subfigure}
  \vspace{0.3\in}

  \caption{\GCB{} Topological Orderings\label{fig:gcb-ordering}: $\tau$
    Satisfies both P1 and P2; $\sigma$ Satisfies only P1}
\end{figure*}
We illustrate the definition with an example in \Cref{fig:gcb-ordering}.  In the
figure, vertices $\ubar{b}$, $\ubar{e}$, and $\ubar{f}$ are the \sinkv{}
vertices of $D$, and are highlighted with an underbar.  The orderings $\sigma'$,
$\sigma$ and $\tau$ in the figure are valid topological orderings of $D$.
However, $\sigma'$ does not satisfy P1 of \cref{def:nice-ordering} (since
$L_2(\sigma')$ is not a clique), while $\sigma$ satisfies P1 of
\cref{def:nice-ordering}, but does not satisfy P2, because $c, d$ in
$L_2(\sigma)$ are consecutive in $\sigma$, but $b$ is a parent only of $c$ and
not of $d$.  Finally, $\tau$ satisfies both P1 and P2 and hence is a \gcb{}
ordering.

Our main technical result is that for any DAG $D$ that has no v-structures,
there exists a \gcb{} ordering $\sigma$ of $D$, and the new lower bound is an
easy corollary of this result.  Further, the proof of this result uses only
standard notions from the theory of chordal graphs.
\begin{theorem}\label{lemma:good-topological-ordering}
  If $D$ is a DAG without v-structures, then $D$ has a \gcb{} ordering.
\end{theorem}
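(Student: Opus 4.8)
The plan is to construct the \gcb{} ordering by exploiting the correspondence between topological orderings of $D$ and perfect elimination orderings (PEOs) of $\skel{D}$. Since $D$ has no v-structures, $\skel{D}$ is chordal, so PEOs exist, and any topological ordering of $D$ is a PEO of $\skel{D}$. The key structural fact I would use is that the \sinkv{} vertices are exactly the nodes $\sink[D]{C}$ for maximal cliques $C$, and by \Cref{lemma:sink-nodes} these are distinct; moreover each maximal clique $C$ equals $\pa[D]{\sink[D]{C}} \cup \inb{\sink[D]{C}}$, so the sink vertex of a clique comes last (in any topological order) among the vertices of that clique. I would aim to build $\sigma$ greedily, clique block by clique block, so that the blocks $L_1, L_2, \dots, L_r$ are forced to be the cliques (or sub-cliques) required by P1, and then argue P2 can be arranged simultaneously.

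First I would set up an induction or a greedy sweep over the \sinkv{} vertices. The natural order in which to process maximal cliques is given by a PEO of $\skel{D}$: eliminate vertices one at a time, and whenever an eliminated vertex is a sink vertex $\sink[D]{C}$, close off the current block. Concretely, I would try to produce $\sigma$ so that $L_i(\sigma)$ consists of $s_i$ together with those non-sink vertices that are "allocated" to the clique of $s_i$ and have not already appeared in an earlier block. Because a non-sink vertex $u$ lies in $\pa[D]{s_i} \subseteq C_i$ whenever it is placed in block $i$, the induced subgraph on $L_i(\sigma)$ is a subset of the clique $C_i$ and hence a clique, giving P1. The content here is showing that every vertex can be assigned to some block in a way consistent with a topological order — i.e., that when we reach $s_i$ in the elimination, all vertices we wish to place before it are genuinely available (their required parents already placed) and that no vertex is stranded. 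I expect this bookkeeping to follow from the PEO property: the earlier-eliminated neighbors of each vertex form a clique, which is what forces the block to be a clique.

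The harder part will be obtaining P2 (shared parents) simultaneously with P1, since the example $\sigma$ in \Cref{fig:gcb-ordering} shows P1 alone is not enough. P2 demands that when two vertices $a, b$ are consecutive within a block, $\pa[D]{a} \subseteq \pa[D]{b}$. Within a single block $L_i$, all vertices share the clique $C_i$, but their parent sets can still differ. My plan is to impose, within each block, a specific internal ordering of the non-sink vertices: order them by their parent sets so that $\pa[D]{a} \subseteq \pa[D]{b}$ whenever $a$ precedes $b$ and they are consecutive. The key claim to establish is that within a block the vertices are linearly orderable by the subset relation on their parent sets — i.e., that the family $\{\pa[D]{u} : u \in L_i\}$ forms a chain under inclusion, or at least can be arranged so consecutive elements are nested. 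I would prove this using the no-v-structure property: if $a, b$ are both in $C_i$ and adjacent, then any common neighbor forces comparability of their parent sets, because an incomparable pair would create a vertex pointing into one but not the other with the two nonadjacent — contradicting either chordality or the v-structure-free assumption. This nesting-of-parent-sets lemma is where I expect the real work to lie; once it is in hand, P1 and P2 can be met together by choosing the within-block order to respect inclusion of parent sets, and I would verify this internal order is still consistent with the global topological order.

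Finally I would assemble the global ordering by concatenating the blocks $L_1, L_2, \dots, L_r$ in order and check all the defining conditions: that $\sigma$ is a valid topological ordering of $D$ (no edge points backward), that the $s_i$ land at the correct block boundaries so that the $L_i(\sigma)$ of \cref{def:nice-ordering} coincide with my constructed blocks, that each $L_i$ is a clique (P1), and that consecutive same-block vertices have nested parent sets (P2). The cleanest packaging is probably an induction on $r$, the number of maximal cliques: remove the vertices of the last block, apply the inductive hypothesis to the smaller DAG, and prepend—but I would need to confirm that deleting a block's vertices leaves a DAG without v-structures whose maximal cliques are exactly the remaining $C_i$.
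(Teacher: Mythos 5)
There is a genuine gap, and it sits exactly where you say you expect "the real work to lie." Your plan is to fix the clique blocks first via a PEO-style greedy sweep (this half is fine, and matches the paper's \cref{lem:P1-ordering}), and then obtain P2 by reordering \emph{within} each block, justified by a claimed lemma that the parent sets of the vertices in a block form a chain under inclusion, provable from the absence of v-structures. Both halves of that claim fail. The no-v-structure condition forces two parents of a \emph{common child} to be adjacent; it says nothing about a configuration $x \dir c \dir d$ with $x$ not adjacent to $d$, which creates neither a v-structure nor a chordless cycle. The paper's own \Cref{fig:gcb-ordering} is a concrete counterexample to your plan: the ordering $\sigma$ there satisfies P1 with block $L_2(\sigma) = \inb{c, d, f}$, in which $\pa[D]{c} = \inb{b}$ and $\pa[D]{d} = \inb{c}$ are incomparable; moreover $\inb{c,d,f}$ is a directed triangle, so its only internal order consistent with the topological order is $c, d, f$, which violates P2. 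No within-block permutation can repair this. The \gcb{} ordering $\tau$ in that figure is obtained by moving $d$ out of block~$2$ and into block~$3$, i.e., by changing the block decomposition itself.

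So the missing idea is that P2 forces you to revise \emph{which vertices belong to which block}, not merely their order inside a block. The paper handles this with an extremal exchange argument over the set $\mathcal{O}$ of all P1-satisfying topological orderings: choose $\sigma \in \mathcal{O}$ maximizing the first position at which P2 fails, let $a$ be the offending vertex, and rebuild the tail of the ordering by placing immediately after $a$ the set $S^a$ of vertices $z$ with $\inb{a} \cup \pa[D]{a} \subseteq \pa[D]{z}$ (arranged by a P1 ordering of the induced DAG $D[S^a]$), followed by the remaining vertices in their old order; one then verifies the result is still in $\mathcal{O}$ and pushes the failure point strictly later, a contradiction. If you wanted to salvage a block-by-block construction, you would have to choose each block with P2 in mind from the start, essentially building it as a tower $a_1, a_2, \dots$ with $\inb{a_j} \cup \pa[D]{a_j} \subseteq \pa[D]{a_{j+1}}$, and then prove that such a choice always exists, covers every vertex, and remains consistent with a global topological order --- that is the actual content of the theorem, and your sketch does not supply it.
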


Towards the proof of this theorem, we note first that the existence of a
topological ordering $\sigma$ satisfying just P1 can be established using ideas
from the analysis of, e.g., the ``maximum cardinality search'' algorithm for
chordal graphs (\citet{tarjan_simple_1984}, see also Corollary 2 of
\cite{wienobst_polynomial-time_2020}).  We state this here as a lemma, and
provide the proof in Section~\ref{sec:supp:proof-p1-ordering}.
\begin{lemma}\label{lem:P1-ordering}
  If $D$ is a DAG without v-structures, then $D$ has a topological ordering
  $\sigma$ satisfying P1 of \cref{def:nice-ordering}.
\end{lemma}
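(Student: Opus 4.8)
The plan is to build the ordering greedily and to reduce the clique-block property P1 to a statement about where the \sinkv{} vertices fall. Since $D$ has no v-structures, two parents of any vertex must be adjacent (else they would form a v-structure), so for every $v$ the set $\pa[D]{v}$ is a clique of $\skel{D}$. Consequently, if $\sigma$ is any topological ordering of $D$ and $C$ is a maximal clique of $\skel{D}$, then $\sink[D]{C}$, which is the unique out-degree-$0$ vertex of $D[C]$ by \cref{lemma:sink-nodes}, is exactly the last vertex of $C$ to occur in $\sigma$. I would first record the following reformulation of P1: a block $L_i(\sigma)$ is a clique if and only if $L_i(\sigma) \subseteq \pa[D]{s_i} \cup \inb{s_i}$. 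Indeed, any $u \in L_i(\sigma)$ with $u \neq s_i$ occurs before $s_i$, so if $u$ is adjacent to $s_i$ then the edge is $u \rightarrow s_i$ and $u \in \pa[D]{s_i}$; conversely $\pa[D]{s_i}\cup\inb{s_i}$ is the maximal clique $C_i$ and hence a clique. Thus P1 is equivalent to asking that every vertex placed after $s_{i-1}$ and up to $s_i$ be a parent of $s_i$.

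Next I would construct $\sigma$ by an orientation-respecting maximum cardinality search. Building the ordering from the first position to the last, at each step I place, among the vertices all of whose parents have already been placed (the sources of the remaining DAG), one that maximizes the number of already-placed neighbours. For such a ``ready'' vertex the already-placed neighbours are precisely its parents, so this rule places a ready vertex of maximum in-degree, and by construction $\sigma$ is a topological ordering of $D$. The point of the no-v-structure hypothesis is that this constrained rule is in fact a genuine maximum cardinality search on the chordal graph $\skel{D}$: if some remaining vertex $x$ attains the global maximum count $M$ of already-placed neighbours but is not ready, pick an unplaced parent $p$ of $x$; since the parents of $x$ form a clique, $p$ is adjacent to all the placed parents of $x$, so $p$ too has at least $M$ placed neighbours, hence exactly $M$. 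Iterating up the ancestor chain of $x$ (which terminates at a source of the remaining DAG) reaches a ready vertex that still attains the maximum $M$. Hence at every step a ready vertex of globally maximum count is available, so $\sigma$ is an MCS ordering.

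Finally I would establish the clique-block structure by maintaining the invariant that, after each placement, the vertices placed since and including the most recent \sinkv{} vertex induce a clique of $\skel{D}$; equivalently, each newly placed vertex is adjacent to every vertex of the current partial block. Granting this invariant, each completed block $L_i(\sigma)$ is a clique, and since $s_i$ is the last vertex of its maximal clique $C_i = \pa[D]{s_i}\cup\inb{s_i}$ in $\sigma$, the reformulation above yields $L_i(\sigma) \subseteq C_i$, which is exactly P1 of \cref{def:nice-ordering}. The invariant is proved by induction on position using the maximum-count selection rule, and I expect the main obstacle to be this inductive step: ruling out that the MCS choice places a vertex missing some member of the current partial block. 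This is precisely the classical but delicate weight-monotonicity argument behind the fact that maximum cardinality search produces a perfect elimination ordering \citep{tarjan_simple_1984}, combined with the bookkeeping of Corollary~2 of \cite{wienobst_polynomial-time_2020}. The one subtlety requiring care is that the points at which a partial block closes off must be identified with exactly the \sinkv{} vertices, rather than with some other clique markers of the MCS run.
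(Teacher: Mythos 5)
Your construction is sound up to, but not including, its crux: the invariant that between consecutive \sinkv{} vertices the placed vertices form a clique is asserted, not proved, and the classical results you point to do not deliver it. The Tarjan--Yannakakis weight-monotonicity analysis yields only that an MCS ordering is a perfect elimination ordering, i.e., that each vertex's already-placed neighborhood is a clique. That is strictly weaker than P1: as noted in \Cref{sec:preliminaries}, \emph{every} topological ordering of a DAG without v-structures is already a PEO of $\skel{D}$, yet the ordering $\sigma'$ in \Cref{fig:gcb-ordering} fails P1. So ``MCS produces a PEO'' cannot by itself rule out that your search places a vertex missing some member of the current partial block, nor that a block closes at a non-\sinkv{} vertex --- which is exactly the step you flag as the ``main obstacle'' and leave open. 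As submitted, the proposal therefore has a genuine gap at the one place the lemma is hard.

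The good news is that the gap is closable inside your own framework, using structure you already set up but did not exploit. Since your $\sigma$ is topological and you only place ready vertices, the weight of a vertex at the moment of placement equals its total in-degree $\abs{\pa[D]{v}}$. Now suppose $v_i$ is placed and is \emph{not} a \sinkv{} vertex: then $C^{v_i} \defeq \inb{v_i} \cup \pa[D]{v_i}$ is a non-maximal clique, and any witness $y$ adjacent to all of $C^{v_i}$ must (by topology and the no-v-structure hypothesis) be an unplaced child of $v_i$ with all of $C^{v_i}$ among its parents, hence of weight at least $w_i + 1$; since placing $v_i$ raises each weight by at most one, the global maximum at step $i+1$ is exactly $w_i + 1$, so $v_{i+1}$ is a child of $v_i$, and because every parent of $v_{i+1}$ other than $v_i$ is a placed neighbor of $v_i$ and so a parent of $v_i$, a cardinality count forces $\pa[D]{v_{i+1}} = C^{v_i}$. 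Iterating this along a block shows each block is contained in $C^{s_i} = \pa[D]{s_i} \cup \inb{s_i}$, which via your (correct) reformulation gives P1; the last vertex of $\sigma$ is always a \sinkv{} vertex, so every vertex lies in a block. With that step supplied, your argument is a legitimate alternative to the paper's proof, which instead proceeds by induction on $n$: it peels off a single parent-closed maximal clique (\cref{supp:lemma:ordering-technical}, found by a greedy scan of the children of a source), recurses on the remainder, and handles the \sinkv{}-identification bookkeeping through $A$-clique block orderings (\cref{supp:def:set-clique-block,supp:obv:subset-property}). Your one-shot constrained MCS avoids that recursion and yields an explicit algorithm, at the price of the per-step analysis above, which you must actually carry out.
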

We now prove the theorem.
\begin{proof}[Proof of \cref{lemma:good-topological-ordering}]
  Let $\mathcal{O}$ be the set of topological orderings of $D$ which satisfy P1
  of \cref{def:nice-ordering}.  By \cref{lem:P1-ordering}, $\mathcal{O}$ is
  non-empty.  If there is a $\sigma \in \mathcal{O}$ which also satisfies P2 of
  \cref{def:nice-ordering}, then we are done.

  We now proceed to show by contradiction that such a $\sigma$ must indeed
  exist.  So, suppose for the sake of contradiction that for each $\sigma$ in
  $\mathcal{O}$, P2 is violated. Then, for each $\sigma \in \mathcal{O}$, there
  exist vertices $a, b$ and an index $i$ such that $a, b \in L_i(\sigma)$,
  $\sigma(b) = \sigma(a) + 1$, and there exists a parent of $a$ in $D$ that is
  not a parent of $b$.  For any given $\sigma \in \mathcal{O}$, we choose $a, b$
  as above so that $\sigma(a)$ is as small as possible.  With such a choice of
  $a$ for each $\sigma \in \mathcal{O}$, we then define a function
  $f: \mathcal{O} \rightarrow [n-1]$ by defining $f(\sigma) = \sigma(a)$.  Note
  that by the assumption that P2 is violated by each $\sigma$ in $\mathcal{O}$,
  $f$ is defined for each $\sigma$ in $\mathcal{O}$.  But then, since
  $\mathcal{O}$ is a finite set, there must be some $\sigma \in \mathcal{O}$ for
  which $f(\sigma)$ attains its maximum value.  We obtain a contradiction by
  exhibiting another $\tau \in \mathcal{O}$ for which $f(\tau)$ is
  \emph{strictly} larger than $f(\sigma)$.  We first describe the construction
  of $\tau$ from $\sigma$, and then prove that $\tau$ so constructed is in
  $\mathcal{O}$ and has $f(\tau) > f(\sigma)$.

  \textit{Construction.} Given $\sigma \in \mathcal{O}$, let
  $f(\sigma) = j \in [n-1]$. Let $\sigma(a) = j, \sigma(b) = j+1$ and suppose
  that $a, b \in L_i(\sigma)$.  By the definition of $f$, there is a parent of
  $a$ that is not a parent of $b$.  Define $C^a$ to be the set
  $\inb{a} \cup \pa[D]{a}$.  Since $D$ has no v-structures, $C^a$ is a clique in
  \skel{D}.  Define \(S^a \defeq \{z | C^a \subseteq \pa[D]{z}\}.\) Let $Y_a$ be
  the set of nodes occurring after $a$ in $\sigma$ that are not in $S^a$ (note
  that $b \in Y_a$, and, in general, $z \in Y_a$ if and only if
  $\sigma(z) > \sigma(a)$ and there is an $x \in C^a$ that is not adjacent to
  $z$).
  
  We now note the following easy to verify properties of the sets $S^a$ and
  $Y_a$ (the proof is provided in
  Section~\ref{sec:supp:proof-prop-intermediate}).
  \begin{proposition}\label{prop:intermediate}
    \begin{enumerate}
    \item \label{item:1} If $y \in Y_a$ and $z$ is
      a child of $y$ in $D$, then $z \in Y_a$.\item \label{item:2} Suppose that $x \in \inb{a} \cup S^a$ is not a \sinkv{}
      node of $D$.  Then there exists $y \in S^a$ such that $x \in \pa[D]{y}$
      and such that $y$ is a \sinkv{} node in $D$. In particular, $S^a$ is non-empty.
    \item \label{item:3} Suppose that $x$ is a \sinkv{} node in the induced DAG
      $H \defeq D[S^a]$.  Then $x$ is also a \sinkv{} node in $D$.
    \end{enumerate}
  \end{proposition}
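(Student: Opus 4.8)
The plan is to prove the three claims in order, using a single structural characterization: for a DAG without v-structures, a vertex $v$ is a \sinkv{} node of $D$ if and only if $\inb{v} \cup \pa[D]{v}$ is a \emph{maximal} clique of $\skel{D}$. Indeed, $\inb{v} \cup \pa[D]{v}$ is always a clique (no v-structures), and $v$ is its unique out-degree-$0$ vertex, so this set equals $\inb{\sink[D]{C}} \cup \pa[D]{\sink[D]{C}}$ for a maximal clique $C$ precisely when it is itself maximal. I will also repeatedly use that $C^a \subseteq \inb{x} \cup \pa[D]{x}$ for every $x \in \inb{a} \cup S^a$ (an equality when $x = a$, and valid because $C^a \subseteq \pa[D]{x}$ when $x \in S^a$), together with the characterization of $Y_a$ recorded in the text, namely that $z \in Y_a$ iff $\sigma(z) > \sigma(a)$ and some $c \in C^a$ is \emph{non-adjacent} to $z$. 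This last equivalence rests on $\sigma$ being topological: every $c \in C^a$ has $\sigma(c) \leq \sigma(a)$, so for $z$ with $\sigma(z) > \sigma(a)$ an edge between $c$ and $z$, if present, must be $c \rightarrow z$; hence ``$c \notin \pa[D]{z}$'' and ``$c$ non-adjacent to $z$'' coincide. I expect the crux to be the second claim, where one must identify the right \sinkv{} node to produce.

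For the first claim, let $y \in Y_a$ and let $z$ be a child of $y$. Topologicality gives $\sigma(z) > \sigma(y) > \sigma(a)$, so $z$ lies after $a$; it remains to show $z \notin S^a$. By the characterization of $Y_a$, fix $c \in C^a$ non-adjacent to $y$. If $z \in S^a$ then $C^a \subseteq \pa[D]{z}$, so $c \rightarrow z$; together with $y \rightarrow z$ and the non-adjacency of $c$ and $y$, this is a v-structure $c \rightarrow z \leftarrow y$, which $D$ does not contain. Hence $z \notin S^a$, and $z \in Y_a$ by definition.

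For the second claim, the idea is to pass from $x$ to a maximal clique and take its sink. Since $x$ is not a \sinkv{} node, $K \defeq \inb{x} \cup \pa[D]{x}$ is not maximal, so it extends to a maximal clique $M \supsetneq K$ of $\skel{D}$; set $y \defeq \sink[D]{M}$, so that $M = \inb{y} \cup \pa[D]{y}$ and $y$ has out-degree $0$ in $D[M]$, making $y$ a \sinkv{} node of $D$ by the fact above. I then check $y \notin K$: if $y = x$ then $x$ would have out-degree $0$ in $D[M]$, contradicting that each $w \in M \setminus K$ (a nonempty set) is a child of $x$; and $y \in \pa[D]{x}$ is impossible since $y$ has no out-edges in $D[M]$ while $x \in M$. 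As $x, y \in M$ are adjacent and $y$ is the sink, $x \rightarrow y$, i.e.\ $x \in \pa[D]{y}$. Finally $C^a \subseteq K \subseteq M$ and $y \notin K \supseteq C^a$, so $C^a \subseteq M \setminus \inb{y} = \pa[D]{y}$, giving $y \in S^a$; this $y$ is the required node, and in particular $S^a \neq \emptyset$.

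For the third claim, I argue by contradiction, transferring a witness from $\skel{D}$ into $H \defeq D[S^a]$. Suppose $x$ is a \sinkv{} node of $H$ but not of $D$. By the characterization, $\inb{x} \cup \pa[H]{x}$ is maximal in $\skel{H}$ while $\inb{x} \cup \pa[D]{x}$ is not maximal in $\skel{D}$, so some $w \notin \inb{x} \cup \pa[D]{x}$ is adjacent in $\skel{D}$ to all of $\inb{x} \cup \pa[D]{x}$, forcing $x \rightarrow w$. Since $x \in S^a$ we have $C^a \subseteq \pa[D]{x}$, so each $c \in C^a$ satisfies $c \rightarrow x$ and is adjacent to $w$; with $x \rightarrow w$, acyclicity excludes $w \rightarrow c$, whence $c \rightarrow w$ and therefore $C^a \subseteq \pa[D]{w}$, i.e.\ $w \in S^a = V(H)$. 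But then all of $\inb{x} \cup \pa[H]{x} \cup \inb{w}$ lies in $S^a$ and is pairwise adjacent in $\skel{D}$, hence in $\skel{H}$; since $w \neq x$ and $w \notin \pa[D]{x} \supseteq \pa[H]{x}$, this is a clique of $\skel{H}$ strictly larger than $\inb{x} \cup \pa[H]{x}$, contradicting maximality. Thus $x$ is a \sinkv{} node of $D$, and the only real care in these last two arguments is the bookkeeping that adjacencies of $\skel{D}$ among vertices of $S^a$ persist in $\skel{H}$.
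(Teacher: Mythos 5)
Your argument is correct in substance and follows essentially the same route as the paper's proof: item \ref{item:1} via the v-structure $c \rightarrow z \leftarrow y$, item \ref{item:2} by extending the clique $\inb{x} \cup \pa[D]{x}$ to a maximal clique and taking its sink, and item \ref{item:3} by showing that any witness to the non-maximality of $\inb{x}\cup\pa[D]{x}$ in $\skel{D}$ must itself lie in $S^a$. (The paper phrases item \ref{item:3} in the direct rather than contrapositive form, extending $\inp{\inb{x}\cup\pa[H]{x}} \cup C^a$ to a maximal clique of $\skel{D}$ and arguing its sink equals $x$, but the two arguments rest on the same observation and are interchangeable.) Your preliminary characterization --- $v$ is a \sinkv{} node of $D$ iff $\inb{v}\cup\pa[D]{v}$ is a maximal clique of $\skel{D}$ --- is correct and is also implicit in the paper's proof.

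The one real omission is in the final clause of item \ref{item:2}. You prove the conditional statement (``if such an $x$ exists, then such a $y$ exists''), but the assertion ``in particular, $S^a$ is non-empty'' is used \emph{unconditionally} in the proof of \cref{lemma:good-topological-ordering} (to conclude that $b' \defeq \tau(j+1)$ lies in $S^a$), and so requires exhibiting some vertex of $\inb{a} \cup S^a$ that is not a \sinkv{} node of $D$; the natural candidate is $x = a$. This is not automatic: one cannot argue that $C^a \cup \inb{b}$ is a strictly larger clique, because the premise of the whole construction is that some parent of $a$ is \emph{not} a parent of $b$ (hence not adjacent to $b$). The paper instead observes that $a$ and $b$ both lie in the block $L_i(\sigma)$ with $\sigma(a) < \sigma(b)$, and by the definition of the clique blocks the only \sinkv{} node in $L_i(\sigma)$ is its last vertex $s_i$; hence $a$ is not a \sinkv{} node. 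Adding this one observation closes the gap.
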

  
  The ordering $\tau$ is now defined as follows: the first $j$ nodes in $\tau$
  are the same as $\sigma$.  After this, the nodes of $S^a$ appear according to
  some topological ordering $\gamma$ of the induced DAG $H \defeq D[S^a]$ that
  satisfies P1 of \cref{def:nice-ordering} in $H$ (such a $\gamma$ exists
  because of \cref{lem:P1-ordering} applied to the induced DAG $H$, which also
  cannot have any v-structures). Finally, the nodes of $Y_a$ appear according to
  their ordering in $\sigma$.  

  \textit{Proof that $\tau \in \mathcal{O}$ and $f(\tau) > f(\sigma)$.}  Note
  first that $\tau$ is a topological ordering of $D$: if not, then there
  must exist $u \in Y_a$ and $v \in S^a$ such that the edge $u \rightarrow v$ is
  present in $D$, but this cannot happen by \cref{item:1} of
  \cref{prop:intermediate} above.

  To show that $\tau \in \mathcal{O}$ (i.e., that $\tau$ satisfies P1), the
  following notation will be useful.  For each \sinkv{} vertex $s$ in $D$,
  denote by $\lambda(s)$ the unique $L_\alpha(\sigma)$ such that
  $s \in L_\alpha(\sigma)$.  Similarly, denote by $\mu(s)$ the unique
  $L_\beta(\tau)$ such that $s \in L_\beta(\tau)$.  Since
  $\sigma \in \mathcal{O}$, we already know that $\lambda(s)$ is a clique for
  each \sinkv{} node $s$ of $D$.  In order to show that $\tau \in \mathcal{O}$,
  all we need to show is that $\mu(s)$ is also a clique for each \sinkv{} node
  $s$ of $D$.

  Let $J$ be the set of \sinkv{} vertices of $D$ present in $S^a$. By
  \cref{item:2} of \cref{prop:intermediate}, the last vertex in $\gamma$ must be
  an element of $J$.  Note also that $s_i \not\in J$ since $b \in Y_a$ and the
  edge $b \rightarrow s_i$ in $D$ together imply that $s_i \in Y_a$ by
  \cref{item:1} of \cref{prop:intermediate}.  We also observe that
  $\lambda(s) \subseteq S^a$, for all $s \in J$.  For if there exists
  $u \in \lambda(s) \cap Y_a$ then the edge $u \rightarrow s$ in $D$ implies
  that $s \in Y_a$, contradicting that $s \in S^a$.

  From the construction of $\tau$, we already have $\mu(s_j) = \lambda(s_j)$ for
  all \sinkv{} vertices $s_j$ that precede $s_i$ in $\sigma$.  From the fact
  that all vertices in $S^a$ precede $Y_a$ in the ordering $\tau$, and from the
  observations above that (i) the sink node $s_i \in Y_a$, and (ii)
  $\lambda(s) \subseteq S^a$, for all sink nodes $s \in S^a$, we also get that
  for any \sinkv{} node $s$ of $D$ such that $s \in Y_a$,
  $\mu(s) \subseteq \lambda(s)$.  Thus, when $s$ is a \sinkv{} node of $D$ that
  is not in $J \subset S^a$, we have that $\mu(s)$ is a clique in \skel{D},
  since $\mu(s) \subseteq \lambda(s)$, and $\lambda(s)$ is a clique in \skel{D}.
  It remains to show that $\mu(s)$ is a clique when $s \in J$.

  Let $t_1, t_2, \dots, t_k$ be the \sinkv{} nodes of $H = D[S^a]$, arranged in
  increasing order by $\gamma$.  Since $\gamma$ satisfies P1 in $H$, each
  $L_i(\gamma)$, $1 \leq i \leq k$, is a clique in $H$ (and thus also in $D$).
  Now consider a \sinkv{} node $s \in J \subseteq S^a$. Since the $t_i$ are
  \sinkv{} nodes of $D$ (from \cref{item:3} of \cref{prop:intermediate}), it
  follows that $\mu(s) \subseteq L_i(\gamma)$ (if $s \in L_i(\gamma)$ for
  $i \geq 2$) or $\mu(s) \subseteq L_1(\gamma) \cup C^a$ (if
  $s \in L_1(\gamma)$).  In the former case, $\mu(s)$ is automatically a clique,
  since $L_i(\gamma)$ is a clique in $D$.  In the latter case also $\mu(s)$ is a
  clique since $L_1(\gamma) \subseteq S^a$, so that $L_1(\gamma) \cup C^a$ is a
  clique in $D$ (since (i) $L_1(\gamma)$ and $C^a$ are cliques, and (ii) by
  definition of $S^a$, every node of $S^a$ is adjacent to every node in $C^a$).

  Thus, we get that $\tau$ also satisfies P1, so that $\tau \in \mathcal{O}$.
  Consider the node $b' \defeq \tau(j + 1)$ next to $a$ in $\tau$. Since $S^a$
  is non-empty, the construction of $\tau$ implies $b' \in S^a$, so that $b'$ is
  adjacent to all parents of $a$.  Since $\sigma$ and $\tau$ agree on the
  ordering of all vertices up to $a$, we thus have
  $f(\tau) \geq \tau(b') = j + 1 > j = f(\sigma)$.  This gives the desired
  contradiction to $\sigma$ being chosen as a maximum of $f$. Thus, there must
  exist some ordering in $\mathcal{O}$ which satisfies P2.
\end{proof}

At the heart of our lower bound proof is the following important property of
CBSP orderings.  Our lower bound for atomic interventions on DAGs without
v-structures, \cref{theorem:main-theorem}, immediately follows once we isolate
this property.

\begin{lemma}
  \label{lem:technical-lower-bound}
  Let $D$ be a DAG without v-structures, and let $\sigma$ be a CBSP ordering of
  nodes in $D$ (at least one such $\sigma$ exists by
  \Cref{lemma:good-topological-ordering}).  Let $a$ and $b$ be any nodes in $D$
  that are consecutive in $\sigma$ (i.e., $\sigma(b) = \sigma(a) + 1$) which lie
  in the same clique block $L_i(\sigma)$ of $\sigma$ (in particular the edge
  $a \rightarrow b$ is present in $D$).  Let $\mathcal{I}$ be an intervention
  set containing the empty intervention.  Then, the edge $a \rightarrow b$ is
  directed in $\mathcal{E}_{\mathcal{I}}(D)$ if and only if there is an intervention
  $I \in \mathcal{I}$ such that $\abs{I \cap \inb{a, b}} = 1$.
\end{lemma}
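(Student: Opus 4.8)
The plan is to prove the two directions of the ``if and only if'' separately. The ``if'' direction is immediate from \cref{thm:hb-i-essential}: if there is an intervention $I \in \mathcal{I}$ with $\abs{I \cap \inb{a,b}} = 1$, then item~\ref{item:directed-by-intervention} of that theorem directly guarantees that $a \rightarrow b$ is oriented in $\mathcal{E}_{\mathcal{I}}(D)$. So the entire content of the lemma lies in the ``only if'' direction, which I would prove by contrapositive: assuming no intervention in $\mathcal{I}$ separates $a$ and $b$ (i.e.\ $\abs{I \cap \inb{a,b}} \in \inb{0,2}$ for every $I \in \mathcal{I}$), I will exhibit a DAG $D'$ that is $\mathcal{I}$-Markov equivalent to $D$ but in which the edge between $a$ and $b$ is oriented as $b \rightarrow a$. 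Since $D$ and $D'$ then disagree on the orientation of this edge while both lying in $\mathcal{E}_{\mathcal{I}}(D)$, the edge cannot be directed in the essential graph.

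**The construction of $D'$** is where the \gcb{} structure does the work. The idea is to swap the positions of $a$ and $b$ in the ordering $\sigma$, producing a new ordering $\sigma'$ in which $b$ immediately precedes $a$, and to let $D'$ be the DAG obtained by orienting $\skel{D}$ according to $\sigma'$. Concretely, $\sigma'$ agrees with $\sigma$ everywhere except that the entries at positions $\sigma(a)$ and $\sigma(a)+1$ are interchanged. The crucial point is that because $a$ and $b$ lie in the same clique block $L_i(\sigma)$ and are consecutive, property P2 of the \gcb{} ordering forces $\pa[D]{a} \subseteq \pa[D]{b}$; combined with the fact that $a \rightarrow b$ is the edge being reversed, one checks that the only parent of $b$ that $a$ has is $a$ itself, so $\pa[D]{b} = \pa[D]{a} \cup \inb{a}$. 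This shared-parent structure is exactly what makes $\sigma'$ a valid topological ordering of a DAG on the same skeleton: swapping $a$ and $b$ keeps every parent of $a$ (equivalently of $b$) before both of them, and every common child after both of them, so no other edge needs to change direction. Thus $D'$ has the same skeleton as $D$ and differs from $D$ only in reversing $a \undir b$ to $b \rightarrow a$.

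**The key verification** is then that $D'$ is genuinely $\mathcal{I}$-Markov equivalent to $D$, for which I would invoke the characterization of \cite{HB12} recalled in the related-work discussion: two DAGs are $\mathcal{I}$-Markov equivalent iff for every $S \in \mathcal{I}$ the graphs obtained by deleting the incoming edges of the vertices of $S$ lie in the same (ordinary) MEC. For the empty intervention this reduces to checking that $D$ and $D'$ have the same skeleton and the same v-structures. Same skeleton is clear; and since $D$ has no v-structures, I must show $D'$ has none either, which follows because $D'$ is obtained by orienting a chordal skeleton according to the topological ordering $\sigma'$ (any topological orientation of a chordal graph along a perfect elimination ordering is v-structure-free, as noted in the preliminaries). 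For a nonempty $S \in \mathcal{I}$, since $\abs{S \cap \inb{a,b}} \neq 1$, deleting incoming edges of $S$ affects the edge $a \undir b$ symmetrically --- either both $a,b \in S$ (edge removed in both $D$ and $D'$) or neither is in $S$ (edge retained in both, with its orientation being the \emph{only} difference, which again disappears in the MEC comparison because neither graph acquires a new v-structure).

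**The main obstacle** I anticipate is the bookkeeping in showing that reversing just this one edge introduces no new v-structures and preserves all old ones, in both $D$ and in each intervened graph $D_S$. The danger is a v-structure of the form $x \rightarrow a \leftarrow b$ or $a \rightarrow b \leftarrow y$ being created by the reversal. Ruling these out is precisely where $\pa[D]{b} = \pa[D]{a} \cup \inb{a}$ is essential: any node adjacent to both $a$ and $b$ is a common parent or common child and hence connected to the third vertex, so no \emph{unshielded} collider can form. I would organize this as a short case analysis over the neighbors of $a$ and $b$, leaning on P2 and on the absence of v-structures in $D$ to ensure every relevant triple is a triangle in $\skel{D}$.
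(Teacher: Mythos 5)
Your proof is correct, but it takes a genuinely different route from the paper's. The paper argues directly from the strong-protection characterization: assuming $a \rightarrow b$ is directed in $\mathcal{E}_{\mathcal{I}}(D)$ while no intervention separates $a$ and $b$, item~\ref{item:strong-i-protection} of \cref{thm:hb-i-essential} forces one of the four configurations of \Cref{fig:strong-protection} to appear as an induced subgraph, and each is ruled out in one line --- (ii) and (iv) because $\mathcal{E}_{\mathcal{I}}(D)$ has no v-structures, (iii) because $a$ and $b$ are consecutive in $\sigma$, and (i) because P2 makes every parent of $a$ a parent of $b$. You instead exhibit an explicit witness: the DAG $D'$ obtained by reversing $a \rightarrow b$, shown to be $\mathcal{I}$-Markov equivalent to $D$ via the semantic characterization (Theorem 10 of \cite{HB12}). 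The heart of your argument is that P2, consecutiveness, and the absence of v-structures together give $\pa[D]{b} = \pa[D]{a} \cup \inb{a}$, i.e., that $a \rightarrow b$ is a \emph{covered} edge in Chickering's sense, and that a covered edge whose endpoints are never separated by any $I \in \mathcal{I}$ can be reversed without changing the skeleton or v-structures of any intervened graph $D^{(S)}$ (the case analysis goes through: for $S$ containing both endpoints the two intervened graphs coincide, and for $S$ containing neither, any putative new unshielded collider at $a$ or $b$ is shielded by a surviving common-parent edge). What the paper's route buys is brevity, since \cref{thm:hb-i-essential} is already the workhorse of the whole section; what your route buys is that it needs only the definition of $\mathcal{I}$-Markov equivalence rather than the full essential-graph characterization, and it is more constructive --- it names the second DAG in the class that keeps the edge undirected, which arguably better explains \emph{why} the lower bound holds. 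The only blemishes are expository: the phrase ``the only parent of $b$ that $a$ has is $a$ itself'' should read that the only parent of $b$ outside $\pa[D]{a}$ is $a$ itself, and the deferred ``bookkeeping'' in your final paragraph is genuinely needed (in particular the check that no edge incident to $a$ or $b$ other than $a \undir b$ changes orientation under the swap, and that shielding edges are not deleted in $D^{(S)}$ because $a, b \notin S$), but all of it does go through.
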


\begin{proof}
  In case there exists an $I \in {\mathcal{I}}$ such that
  $\abs{I \cap \inb{a, b}} = 1$, the edge $a \rightarrow b$ is directed in
  $\mathcal{E}_{\mathcal{I}}(D)$ by \cref{item:directed-by-intervention} of
  \Cref{thm:hb-i-essential}.

  We now show that if $\abs{I \cap \inb{a, b}} \neq 1$ for every
  $I \in \mathcal{I}$, then the edge $a \rightarrow b$ is not directed in
  $\mathcal{E}_{\mathcal{I}}(D)$.  Suppose, for the sake of contradiction, that
  $a \rightarrow b$ is directed in $\mathcal{E}_{\mathcal{I}}(D)$.  Then, by
  \cref{item:strong-i-protection} of \cref{thm:hb-i-essential},
  $a \rightarrow b$ must be strongly $\mathcal{I}$-protected in
  $\mathcal{E}_I(D)$.  Since $\abs{I \cap \inb{a, b}} \neq 1$ for every
  $I \in \mathcal{I}$, one of the graphs in \Cref{fig:strong-protection} must
  therefore appear as an induced subgraph of $\mathcal{E}_{\mathcal{I}}(D)$.  We
  now show that none of these subgraphs can appear as an induced subgraph of
  $\mathcal{E}_{\mathcal{I}}(D)$.

  First, subgraphs (ii) and (iv) cannot be induced subgraphs of
  $\mathcal{E}_{\mathcal{I}}(D)$ since they have a v-structure at $b$ while $D$
  (and therefore also $\mathcal{E}_{\mathcal{I}}(D)$) has no v-structures.  For
  subgraph (iii) to appear as an induced subgraph, the vertex $c$ must lie
  between $a$ and $b$ in any topological ordering of $D$, which contradicts the
  fact that $a$ and $b$ are consecutive in the topological ordering $\sigma$.
  For subgraph (i) to appear, we must have a parent $c$ of $a$ that is not
  adjacent to $b$. However, since $\sigma$ is a \gcb{} ordering, it satisfies
  property P2 of \cref{def:nice-ordering}, so that, since $a, b$ are consecutive
  in $\sigma$ and belong to the same $L_i(\sigma)$, any parent of $a$ must also
  be a parent of $b$.  We thus conclude that $a \rightarrow b$ cannot be
  strongly ${\mathcal{I}}$-protected in $\mathcal{E}_{\mathcal{I}}(D)$, and
  hence is not directed in it.
\end{proof}

\begin{theorem}
\label{theorem:main-theorem}
  Let $D$ be a DAG without v-structures with $n$ nodes. Then, any set $I$ of
  atomic interventions that fully orients $\skel{D}$ when the ground-truth DAG
  is $D$ must be of size at least $\ceil{\frac{n-r}{2}}$ ($\skel{D}$ is also the
  MEC of $D$, since $D$ has no v-structures).  Here $r$ is the number
  of distinct maximal cliques in $\skel{D}$.  In other words, if $I$ is a set
  of atomic interventions such that $\mathcal{E}_I(D) = D$, then
  $\abs{I} \geq \ceil{\frac{n-r}{2}}$.
\end{theorem}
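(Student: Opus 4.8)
The plan is to combine the existence of a \gcb{} ordering (\cref{lemma:good-topological-ordering}) with the orientation criterion of \cref{lem:technical-lower-bound} to reduce the lower bound to a clean vertex-cover argument on paths. First I would fix a \gcb{} ordering $\sigma$ of $D$, which exists by \cref{lemma:good-topological-ordering} since $D$ has no v-structures. This ordering partitions the $n$ vertices of $D$ into the $r$ clique blocks $L_1(\sigma), \dots, L_r(\sigma)$ (there are exactly $r$ of them, one per \sinkv{} vertex $s_i$, and $r$ equals the number of maximal cliques of $\skel{D}$). Writing $\ell_i \defeq \abs{L_i(\sigma)}$, the partition gives $\sum_{i=1}^{r} \ell_i = n$.

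The key reduction uses that the hypothesis $\mathcal{E}_I(D) = D$ forces \emph{every} edge of $\skel{D}$ to be directed in $\mathcal{E}_I(D)$. In particular, for any pair $a, b$ that is consecutive in $\sigma$ (i.e.\ $\sigma(b) = \sigma(a)+1$) and lies in the same block $L_i(\sigma)$, the edge between them is present and directed as $a \rightarrow b$ (present because $L_i(\sigma)$ is a clique by P1, and directed as $a \to b$ because $\sigma$ is a topological ordering), and this edge must be directed in $\mathcal{E}_I(D)$. By \cref{lem:technical-lower-bound}, this can happen only if there is some $I' \in \mathcal{I}$ with $\abs{I' \cap \inb{a,b}} = 1$; since the interventions are atomic, this means at least one of $a$ and $b$ lies in $I$ (viewed as its set of target vertices). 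Hence, listing the vertices of a single block $L_i(\sigma)$ in the order given by $\sigma$ as a path $u_1 - u_2 - \cdots - u_{\ell_i}$, the targets of $I$ inside $L_i(\sigma)$ must form a vertex cover of this path.

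With this reduction in hand, the remaining step is a counting argument. The minimum vertex cover of a path on $\ell_i$ vertices has size $\lfloor \ell_i/2 \rfloor$, so $I$ must contain at least $\lfloor \ell_i/2 \rfloor$ targets inside $L_i(\sigma)$. Because the blocks $L_1(\sigma), \dots, L_r(\sigma)$ are pairwise disjoint, these per-block contributions do not overlap, and summing over $i$ yields
\[
  \abs{I} \geq \sum_{i=1}^{r} \left\lfloor \frac{\ell_i}{2} \right\rfloor
  \geq \sum_{i=1}^{r} \frac{\ell_i - 1}{2} = \frac{n-r}{2}.
\]
Since $\abs{I}$ is an integer, this gives $\abs{I} \geq \ceil{\frac{n-r}{2}}$, as claimed.

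I expect the genuinely substantive content to already reside in \cref{lemma:good-topological-ordering} and \cref{lem:technical-lower-bound}: the former supplies a topological ordering whose clique blocks behave well, and the latter pins down \emph{exactly} when a within-block consecutive edge can be oriented. The main conceptual obstacle here, namely excluding the possibility that strong $\mathcal{I}$-protection orients such an edge \emph{without} an incident intervention, is precisely what property P2 of the \gcb{} ordering rules out inside \cref{lem:technical-lower-bound}. Given these tools, the only remaining care is boundary bookkeeping: verifying that the $L_i(\sigma)$ genuinely partition $V$ (so the per-block target sets are disjoint and can be summed) and that the floor-to-ceiling passage via $\lfloor \ell_i/2 \rfloor \geq (\ell_i-1)/2$ is handled correctly.
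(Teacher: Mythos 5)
Your proposal is correct and follows essentially the same route as the paper: fix a \gcb{} ordering, apply \cref{lem:technical-lower-bound} to each consecutive within-block pair, and count at least one intervened vertex per pair in each block before summing. Your vertex-cover-of-a-path phrasing with $\lfloor \ell_i/2\rfloor$ is just a restatement of the paper's $\lceil(\abs{L_i(\sigma)}-1)/2\rceil$ count (the two quantities are equal), so there is no substantive difference.
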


\begin{proof} \cref{lem:technical-lower-bound} implies that any set $I$ of
  atomic interventions that fully orients $D$ starting with $\skel{D}$ (i.e.,
  for which $\mathcal{E}_I(D) = D$) must contain at least one node of each pair
  of consecutive nodes (in $\sigma$) of $L_i(\sigma)$, for each $i \in
  [r]$. Thus, for each $i \in [r]$, $I$ must contain at least
  $\ceil{(\abs{L_i(\sigma)} - 1)/2}$ nodes of $L_i(\sigma)$. We therefore have,
  \begin{align} \abs{I} &\geq \sum_{i = 1}^r \ceil{\frac{|L_i(\sigma)| - 1}{2}}
              \geq \ceil{\sum_{i = 1}^r \frac{|L_i(\sigma)| - 1}{2}}\nonumber\\
            &= \ceil{\frac{\sum_{i = 1}^r |L_i(\sigma)|}{2} - \frac{r}{2}}
              = \ceil{\frac{n - r}{2}}.\qedhere\nonumber
  \end{align}
\end{proof}
The following corollary for general DAGs (those that may have v-structures)
follows from the previous result about DAGs without v-structures in a manner
identical to previous work~\citep{squires2020}, using the fact that it is
necessary and sufficient to separately orient each chordal chain component of an
MEC in order to fully orient an MEC~\citep[Lemma 1]{hauser_two_2014}.  We defer
the standard proof to
Section~\ref{sec:supp:proof:main-arbitrary-DAGS}.
\begin{theorem}
  \label{theorem:main-theorem-arbitrary-DAGs}
  Let $D$ be an arbitrary DAG and let $\mathcal{E}(D)$ be the chain graph with
  chordal chain components representing the MEC of $D$. Let $CC$
  denote the set of chain components of $\mathcal{E}(D)$, and $r(S)$ the
  number of maximal cliques in the chain component $S \in CC$.  Then, any set of
  atomic interventions which fully orients $\mathcal{E}(D)$ must be of size at
  least
  \begin{displaymath}
\sum_{S \in CC}\ceil{\frac{\abs{S} - r(S)}{2}} \geq \ceil{\frac{n - r}{2}},
\end{displaymath}
  where $n$ is the number of nodes in $D$, and $r$ is the total number of maximal
  cliques in the chordal chain components of $\mathcal{E}(D)$ (including chain
  components consisting of singleton vertices).
\end{theorem}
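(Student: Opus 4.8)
The plan is to reduce the statement for arbitrary DAGs to the bound already established for DAGs without v-structures (\cref{theorem:main-theorem}), by decomposing the orientation task across the chain components of $\mathcal{E}(D)$. First I would record the decomposition principle we are permitted to borrow: by \citep[Lemma 1]{hauser_two_2014}, fully orienting $\mathcal{E}(D)$ amounts to separately and independently orienting each chordal chain component $S \in CC$, and in particular an atomic intervention on a node $v$ can only contribute to orienting the (unique) chain component containing $v$, and cannot help orient any edge lying in a different chain component. Consequently, if an atomic intervention set $\mathcal{I}$ fully orients $\mathcal{E}(D)$ when the ground-truth DAG is $D$, then for every $S \in CC$ the sub-collection $\mathcal{I}_S \defeq \inb{I \in \mathcal{I} : I \subseteq S}$ (together with the empty intervention) must by itself fully orient $S$.

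Next I would cast each chain component as an instance of \cref{theorem:main-theorem}. Fix $S \in CC$. Since $\mathcal{E}(D)$ is a chain graph with chordal chain components (item~\ref{item:chain-chordal} of \cref{thm:hb-i-essential}), $S$ is a chordal undirected graph with $r(S)$ maximal cliques. When the ground-truth DAG is $D$, the induced subgraph $D[S]$ is a DAG with $\skel{D[S]} = S$, and it has no v-structures: any v-structure $b \dir a \rdir c$ in $D[S]$ would be a v-structure of $D$ and would therefore already appear directed in $\mathcal{E}(D)$ (\cref{thm:hb-i-essential}), contradicting the fact that all edges of the chain component $S$ are undirected in $\mathcal{E}(D)$. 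Hence $S = \skel{D[S]}$ is exactly the MEC of the v-structure-free DAG $D[S]$, and orienting $S$ into $D[S]$ using $\mathcal{I}_S$ is precisely the setting covered by \cref{theorem:main-theorem}, applied on the $\abs{S}$ vertices of $S$. This yields $\abs{\mathcal{I}_S} \geq \ceil{(\abs{S} - r(S))/2}$.

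Finally I would sum over components and simplify. Because distinct chain components have disjoint vertex sets, the atomic interventions counted in the different $\mathcal{I}_S$ are pairwise distinct, so $\abs{\mathcal{I}} \geq \sum_{S \in CC} \abs{\mathcal{I}_S} \geq \sum_{S \in CC}\ceil{(\abs{S} - r(S))/2}$, which is the first (stronger) bound. The second inequality then follows from superadditivity of the ceiling, $\sum_i \ceil{x_i} \geq \ceil{\sum_i x_i}$, combined with the identities $\sum_{S \in CC} \abs{S} = n$ and $\sum_{S \in CC} r(S) = r$ (every vertex lies in exactly one chain component, and each singleton component contributes exactly one maximal clique); together these give $\sum_{S \in CC} \ceil{(\abs{S} - r(S))/2} \geq \ceil{(n - r)/2}$.

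I expect the only genuine subtlety to be the decomposition principle of the first step, namely arguing that interventions targeting nodes outside a chain component $S$ cannot orient any edge inside $S$, so that the lower bound truly decouples across components. This is exactly the content imported from \citep[Lemma 1]{hauser_two_2014}; once it is in hand, the remainder is a direct application of \cref{theorem:main-theorem} together with the observation that $D[S]$ inherits the v-structure-free property, plus elementary ceiling arithmetic.
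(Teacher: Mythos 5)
Your proposal is correct and follows essentially the same route as the paper's proof: decompose via the chain components of $\mathcal{E}(D)$ using the Hauser--B\"uhlmann separation fact (formalized in the paper as \cref{corollary:supp:HB14Lemma1}), apply \cref{theorem:main-theorem} to each v-structure-free induced DAG $D[S]$, and sum using superadditivity of the ceiling. No gaps.
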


\subsection{Tightness of Universal Lower Bound}
We now show that our universal lower bound is tight up to a factor of $2$: for
any DAG $D$, there is a set of atomic interventions of size at most twice the
lower bound that fully orients the MEC of $D$. In fact, as the proof of the
theorem below shows, when $D$ has no v-structures, this intervention set can be
taken to be the set of nodes of $D$ that are \emph{not} \sinkv{} nodes of $D$.

\begin{theorem}\label{theorem:correctness-alg-sink-nodes}
Let $D$ be a DAG without v-structures with $n$ nodes, and let $r$ be the
  number of distinct maximal cliques in $\skel{D}$.  Then, there exists a set
  $I$ of atomic interventions of size at most $n - r$ such that $I$ fully
  orients \skel{D} (i.e., $\mathcal{E}_I(D) = D$).
\end{theorem}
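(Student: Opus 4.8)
The plan is to take $I$ to be precisely the set of vertices of $D$ that are \emph{not} \sinkv{} vertices, and to show both that this set has the claimed size and that it fully orients $\skel{D}$. The size is immediate from \Cref{lemma:sink-nodes}: the map $C \mapsto \sink[D]{C}$ is an injection from the maximal cliques of $\skel{D}$ onto the \sinkv{} vertices, so there are exactly $r$ \sinkv{} vertices and hence exactly $n - r$ non-\sinkv{} vertices. Thus $\abs{I} = n - r$, which meets the required bound.

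For correctness I would argue via \Cref{thm:hb-i-essential} that the fully directed DAG $D$ is itself the $I$-essential graph. Viewing $D$ as a candidate graph $H$, it has the same skeleton and directed edges as $D$, has no v-structures, and (having no undirected edges) trivially satisfies conditions \ref{item:chain-chordal}, \ref{item:directed-by-Meek-rule-1}, and \ref{item:directed-by-intervention}. By the ``if and only if'' characterization, it therefore suffices to verify condition \ref{item:strong-i-protection}: that every directed edge $a \rightarrow b$ of $D$ is strongly $I$-protected in $D$; this will give $\mathcal{E}_I(D) = D$.

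I would then split into cases according to the tail $a$ of the edge. If $a$ is a non-\sinkv{} vertex then $\inb{a} \in \mathcal{I}$ and $\abs{\inb{a} \cap \inb{a, b}} = 1$, so case (a) of strong $I$-protection applies at once. The substantive case is when $a = \sink[D]{C}$ is a \sinkv{} vertex, and here the key claim is that $a$ must have a parent $c$ that is \emph{not} adjacent to $b$; the induced subgraph on $\inb{a, b, c}$ is then exactly $c \rightarrow a \rightarrow b$ with $c$ and $b$ non-adjacent, i.e. configuration (i) of \Cref{fig:strong-protection}, which protects $a \rightarrow b$. To prove the claim, observe that $C = \pa[D]{a} \cup \inb{a}$ is a \emph{maximal} clique with $b \notin C$ (as $b$ is a child of $a$) although $b$ is adjacent to $a$; were every parent of $a$ also adjacent to $b$, then all of $C$ would be adjacent to $b$ and $C \cup \inb{b}$ would be a strictly larger clique, contradicting maximality of $C$.

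The clique-maximality step is the only real obstacle; everything else is bookkeeping. A pleasant feature of this route is that configuration (i) alone handles every \sinkv{}-tailed edge, so I expect to avoid any reasoning about iterated Meek-rule propagation. The one place to be careful is to confirm that configuration (i) protects the edge with tail $a$ (and not some other edge of the triangle), and that the chosen parent $c$ of $a$ non-adjacent to $b$ leaves no extra adjacency among $\inb{a, b, c}$ that would alter the induced subgraph — both of which follow directly from how $c$ is selected.
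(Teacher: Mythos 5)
Your proposal is correct, and it reaches the same intervention set as the paper --- $I$ is exactly the set of non-\sinkv{} vertices, with $\abs{I} = n-r$ by \Cref{lemma:sink-nodes} --- but it verifies correctness by a genuinely different route. The paper argues from the \emph{necessity} direction of \Cref{thm:hb-i-essential}: it assumes some edge $s_i \rightarrow s_j$ between two \sinkv{} vertices remains undirected in $\mathcal{E}_I(D)$, picks a minimal counterexample along a topological order (needed to guarantee that the witness edge $x \rightarrow s_i$ is \emph{already directed} in $\mathcal{E}_I(D)$), and derives a forbidden induced subgraph $x \rightarrow s_i - s_j$, contradicting \cref{item:directed-by-Meek-rule-1}. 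You instead use the \emph{sufficiency} direction: you check that the fully directed graph $D$ itself satisfies all four conditions of the characterization with respect to $I$, the only nontrivial one being strong $I$-protection, which you supply via case (a) when the tail is intervened and via configuration (i) of \Cref{fig:strong-protection} when the tail is a \sinkv{} vertex. The combinatorial core is identical in both proofs --- a \sinkv{} vertex $a$ with maximal clique $C = \pa[D]{a} \cup \inb{a}$ and child $b \notin C$ must have a parent non-adjacent to $b$, else $C \cup \inb{b}$ would violate maximality --- and your argument for it is sound (including the degenerate case $\pa[D]{a} = \emptyset$, which your contradiction handles vacuously). What your route buys is the elimination of the minimal-counterexample induction, since in the candidate graph $H = D$ every edge of the protecting configuration is directed by fiat; what it costs is reliance on the converse (``if'') direction of \Cref{thm:hb-i-essential} together with uniqueness of the graph satisfying the characterization (\cref{supp:rem:hauser-buhlman}), whereas the paper's proof needs only the ``only if'' direction. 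Both are fully supported by results stated in the paper, so your proof stands.
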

\begin{proof}
  Fix any topological ordering $\sigma$ of $D$. Let the maximal cliques of $D$
  be $C_1, \dots, C_r$, and let $s_i \defeq \sink[D]{C_i}$, for $i \in
  [r]$. \Cref{lemma:sink-nodes} implies that each node of
  $S = \{s_1, \dots, s_r\}$ is distinct.  We re-index these nodes according to
  the ordering $\sigma$, i.e.  $\sigma(s_i) < \sigma(s_j)$ when $i < j$.
  Consider the set $I \defeq V \setminus S$ of atomic interventions (note that
  $\abs{I} = n - r$).  We show that $\mathcal{E}_I(D) = D$.  Note that every
  edge of $D$, except those which have both end-points in $S$, has a single
  end-point in one of the interventions in $I$, and hence is directed in
  $\mathcal{E}_I(D)$ (by \cref{item:directed-by-intervention} of
  \Cref{thm:hb-i-essential}).  We show now that all edges with both end-points
  in $S$ are also oriented in $\mathcal{E}_I(D)$.

  Suppose, if possible, that there exist $s_i, s_j \in S$, with $i < j$ such
  that $s_i$ and $s_j$ are adjacent in \skel{D}, so that the edge
  $s_i \rightarrow s_j$ is present in $D$, but for which $s_i - s_j$ is not
  directed in $\mathcal{E}_I(D)$.  We derive a contradiction to this supposition.
  To start, choose $s_i, s_j$ as above with the smallest possible value of
  $i$.  In particular, this choice implies that every edge of the form
  $u \rightarrow s_i$ in $D$ is directed in $\mathcal{E}_I(D)$.

  Note that, by \cref{lemma:sink-nodes}, $C_i = \inb{s_i} \cup \pa[D]{s_i}$ and
  $C_j = \inb{s_j} \cup \pa[D]{s_j}$ are distinct maximal cliques in $\skel{D}$.
Thus,
  there must exist an $x \in C_i$ that is not a parent of $s_j$ in $D$.
  Further, since $\sigma(s_i) < \sigma(s_j)$, all vertices of $C_i$ appear
  before $s_j$ in $\sigma$.  Thus, $x \in C_i$ that is not a parent of $s_j$ in
  $D$ is also not adjacent to $s_j$ in \skel{D}.  Further, by the choice of $i$,
  the edge $x \rightarrow s_i$ is directed in $\mathcal{E}_I(D)$.  Thus, we have
  the induced subgraph $x \rightarrow s_i - s_j$ in $\mathcal{E}_I(D)$.
  However, according to \cref{item:directed-by-Meek-rule-1} of
  \cref{thm:hb-i-essential}, such a graph cannot appear as an induced subgraph
  of an $I$-essential graph $\mathcal{E}_I(D)$, and we have therefore reached
  the desired contradiction.  It follows that $\mathcal{E}_I(D)$ has no
  undirected edges, and is therefore the same as $D$.
\end{proof}

Using again the fact that it is necessary and sufficient to separately orient
each of the chordal chain components of an MEC in order to fully orient an MEC,
the following result for general DAGs follows immediately from
\cref{theorem:correctness-alg-sink-nodes}, and implies that the lower bound for
general DAGs is also tight up to a factor of $2$ (see
Section~\ref{sec:supp:general-DAGs-tight-lower-bound} for a detailed statement
and proof).
\begin{theorem}\label{thm:tight-lower-bound-general}
  Let $D$ be an arbitrary DAG on $n$ nodes and let $\mathcal{E}(D)$ and $r$ be as in the
  notation of \cref{theorem:main-theorem-arbitrary-DAGs}.  Then, there is a set
  of atomic interventions of size at most $n - r$ that fully orients
  $\mathcal{E}(D)$.
\end{theorem}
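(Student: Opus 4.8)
The plan is to reduce \cref{thm:tight-lower-bound-general} for arbitrary DAGs to the v-structure-free case already handled in \cref{theorem:correctness-alg-sink-nodes}, using the structural fact (invoked by the paper via \citep[Lemma 1]{hauser_two_2014}) that it is necessary and sufficient to orient each chordal chain component of $\mathcal{E}(D)$ separately in order to fully orient the MEC. First I would recall that the MEC of $D$ is represented by the chain graph $\mathcal{E}(D)$ whose chain components are chordal, and whose already-directed edges (arising from v-structures) are common to every DAG in the class. Since these directed edges are fixed, the only edges that need to be oriented by interventions are those lying \emph{within} the undirected chordal chain components. Moreover, an atomic intervention on a node $v$ inside a chain component $S$ has no bearing on edges in a different chain component, so the orientation problem decomposes cleanly across components.

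Next I would treat each chain component $S \in CC$ on its own. Each such $S$ is a chordal undirected graph, and $\mathcal{E}(D)[S]$ together with the orientations forced by $D$ gives a DAG on $S$ without v-structures (any v-structure would have been directed already and would not sit inside an undirected chain component). Thus \cref{theorem:correctness-alg-sink-nodes} applies verbatim to the induced DAG $D[S]$: there is a set $I_S$ of atomic interventions on the vertices of $S$, of size at most $\abs{S} - r(S)$, namely the non-\sinkv{} nodes of $D[S]$, such that intervening on $I_S$ fully orients $S$ into its $D[S]$ orientation. Here $r(S)$ is the number of maximal cliques in $S$, matching the notation of \cref{theorem:main-theorem-arbitrary-DAGs}.

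I would then set $I \defeq \bigcup_{S \in CC} I_S$ and argue that $I$ fully orients $\mathcal{E}(D)$. The directed edges of $\mathcal{E}(D)$ are already oriented; the edges inside each component $S$ get oriented by $I_S$ by the previous paragraph; and because the decomposition into chain components is both necessary and sufficient (by \citep[Lemma 1]{hauser_two_2014}), orienting every component suffices to orient the whole MEC. The size bound follows by summation: $\abs{I} = \sum_{S \in CC} \abs{I_S} \leq \sum_{S \in CC} (\abs{S} - r(S)) = n - r$, where I use that the chain components partition the $n$ vertices and that $r = \sum_{S} r(S)$ counts all maximal cliques across components (including singletons, which contribute $\abs{S} = r(S) = 1$ and hence zero to the sum).

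The main obstacle is not any single calculation but making the decomposition argument rigorous: I would need to verify carefully that interventions confined to one chain component do not interfere with orientations in another, and that the union $I$ indeed realizes $\mathcal{E}_I(D) = \mathcal{E}(D)$ rather than merely orienting components in isolation. This is precisely where \citep[Lemma 1]{hauser_two_2014} does the work, and the paper signals that this step is "standard" and mirrors the analogous reduction used for the lower bound in \cref{theorem:main-theorem-arbitrary-DAGs}; so I expect the bulk of the detailed proof (deferred to \Cref{sec:supp:general-DAGs-tight-lower-bound}) to consist of spelling out this component-wise independence and confirming the arithmetic of the summation.
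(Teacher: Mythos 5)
Your proposal is correct and follows essentially the same route as the paper's own proof: decompose $\mathcal{E}(D)$ into its chordal chain components, apply \cref{theorem:correctness-alg-sink-nodes} to each induced DAG $D[S]$ (which has no v-structures) to obtain $I_S$ with $\abs{I_S} \leq \abs{S} - r(S)$, take the union, and invoke the component-wise decomposition result (\cref{corollary:supp:HB14Lemma1}, the paper's version of Lemma~1 of \cite{hauser_two_2014}) to conclude that the union fully orients $\mathcal{E}(D)$, with the size bound following by summation. No gaps.
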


\subsection{Comparison with Known Lower Bounds}
\label{subsec:comparison-with-known-lb}
To compare our universal lower bound with the universal lower bound of
\cite{squires2020}, we start with the following combinatorial lemma, whose proof
can be found in
Section~\ref{sec:supp:proof-clique-number-bound}.
\begin{lemma} \label{lemma:clique-number-bound} Let $G$ be an undirected chordal
  graph on $n$ nodes in which the size of the largest clique is $\omega$.  Then,
  $n - |\mathcal{C}| \geq \omega - 1$, where $\mathcal{C}$ is the set of
  maximal cliques of $G$.
\end{lemma}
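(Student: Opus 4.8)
The plan is to prove the inequality $n - |\mathcal{C}| \geq \omega - 1$ by induction on $n$, using a perfect elimination ordering of the chordal graph $G$ to peel off one vertex at a time. Since $G$ is chordal, it has a PEO $\sigma = (v_1, \dots, v_n)$, and I will run the induction by removing the \emph{last} vertex $v_n$ (equivalently, the first vertex in the reverse ordering). The key structural fact I would rely on is that, with respect to a PEO, the neighbors of $v_n$ among $\{v_1, \dots, v_{n-1}\}$ form a clique, so that $v_n$ together with these neighbors forms a clique of $G$. The induction hypothesis applied to $G' \defeq G[\{v_1, \dots, v_{n-1}\}]$ (which is again chordal, being an induced subgraph of a chordal graph) gives $(n-1) - |\mathcal{C}'| \geq \omega' - 1$, where $\mathcal{C}'$ and $\omega'$ are the set of maximal cliques and the clique number of $G'$.

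The heart of the argument is to track how removing $v_n$ changes the two quantities $n - |\mathcal{C}|$ and $\omega$. I would split into cases according to whether $v_n$ is a \emph{simplicial} vertex lying in a unique maximal clique. In a PEO every vertex is simplicial in the graph induced on itself and its predecessors, so $N \defeq \{v_n\} \cup (ne_G(v_n) \cap \{v_1,\dots,v_{n-1}\})$ is a maximal clique of $G$ containing $v_n$, and it is the \emph{only} maximal clique containing $v_n$. The two scenarios to analyze are: (a) the clique $N \setminus \{v_n\}$ is still maximal in $G'$ (so deleting $v_n$ does not create a new maximal clique and merely shrinks one), in which case $|\mathcal{C}| = |\mathcal{C}'|$; and (b) $N \setminus \{v_n\}$ becomes contained in some larger maximal clique of $G'$, so that $|\mathcal{C}| = |\mathcal{C}'| + 1$. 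In case (b), the quantity $n - |\mathcal{C}|$ is preserved under the deletion, and since $\omega \geq \omega'$ the inequality passes through provided the base case is handled. In case (a) the count $n - |\mathcal{C}|$ increases by one, which gives slack; here I would need to ensure $\omega \leq \omega' + 1$, which is automatic since deleting a single vertex can decrease the clique number by at most one.

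The base case $n = 1$ is immediate: a single vertex gives $n = |\mathcal{C}| = 1$ and $\omega = 1$, so both sides equal $0$. The step I expect to be the main obstacle is the careful bookkeeping in case (a) versus case (b): specifically, verifying that the interaction between the change in $|\mathcal{C}|$ and the change in $\omega$ always leaves the inequality intact, rather than losing the needed margin exactly when $v_n$ sits in a maximum-size clique. An alternative and possibly cleaner route would be to avoid induction entirely and argue directly: fix a maximum clique $K$ with $|K| = \omega$, and exhibit an injection from $K$ into the disjoint union of $(V \setminus K)$-type "extra" vertices and $\mathcal{C}$, thereby charging the $\omega$ vertices of $K$ against $n - |\mathcal{C}|$ plus one. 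I would first attempt the induction, since the PEO machinery is already available from the preliminaries, and fall back on the direct counting injection if the case analysis proves delicate.
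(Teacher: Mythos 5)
Your route is genuinely different from the paper's. The paper argues directly, with no induction: it takes a maximum clique $C$ of size $\omega$, fixes a perfect elimination ordering that \emph{begins} with $C$, orients $G$ accordingly to obtain a DAG $D$ without v-structures, and then uses Observation~\ref{lemma:sink-nodes} (distinct maximal cliques have distinct sink vertices) together with the observation that the sink of any \emph{other} maximal clique cannot lie inside $C$ (else that clique would be contained in $C$) to conclude $n \geq \omega + |\mathcal{C}| - 1$. Your proposal instead inducts on $n$ by deleting the last vertex $v_n$ of an arbitrary PEO, which is simplicial and hence lies in a unique maximal clique $N$. This is a viable and somewhat more elementary alternative --- it does not need the sink-node correspondence --- at the cost of the case bookkeeping you describe. (Your fallback ``injection'' idea is essentially the paper's proof in disguise.)

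That bookkeeping is where your sketch currently has a gap, and it is exactly the step you flagged as the main obstacle. In case (b) you have $|\mathcal{C}| = |\mathcal{C}'| + 1$, so the induction hypothesis gives $n - |\mathcal{C}| = (n-1) - |\mathcal{C}'| \geq \omega' - 1$, and to conclude $n - |\mathcal{C}| \geq \omega - 1$ you need $\omega' \geq \omega$ --- the \emph{opposite} of the inequality ``$\omega \geq \omega'$'' you invoke, which by itself yields nothing. Fortunately the needed fact does hold in case (b): there the clique $N \setminus \{v_n\}$ extends to a strictly larger clique of $G'$, so $\omega' \geq |N|$; and if $N$ is not a maximum clique of $G$, then some maximum clique of $G$ avoids $v_n$ (since $N$ is the only maximal clique containing the simplicial vertex $v_n$) and survives intact in $G'$, so $\omega' = \omega$. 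Either way $\omega' \geq \omega$ in case (b). Your case (a) is fine as written, since there $n - |\mathcal{C}|$ gains one while $\omega$ can drop by at most one under deletion of a single vertex. With that one-line repair to case (b), the induction closes and the proof is correct.
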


\Cref{lemma:clique-number-bound} implies that
$\ceil{\frac{n - |\mathcal{C}|}{2}} \geq \ceil{\frac{\omega - 1}{2}} =
\floor{\frac{\omega}{2}}$ in chordal graphs which shows that our universal lower
bound is always equal to or better than the one by \cite{squires2020}.  The
proof of \cref{lemma:clique-number-bound} makes it apparent that two bounds
are close only in very special circumstances. (\emph{Split graphs} and
\emph{k-trees} are some special families of chordal graphs for which
$\ceil{\frac{n - \abs{\mathcal{C}}}{2}} = \floor{\frac{\omega}{2}}$). We further
strengthen this intuition through theoretical analysis of special classes of
graphs and via simulations.

\paragraph{Examples where our Lower Bound is Significantly Better}
\label{subsubsec:examples}
We provide two constructions of special classes of chordal graphs in which our
universal lower bound is $\Theta(k)$ times the $\floor{\frac{\omega}{2}}$ lower
bound by \cite{squires2020} for any $k \in \mathbb{N}$. Further discussion of
such examples can be found in
Section~\ref{sec:supp:graph-examples}.

\textit{Construction 1.} First, we provide a construction by
\cite{shanmugamKDV15} for graphs that require about $k$ times more number of
interventions than their lower bound, where $k$ is size of the maximum
independent set of the graph. This construction of a chordal graph $G$ starts
with a line $L$ consisting of vertices $1, \dots, 2k$ such that each node
$1 < i < 2k$ is connected to $i-1$ and $i+1$. For each $1 \leq p \leq k$, $G$
has a clique $C_p$ of size $\omega$ which has exactly two nodes $2p-1, 2p$ from
the line $L$. Maximum clique size of $G$ is $\omega$, number of nodes,
$n = k\omega$, and number of maximal cliques, $\abs{\mathcal{C}} = 2k -
1$. Thus, for $G$, we have, $n - \abs{\mathcal{C}} = k(\omega - 2) + 1$ which
implies
$\ceil{\frac{n - \abs{\mathcal{C}}}{2}} = \Theta(k)\floor{\frac{\omega}{2}}$ for
$\omega > 2$.

\textit{Construction 2.} $G$ has $k$ cliques of size $\omega$, with every pair of cliques intersecting at a
unique node $v$. The number of nodes in $G$ is $k(\omega - 1) + 1$, maximum
clique size is $\omega$, and number of maximal cliques is $k$, thus,
$n - \abs{\mathcal{C}} = k(\omega - 2) + 1$ which implies
$\ceil{\frac{n - \abs{\mathcal{C}}}{2}} = \Theta(k)\floor{\frac{\omega}{2}}$ for
$\omega > 2$.

\section{Multi Node Interventions}
\label{sec:multi-node-interventions}

In this section, we explore the applicability of the techniques developed in the
previous section to the setting of non-atomic interventions, where each
intervention can potentially randomize more than one node.  We begin with the
lower bound in \cref{thm:lower-bound-size-k-interventions}, which is a direct
corollary of our lower bound for single-node interventions.  Then, in
\cref{thm:upper-bound-size-k-interventions}, we explore how tight this lower
bound is.  The technical core of this result is
\cref{thm:upper-bound-single-intervention}, which uses some of the ideas behind
the notion of \gcb{} orderings.
\begin{proposition}\label{thm:lower-bound-size-k-interventions}
  Let $D$ be an arbitrary DAG on $n$ nodes and let $\mathcal{E}(D)$ and $r$ be as in the
  notation of \cref{theorem:main-theorem-arbitrary-DAGs}. If $\mathcal{I}$ is a set of
  interventions of size at most $k$ that fully orients $\mathcal{E}(D)$
  (i.e., $\mathcal{E}_{\mathcal{I}}(D) = D$), then
  $\abs{\mathcal{I}} \geq \ceil{\frac{\ceil{\frac{n-r}{2}}}{k}}$.
\end{proposition}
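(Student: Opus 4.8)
The plan is to reduce the multi-node problem to the atomic case already handled by \Cref{theorem:main-theorem-arbitrary-DAGs}. Given a set $\mathcal{I} = \inb{I_1, \dots, I_m}$ of (non-empty) interventions with $\abs{I_j} \leq k$ for each $j$ and with $\mathcal{E}_{\mathcal{I}}(D) = D$, I would form the atomic intervention set $\mathcal{I}'$ consisting of the empty intervention together with the singleton $\inb{v}$ for every vertex $v$ lying in some $I_j$. Since each $I_j$ contributes at most $k$ vertices, the number of non-empty atomic interventions satisfies $\abs{\mathcal{I}'} \leq \abs{\bigcup_{j} I_j} \leq \sum_{j} \abs{I_j} \leq k\,\abs{\mathcal{I}}$.

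The crux is to show that this atomic refinement still fully orients the MEC, i.e., $\mathcal{E}_{\mathcal{I}'}(D) = D$. The key observation is that $\mathcal{I}'$ has at least as much direct separation power as $\mathcal{I}$: for any edge $\inb{a, b}$ of $D$, if some $I_j \in \mathcal{I}$ has $\abs{I_j \cap \inb{a, b}} = 1$, say $a \in I_j$ and $b \notin I_j$, then the singleton $\inb{a} \in \mathcal{I}'$ also satisfies $\abs{\inb{a} \cap \inb{a, b}} = 1$. Using the characterization in \Cref{thm:hb-i-essential}, I would then argue by induction along the strong-protection closure that every edge directed in $\mathcal{E}_{\mathcal{I}}(D)$ is also directed, in the same direction, in $\mathcal{E}_{\mathcal{I}'}(D)$: edges forced by v-structures are directed in both; edges forced by an intervention via \cref{item:directed-by-intervention} in $\mathcal{E}_{\mathcal{I}}(D)$ remain separated by $\mathcal{I}'$ and are hence directed there too; and edges forced by one of the Meek configurations of \cref{item:strong-i-protection} fire identically once the other edges in the configuration are, inductively, already directed in $\mathcal{E}_{\mathcal{I}'}(D)$. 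This monotonicity step, showing that refining interventions into atomic ones never loses orienting power, is where the main work lies; it is a folklore-type fact about $\mathcal{I}$-essential graphs, and I would include the short inductive proof (or cite the corresponding observation in \Cref{sec:some-folkl-results}) for completeness. Since $\mathcal{E}_{\mathcal{I}}(D) = D$ is fully directed, this immediately gives $\mathcal{E}_{\mathcal{I}'}(D) = D$.

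Finally, I would apply \Cref{theorem:main-theorem-arbitrary-DAGs} to the atomic intervention set $\mathcal{I}'$, which fully orients $\mathcal{E}(D)$, to conclude that $\abs{\mathcal{I}'} \geq \ceil{\frac{n - r}{2}}$. Combining this with $\abs{\mathcal{I}'} \leq k\,\abs{\mathcal{I}}$ yields $k\,\abs{\mathcal{I}} \geq \ceil{\frac{n-r}{2}}$, and since $\abs{\mathcal{I}}$ is a non-negative integer, dividing by $k$ and taking ceilings gives $\abs{\mathcal{I}} \geq \ceil{\ceil{\frac{n-r}{2}}/k}$, as desired.
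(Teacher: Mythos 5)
Your proposal is correct and follows essentially the same route as the paper: break each intervention into singletons, invoke the monotonicity of orientation under refinement (which is exactly \cref{cor:partition-intervention} in Section~\ref{sec:some-folkl-results}), apply \cref{theorem:main-theorem-arbitrary-DAGs} to the resulting atomic set, and divide by $k$. The only difference is that the paper simply cites the folklore corollary for the refinement step rather than re-deriving it inline.
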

\begin{proof}[of \cref{thm:lower-bound-size-k-interventions}]
  Let $\mathcal{I}$ be a set of interventions of size at most $k$ such that
  $\mathcal{E}_{\mathcal{I}}(D) = D$.  Consider the set $\mathcal{I}^*$ of
  single node interventions obtained from $\mathcal{I}$ by braking each
  intervention $I \in \mathcal{I}$ into its constituent vertices.  Formally,
  \begin{displaymath}
    \mathcal{I}^* \defeq \inb{\emptyset} \cup \inb{\inb{v} \st v \in I \text{ for
        some } I \in
      \mathcal{I}}.
  \end{displaymath}
  Then, (recalling that the empty intervention is not counted when reporting the
  size of the intervention set), we have
  $\abs{\mathcal{I}^*} \leq k \abs{\mathcal{I}}$, since each intervention in
  $\mathcal{I}$ is of size at most $k$.  Further, from
  \cref{cor:partition-intervention} in Section~\ref{sec:some-folkl-results}, we
  also have $\mathcal{E}_{\mathcal{I}^*}(D) = D$ (since
  $\mathcal{E}_{I}(D) = D$).

  However, from \cref{theorem:main-theorem-arbitrary-DAGs} we know that since
  $\mathcal{I}^*$ is a set of single node interventions which fully orients
  $\mathcal{E}(D)$, we must have
  $\abs{\mathcal{I}^*} \geq \ceil{\frac{n-r}{2}}$. Thus, we get that
  \[\abs{\mathcal{I}} \geq \ceil{\frac{\abs{\mathcal{I}^*}}{k}} =
    \ceil{\frac{\ceil{\frac{n-r}{2}}}{k}}.\] We conclude that any intervention
  set of interventions of size at most $k$ that fully orients $\mathcal{E}(D)$
  must have at least $\ceil{\frac{\ceil{\frac{n-r}{2}}}{k}}$ interventions.
\end{proof}

In \cref{thm:upper-bound-size-k-interventions} below we show that the above
lower bound is tight up to an additive term of roughly $R/k$, where $R$ is the
total number of maximal cliques in all ``non-singleton'' chordal chain
components of the MEC, and $k$ is the maximum size of each intervention.  The
main technical ingredient of that result is the theorem below for a single chain
component.

\begin{theorem}\label{thm:upper-bound-single-intervention}
  Let $D$ be an DAG on $n$ nodes without v-structures and let $\sigma$ be a
  topological ordering of $D$ that satisfies the clique block property P1 of
  \cref{def:nice-ordering}.  Suppose that $D$ has $r$ \sinkv{} nodes, and let
  $L_i(\sigma)$, $1 \leq i \leq r$ be as in P1 of \cref{def:nice-ordering}.
  Then, there is a set $I$ of at most
  $\sum_{i = 1}^r\ceil{\frac{\abs{L_i(\sigma)}}{2}}$ nodes such that the
  intervention set $\mathcal{I} \defeq \inb{\emptyset, I}$ fully orients $D$
  starting from its MEC (which is the same as \skel{D} as $D$ has no
  v-structures).  In other words, $\mathcal{E}_{\mathcal{I}}(D) = D$.
\end{theorem}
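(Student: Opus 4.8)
The plan is to exhibit an explicit intervention $I$ with $\abs{I} \le \sum_{i=1}^r \ceil{\abs{L_i(\sigma)}/2}$ and then verify, using the characterization of $I$-essential graphs in \cref{thm:hb-i-essential}, that every edge of $D$ is directed in $H \defeq \mathcal{E}_{\mathcal{I}}(D)$, where $\mathcal{I} \defeq \inb{\emptyset, I}$; since $H$ has the same skeleton as $D$ and orients every directed edge as in $D$, this forces $H = D$. Two consequences of \cref{thm:hb-i-essential} will do all the work. First, a ``Meek rule~1'' fact from \cref{item:directed-by-Meek-rule-1}: if $c \dir a$ is directed in $H$, $a$ and $b$ are adjacent, and $c$ is not adjacent to $b$, then $a \dir b$ is directed in $H$. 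Second, a ``Meek rule~2'' fact: if $a \dir x$ and $x \dir b$ are directed in $H$ and $a,b$ are adjacent, then $a \dir b$ is directed in $H$; this follows because an undirected $a \undir b$ would place $a$ and $b$ in a common chordal chain component while $a \dir x \dir b$ forces them into distinct components of the component-DAG, contradicting \cref{item:chain-chordal}. Because $D$ (hence $H$) has no v-structures, I will never need the v-structure-based protections.

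I will prove that every edge is directed in $H$ by induction on $\sigma(b)$, the position of the head $b$ of the edge, showing simultaneously that all edges whose head strictly precedes a given position are already directed (the base case being vacuous). Fix an edge $a \dir b$ of $D$ (so $\sigma(a) < \sigma(b)$) and suppose first that $b$ is \emph{not} the first vertex of its block $L_i(\sigma)$; let $b^-$ be the vertex immediately preceding $b$ in $\sigma$, which then also lies in $L_i(\sigma)$ and is adjacent to $b$. I will arrange the construction so that $b^-$ and $b$ are separated by $I$, whence $b^- \dir b$ is directed in $H$ by \cref{item:directed-by-intervention}. Now any parent $a$ of $b$ with $a \ne b^-$ must be adjacent to $b^-$: otherwise $a \dir b \rdir b^-$ would be a v-structure, which $D$ does not contain. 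Since $\sigma(a) < \sigma(b^-) < \sigma(b)$, the edge $a \dir b^-$ is directed in $H$ by the induction hypothesis, and the Meek rule~2 fact then directs $a \dir b$. This single argument disposes of \emph{all} edges (both within $L_i(\sigma)$ and crossing into it from earlier blocks) whose head is not the first vertex of its block.

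The remaining, and genuinely delicate, case is an edge $a \dir b$ whose head $b$ is the first vertex of its block; here the predecessor of $b$ in $\sigma$ is the \sinkv{} vertex of the previous block and need not be adjacent to $b$, so the argument above breaks down. Let $a^\ast$ be the last parent of $b$ in $\sigma$ (the parents of $b$ form a clique as $D$ has no v-structures). If $a^\ast$ has a parent $c$ not adjacent to $b$, then $c \dir a^\ast$ is directed by induction and the Meek rule~1 fact directs $a^\ast \dir b$; if instead every parent of $a^\ast$ is adjacent to $b$, I will force $a^\ast$ and $b$ to be separated by $I$, so that $a^\ast \dir b$ is directed by \cref{item:directed-by-intervention}. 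Either way $a^\ast \dir b$ is directed. Every other parent $a$ of $b$ satisfies $\sigma(a) < \sigma(a^\ast) < \sigma(b)$, so the edge $a \dir a^\ast$ (which exists since $a$ and $a^\ast$ are adjacent parents of $b$) is directed by induction; combined with $a^\ast \dir b$, the Meek rule~2 fact directs $a \dir b$.

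It remains to construct $I$ so that (i) consecutive vertices within each block are separated and (ii) for each ``bad'' block-leading vertex $b$ (one whose last parent $a^\ast$ has all its parents adjacent to $b$) the pair $\inb{a^\ast, b}$ is separated, all within the budget $\ceil{\abs{L_i(\sigma)}/2}$ per block. Requirement~(i) amounts to choosing, inside each block, a vertex cover of the path on its vertices listed in $\sigma$-order, which costs only $\floor{\abs{L_i(\sigma)}/2}$. The key combinatorial point is that the \emph{first} vertex of a block may be freely placed inside or outside $I$ while still meeting the budget $\ceil{\abs{L_i(\sigma)}/2}$: a minimum path cover avoiding the first vertex always exists, and the first vertex can be forced in at an extra cost of at most $\ceil{\abs{L_i(\sigma)}/2} - \floor{\abs{L_i(\sigma)}/2}$. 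I will therefore process the blocks in increasing order and, on reaching a bad block-leading vertex $b$, set its membership in $I$ to be the \emph{opposite} of that of its last parent $a^\ast$ (which, lying in an earlier block, is already decided), thereby separating $\inb{a^\ast, b}$; since each block-leading vertex carries at most one such constraint, no conflict arises. I expect this coordinated selection to be the main obstacle: the budget is tight (for blocks of size two, a small example shows that neither an ``all-odd'' nor an ``all-even'' choice works and an adaptive choice is forced), so the argument hinges on the flexibility of minimum path vertex covers to realize any prescribed inclusion or exclusion of each block's leading vertex, together with sequencing the decisions along the block order so that all leading-vertex separations are met without overspending.
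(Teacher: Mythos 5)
Your proposal is correct and takes essentially the same route as the paper's proof: both select an alternating (odd- or even-indexed) subset of each clique block, fixing the parity of block $j \geq 2$ adaptively according to the $I$-membership of the $\sigma$-latest parent of the block's leading vertex, and both then invoke \cref{thm:hb-i-essential} (\cref{item:directed-by-intervention} for consecutive pairs, the forbidden induced subgraph $a \rightarrow b - c$, and the chain-graph/no-directed-cycle condition) to propagate orientations to all remaining edges --- your induction on the position of the head replaces the paper's lexicographically-minimal-counterexample argument, and your bad/non-bad case split for block-leading vertices is subsumed by the paper's simpler rule of always separating the leading vertex from its latest parent. The one phrase to repair is ``vertex cover of the path'': \cref{item:directed-by-intervention} requires \emph{exactly} one endpoint of each consecutive pair to lie in $I$ (a single intervention containing both endpoints of an edge orients nothing --- e.g., $\mathcal{I} = \inb{\emptyset, \inb{a,b}}$ leaves $a - b$ undirected --- so a cover placing two consecutive vertices in $I$ would fail), which forces the within-block selection to be exactly the odd or exactly the even positions; since that is what your construction actually realizes, nothing breaks.
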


\begin{proof}[Proof of \cref{thm:upper-bound-single-intervention}]
  The proof is similar to that of \cref{theorem:correctness-alg-sink-nodes}.
  Let $L_i(\sigma)$, for $1 \leq i \leq r$ (where $r$ is the number of \sinkv{}
  nodes of $D$), be as in P1 of \cref{def:nice-ordering}.  For each
  $1 \leq i \leq r$, order the nodes of $L_i(\sigma)$ as $L_i(\sigma)_j$,
  $1 \leq j \leq \abs{L_i(\sigma)}$, according to $\sigma$.  For each node $v$,
  let $p(v)$ denote the parent of $v$ that has the highest index according to
  $\sigma$ (i.e., such that $\sigma(u) \leq \sigma(p(v))$ for every parent $u$ of
  $v$).  $I$ is defined using the following procedure:

  \begin{procedure}[H]
    \SetAlgoLined
    \LinesNumbered
    \KwIn{A DAG $D$ without v-structures}
    \KwOut{A set $I$ of nodes such that $\mathcal{E}_{\inb{\emptyset, I}}(D) = D$}

    $I \leftarrow \emptyset$\;
    $\sigma \leftarrow$ A topological ordering of $D$ satisfying the clique block property P1 of \cref{def:nice-ordering}\;
    \For{$j \in [r]$}{
      \uIf{$j = 1$ \textbf{\textup{or}} $p(L_j(\sigma)_1) \notin I$\label{line:j1}} {
        $I \leftarrow  I \cup \inb{L_j(\sigma)_{2\ell - 1} | 1 \leq \ell \leq \ceil{\frac{\abs{L_j(\sigma)}}{2}}}$\;
      } \uElse{ $I \leftarrow I \cup \inb{L_j(\sigma)_{2\ell} | 1 \leq \ell \leq \floor{\frac{\abs{L_j(\sigma)}}{2}}}$\;
      }
    }
    \Return $I$\;
\end{procedure}

  By construction, $\abs{I} \leq \sum_{i=1}^r\ceil{\abs{L_i(\sigma)}/2}$.  We
  will now complete the proof by showing that
  $\mathcal{E}_{\mathcal{I}}(D) = D$, where
  $\mathcal{I} \defeq \inb{\emptyset, I}$.  To do this, we will show that any
  edge $a \rightarrow b$ in $D$ is directed in $\mathcal{E}_{\mathcal{I}}(D)$.

  Consider first the case when $a$ and $b$ are in the same $L_i(\sigma)$.  If
  they are also consecutive in $\sigma$, then, by construction,
  $\abs{I \cap \inb{a, b}} = 1$, so that the edge $a \rightarrow b$ would be
  directed in $\mathcal{E}_{\mathcal{I}}(D)$ (by
  \cref{item:directed-by-intervention} of \cref{thm:hb-i-essential}).  Now,
  suppose that $a, b$ are in the same $L_i(\sigma)$, but are not consecutive in
  $\sigma$.  Let $v_1, v_2, \dots v_\ell$ be the vertices (all of which must be
  in $L_i(\sigma)$) between $a$ and $b$ in $\sigma$, ordered according to
  $\sigma$.  By the previous argument, $a \rightarrow v_1$,
  $v_i \rightarrow v_{i + 1}$ (for $1 \leq i \leq \ell - 1$) and
  $v_{\ell} \rightarrow b$ are all directed edges in
  $\mathcal{E}_{\mathcal{I}}(D)$ (here we are also using the fact that
  $L_i(\sigma)$ is a clique in \skel{D}).  But then, if $a \rightarrow b$ were
  not directed in $\mathcal{E}_{\mathcal{I}}(D)$, the undirected edge
  $a \undir b$ along with the above directed path from $a$ to $b$ would form a
  directed cycle in $\mathcal{E}_{\mathcal{I}}(D)$, contradicting
  \cref{item:chain-chordal} of \cref{thm:hb-i-essential} (which says that
  $\mathcal{E}_{\mathcal{I}}(D)$ must be a chain graph).  Thus, we conclude that
  if $a$ and $b$ are in the same $L_i(\sigma)$, then $a \rightarrow b$ is
  directed in $\mathcal{E}_{\mathcal{I}}(D)$.

  Now, suppose, if possible, that there is some edge $a \rightarrow b$ in $D$
  which is not directed in $\mathcal{E}_{\mathcal{I}}(D)$.  Among all such
  edges, choose the one for which the pair $(a, b)$ is lexicographically
  smallest according to $\sigma$.  By the previous paragraph, $a$ and $b$ cannot
  belong to the same $L_i(\sigma)$, so that there must exist $j > i$ such that
  $a \in L_i(\sigma)$ and $b \in L_j(\sigma)$.  We consider the possibilities
  $b = L_j(\sigma)_1$ and $b \neq L_j(\sigma)_1$ separately.

  Suppose first, if possible, that $b \neq L_j(\sigma)_1$.  Then, there must
  exist a vertex $c \in L_j(\sigma)$, such that $c \rightarrow b$ is directed in
  $\mathcal{E}_{\mathcal{I}}(D)$ (here again we use the fact that $L_i(\sigma)$
  is a clique).  Now, $a$ must be adjacent to $c$ in \skel{D}, for otherwise,
  $\mathcal{E}_{\mathcal{I}}(D)$ would contain the induced graph
  $c \rightarrow b \undir a$, contradicting \cref{item:directed-by-Meek-rule-1}
  of \cref{thm:hb-i-essential}.  Further, the edge $a \rightarrow c$ must be
  directed in $\mathcal{E}_{\mathcal{I}}(D)$, by the choice of $(a, b)$ as the
  lexicographically smallest undirected edge in $\mathcal{E}_I(D)$ (for
  otherwise, $(a, c)$ would be a lexicographically smaller undirected edge
  compared to $(a, b)$, since $c$ comes before $b$ in $\sigma$).  But we then
  have the directed cycle $a \dir c \dir b \undir a$ in
  $\mathcal{E}_{\mathcal{I}}(D)$, thereby contradicting
  \cref{item:chain-chordal} of \cref{thm:hb-i-essential} (which says that
  $\mathcal{E}_{\mathcal{I}}(D)$ must be a chain graph).  Thus, we conclude that
  $b \neq L_j(\sigma)_1$ is not possible.

  This leaves only the possibly that $b = L_j(\sigma)_1$.  We now show that this
  also leads to contradiction.  Consider first the case that $b \in I$.  Then,
  since the edge $a \rightarrow b$ is undirected in
  $\mathcal{E}_{\mathcal{I}}(D)$, we must have $a \in I$ also (by
  \cref{item:directed-by-intervention} of \cref{thm:hb-i-essential}).  From the
  construction of $I$, it then follows that $p(b) \neq a$ and also that
  $p(b) \not\in I$ (to see this consider line~\ref{line:j1} of the process
  above, and note that by our choice of $a$ and $b$, $b = L_j(\sigma)_1$ and
  $a \in L_i(\sigma)$ with $j > i \geq 1$).  Thus,
  $\abs{I \cap \inb{a, p(b)}} = 1$ and $\abs{I \cap \inb{b, p(b)}} = 1$.
  Similarly, if $b \not\in I$, then, since the edge $a \rightarrow b$ is
  undirected in $\mathcal{E}_{\mathcal{I}}(D)$, we must have $a \not\in I$ also
  (by \cref{item:directed-by-intervention} of \cref{thm:hb-i-essential}).  By
  the condition on line~\ref{line:j1} of the construction of $I$, we must then
  have $p(b) \in I$ (since $b \not \in I$), and hence again that $a \neq p(b)$
  and $\abs{I \cap \inb{a, p(b)}} = 1$, $\abs{I \cap \inb{b, p(b)}} = 1$.

  Thus, we obtain that if $b = L_j(\sigma)_1$, then $a \neq p(b)$, and further
  that $\abs{I \cap \inb{a, p(b)}} = 1$ and $\abs{I \cap \inb{b, p(b)}} = 1$.
  Since $D$ has no v-structures, we must have the edge $a \rightarrow p(b)$
  in $D$ (by the definition of $p(b)$), and also in $\mathcal{E}_{\mathcal{I}}(D)$
  (by \cref{item:directed-by-intervention} of \cref{thm:hb-i-essential}, since
  $\abs{I \cap \inb{a, p(b)}} = 1$). Also $p(b) \rightarrow b$ in
  $\mathcal{E}_{\mathcal{I}}(D)$ (by \cref{item:directed-by-intervention} of
  \cref{thm:hb-i-essential}, since $\abs{I \cap \inb{b, p(b)}} = 1$). But then
  $\mathcal{E}_{\mathcal{I}}(D)$ contains the directed cycle
  $a \rightarrow p(b) \rightarrow b \undir a$, which contradicts
  \cref{item:chain-chordal} of \cref{thm:hb-i-essential} (which says that
  $\mathcal{E}_{\mathcal{I}}(D)$ must be a chain graph).  We conclude that
  $b = L_j(\sigma)_1$ is also not possible.  This concludes the proof.
\end{proof}

\begin{corollary}\label{cor:upper-bound-single-intervention-general}
  Let $D$ be an arbitrary DAG and let $\mathcal{E}(D)$ be the chain graph with
  chordal chain components representing the MEC of $D$. Let $CC$ denote the set
  of chain components of $\mathcal{E}(D)$, and $r(S)$ the number of maximal
  cliques in the chain component $S \in CC$. Let $\sigma(S)$ be a topological
  ordering of $D[S]$ that satisfies the clique block property P1 of
  \cref{def:nice-ordering} (note that $D[S]$ is a DAG without v-structures since
  $S$ is a chain component of $\mathcal{E}(D)$). Let $L_i(\sigma(S))$,
  $1 \leq i \leq r(S)$ be as in P1 of \cref{def:nice-ordering}. Then, there is a
  set $I$ of at most
  $\sum_{S \in CC} \mathbb{I}(\abs{S} > 1) \sum_{i = 1}^{r(S)}
  \ceil{\frac{\abs{L_i(\sigma(S))}}{2}}$ nodes such that the intervention set
  $\mathcal{I} \defeq \inb{\emptyset, I}$ fully orients $D$ starting from its
  MEC. In other words, $\mathcal{E}_{\mathcal{I}}(D) = D$.
\end{corollary}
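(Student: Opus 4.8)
The plan is to lift the single-component result of \cref{thm:upper-bound-single-intervention} to an arbitrary MEC by exploiting the known fact that the chordal chain components of $\mathcal{E}(D)$ can be oriented independently of one another. Concretely, I would first invoke the standard result (\citep[Lemma 1]{hauser_two_2014}, already used in the proof of \cref{theorem:main-theorem-arbitrary-DAGs}) that an intervention set fully orients the MEC $\mathcal{E}(D)$ if and only if, for each chain component $S \in CC$, the induced intervention set fully orients the chordal graph on $S$. This reduces the problem to orienting each $D[S]$ separately, and since $S$ is a chain component, $D[S]$ is a DAG without v-structures, so \cref{thm:upper-bound-single-intervention} applies to it directly.

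Next, I would apply \cref{thm:upper-bound-single-intervention} to each non-singleton component $S$ (those with $\abs{S} > 1$) using the given ordering $\sigma(S)$, obtaining for each such $S$ a node set $I_S$ with $\abs{I_S} \leq \sum_{i=1}^{r(S)} \ceil{\abs{L_i(\sigma(S))}/2}$ such that $\mathcal{E}_{\inb{\emptyset, I_S}}(D[S]) = D[S]$. Singleton components contribute nothing: a component consisting of a single vertex has no undirected edges to orient, so no interventions are needed there, which is exactly why the indicator $\mathbb{I}(\abs{S} > 1)$ appears in the bound. I would then define the global intervention target set as $I \defeq \bigcup_{S \in CC} I_S$, noting that the components partition $V$, so the $I_S$ are disjoint and $\abs{I} = \sum_{S : \abs{S} > 1} \abs{I_S}$, which is at most the claimed sum.

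The remaining step is to verify that $\mathcal{I} \defeq \inb{\emptyset, I}$ fully orients the whole MEC, i.e. $\mathcal{E}_{\mathcal{I}}(D) = D$. Here I would argue that the interventions act componentwise: intervening on $I$ and restricting attention to a component $S$ has the same orienting effect on the edges internal to $S$ as intervening on $I_S = I \cap S$ alone, because edges of $\mathcal{E}(D)$ run only within chain components and the orientation of edges inside $S$ is governed solely by the induced subgraph $\mathcal{E}(D)[S]$ together with the targets falling inside $S$. Combining the componentwise correctness guarantees from \cref{thm:upper-bound-single-intervention} then shows every edge of $\mathcal{E}(D)$ is oriented correctly, so $\mathcal{E}_{\mathcal{I}}(D) = D$.

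The main obstacle is making the componentwise decomposition fully rigorous: one must be careful that $\mathcal{E}_{\mathcal{I}}(D) = D$ really does follow from the separate guarantees $\mathcal{E}_{\inb{\emptyset, I_S}}(D[S]) = D[S]$, which requires a precise statement that orienting an MEC factors over its chain components (and that interventions on one component do not interfere with, nor are needed for, another). This is exactly the decomposition lemma of \citep[Lemma 1]{hauser_two_2014} invoked identically in \cref{theorem:main-theorem-arbitrary-DAGs}; since that earlier proof already relies on the same factorization principle, I expect this corollary to follow from \cref{thm:upper-bound-single-intervention} in a manner completely analogous to how \cref{thm:tight-lower-bound-general} follows from \cref{theorem:correctness-alg-sink-nodes}, and the proof can be deferred to the appendix as a routine application of that principle.
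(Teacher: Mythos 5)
Your proposal is correct and follows essentially the same route as the paper: reduce to chain components via the factorization principle (formalized in the paper as \cref{corollary:supp:HB14Lemma1}, the analogue of Lemma~1 of \cite{hauser_two_2014}), apply \cref{thm:upper-bound-single-intervention} to each non-singleton component to get disjoint sets $I_S$, take $I = \bigcup_S I_S$, and observe that the projection of $\inb{\emptyset, I}$ onto each $S$ is exactly $\inb{\emptyset, I_S}$, so the componentwise guarantees combine to give $\mathcal{E}_{\mathcal{I}}(D) = D$. The one step you flag as needing care --- that orienting factors over chain components with the projected intervention sets --- is precisely what the paper's \cref{corollary:supp:HB14Lemma1} supplies, so your argument is complete as stated.
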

\begin{proof}[Proof of \cref{cor:upper-bound-single-intervention-general}]
  We use \cref{corollary:supp:HB14Lemma1}, which says that it is sufficient to
  orient all chain components of $\mathcal{E}(D)$ in order to fully orient
  $\mathcal{E}(D)$.  Formally, for each $S \in CC$, $D[S]$ is a DAG without
  v-structures.  Thus, for each $S \in CC$ with $\abs{S} > 1$, we see from
  \cref{thm:upper-bound-single-intervention} that there is a set
  $I_S \subseteq S$ of size at most
  $ \sum_{i = 1}^{r(S)} \ceil{\frac{\abs{L_i(\sigma(S))}}{2}}$ such
  $\mathcal{E}_{\inb{\emptyset, I_S}}(D[S]) = D[S]$.  We now define
  \begin{equation}
    I \defeq \bigcup_{\substack{S \in CC\\\abs{S} > 1}} I_S \qquad \text{ and }
    \qquad \mathcal{I} \defeq \inb{\emptyset, I}.\label{eq:3}
  \end{equation}
  In the notation of \cref{corollary:supp:HB14Lemma1}, we thus have
  $\mathcal{I}_S = \inb{\emptyset, S}$ for every $S \in CC$ with $\abs{S} > 1$.
  Thus from the above discussion, we have
  $\mathcal{E}_{\mathcal{I}_S}(D[S]) = D[S]$ for every $S \in CC$ (this equation
  is trivially true for those $S \in CC$ for which $\abs{S} = 1$, since in that
  case $D[S]$ does not have any edges to
  direct). \cref{corollary:supp:HB14Lemma1} thus implies that
  $\mathcal{E}_{\mathcal{I}}(D) = D$.  The claim now follows from the
  definitions of $I$ and $I_S$ above.
\end{proof}

\begin{theorem}\label{thm:upper-bound-size-k-interventions}
  Let $D$ be an arbitrary DAG on $n$ nodes and let $\mathcal{E}(D)$ and $r$ be as in the
  notation of \cref{theorem:main-theorem-arbitrary-DAGs}. Then, there exists a set
  $\mathcal{I}$ of interventions of size at most $k$ such that $\mathcal{I}$ fully orients
  $\mathcal{E}(D)$ (i.e., $\mathcal{E}_{\mathcal{I}}(D) = D$) and
  $\abs{\mathcal{I}} \leq \ceil{\frac{\floor{\frac{n-r}{2}}}{k}} + \ceil{\frac{r^*}{k}}$. Here, $r^*$
  is the total number of maximal cliques in the chordal chain components of $\mathcal{E}(D)$,
  excluding chain components of singleton vertices.
\end{theorem}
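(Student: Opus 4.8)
The plan is to derive this multi-node upper bound from the single-intervention construction of \cref{cor:upper-bound-single-intervention-general} by chopping the resulting node set into blocks of size at most $k$, using the principle that partitioning an orienting intervention into smaller pieces never destroys orientation.

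First I would apply \cref{cor:upper-bound-single-intervention-general} to get a set $I$ of nodes with $\mathcal{E}_{\inb{\emptyset, I}}(D) = D$ and
\[
  \abs{I} \;\le\; \sum_{\substack{S \in CC \\ \abs{S} > 1}} \sum_{i=1}^{r(S)} \ceil{\frac{\abs{L_i(\sigma(S))}}{2}}.
\]
The next step is a routine arithmetic simplification of this bound. Using $\ceil{m/2} = \floor{(m-1)/2} + 1$ for every integer $m \ge 1$, the superadditivity $\sum_i \floor{x_i} \le \floor{\sum_i x_i}$, the per-component identity $\sum_{i=1}^{r(S)}(\abs{L_i(\sigma(S))} - 1) = \abs{S} - r(S)$, and the fact that singleton chain components contribute $0$ to $\sum_{S \in CC}(\abs{S} - r(S)) = n - r$, I obtain
\[
  \abs{I} \;\le\; \floor{\frac{n-r}{2}} + r^{*},
\]
since $r^{*} = \sum_{S \in CC,\, \abs{S} > 1} r(S)$ is exactly the number of maximal cliques in the non-singleton chain components.

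Given this size bound, I would write $I$ as a disjoint union $I = A \sqcup B$ with $\abs{A} \le \floor{(n-r)/2}$ and $\abs{B} \le r^{*}$ (possible precisely because $\abs{I} \le \floor{(n-r)/2} + r^{*}$), partition $A$ into at most $\ceil{\floor{(n-r)/2}/k}$ blocks of size at most $k$ and $B$ into at most $\ceil{r^{*}/k}$ such blocks, and let $\mathcal{I}$ be $\inb{\emptyset}$ together with all these blocks. Then every intervention in $\mathcal{I}$ has size at most $k$, and $\abs{\mathcal{I}} \le \ceil{\floor{(n-r)/2}/k} + \ceil{r^{*}/k}$, matching the claimed bound.

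It remains to verify $\mathcal{E}_{\mathcal{I}}(D) = D$. Here $\mathcal{I}$ is a refinement of the single-intervention set $\inb{\emptyset, I}$, since the blocks partition $I$; as $\mathcal{E}_{\inb{\emptyset, I}}(D) = D$ is already fully oriented, I would invoke \cref{cor:partition-intervention} (used in the same ``refinement preserves orientation'' direction in the proof of \cref{thm:lower-bound-size-k-interventions}) to conclude that the finer set $\mathcal{I}$ also fully orients $D$. The only non-routine point, and the step I expect to require the most care, is confirming that \cref{cor:partition-intervention} applies to this partial partition of $I$ into size-$\le k$ blocks rather than only to the complete decomposition into singletons; this follows from monotonicity of orientation under refinement, because every edge directed for $\inb{\emptyset, I}$ by \cref{item:directed-by-intervention} of \cref{thm:hb-i-essential} stays directed under the finer $\mathcal{I}$, so $\mathcal{E}_{\mathcal{I}}(D)$ is at least as oriented as the fully oriented $D$.
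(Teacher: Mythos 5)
Your proposal is correct and follows essentially the same route as the paper: apply \cref{cor:upper-bound-single-intervention-general}, simplify the size bound to $\floor{\frac{n-r}{2}} + r^*$ via the same ceiling/floor manipulations, chop $I$ into blocks of size at most $k$, and invoke \cref{cor:partition-intervention} (iterated) to see that the refinement still orients $\mathcal{E}(D)$. The only cosmetic difference is that you split $I$ into $A \sqcup B$ before blocking, whereas the paper blocks $I$ sequentially and then bounds $\ceil{\abs{I}/k}$ by the two-term expression; both yield the identical bound.
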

\begin{proof}[Proof of \cref{thm:upper-bound-size-k-interventions}]
  From \cref{cor:upper-bound-single-intervention-general}, we get that there
  exists a set $I$ with
  $\abs{I} \leq \sum_{S \in CC} \mathbb{I}(\abs{S} > 1) \sum_{i = 1}^{r(S)}
  \ceil{\frac{\abs{L_i(\sigma(S))}}{2}}$ such that
  $\mathcal{I}^* \defeq \inb{\emptyset, I}$ fully orients $D$ starting from
  $\mathcal{E}(D)$.  Now, write the elements of $I$ as
  $I_1, I_2, \dots, I_{\abs{I}}$, and define the partitions
  \[I^i \defeq \inb{I_{(i-1)k+1}, \dots, I_{\max(ik, \abs{I})}},\] for
  $i \in \inb{1, 2, \dots, \ceil{\frac{\abs{I}}{k}}}$.  Then,
  \cref{cor:partition-intervention} implies that if
  $\mathcal{I}^* = \inb{\emptyset, I}$ fully orients $\mathcal{E}(D)$, then
  $\mathcal{I} \defeq \inb{\emptyset, I^1, I^2, \dots,
    I^{\ceil{\frac{\abs{I}}{k}}}}$ also fully orients $\mathcal{E}(D)$.  Thus,
  we conclude that $\mathcal{I}$ indeed fully orients $\mathcal{E}(D)$ (i.e.,
  $\mathcal{E}_{\mathcal{I}}(D) = D$). Now, we have,
  \begin{align} \abs{I} &\leq \sum_{S \in CC} \mathbb{I}(\abs{S} > 1) \sum_{i = 1}^{r(S)} \ceil{\frac{\abs{L_i(\sigma(S))}}{2}}\nonumber\\
            &= \sum_{S \in CC} \mathbb{I}(\abs{S} > 1) \sum_{i = 1}^{r(S)} \left(\floor{\frac{\abs{L_i(\sigma(S))} - 1}{2}} + 1\right)\nonumber\\
            &\leq \sum_{S \in CC} \mathbb{I}(\abs{S} > 1) \inp{\floor{\sum_{i = 1}^{r(S)} \frac{\abs{L_i(\sigma(S))} - 1}{2}} + r(S)}\nonumber\\
            &= \sum_{S \in CC} \mathbb{I}(\abs{S} > 1) \inp{\floor{\frac{\abs{S} - r(S)}{2}} + r(S)}\nonumber\\
            &\leq \floor{\sum_{S \in CC} \frac{\abs{S} - r(S)}{2}} + r^*
            = \floor{\frac{n- r}{2}} + r^*.\nonumber
\end{align}
  Therefore, recalling that the empty intervention is not counted when
  reporting the size of an intervention set, we see that the size of the
  intervention set $\mathcal{I}$ satisfies
  \begin{displaymath} \abs{\mathcal{I}} = \ceil{\frac{\abs{I}}{k}}
            \leq \ceil{\frac{\floor{\frac{n- r}{2}}}{k} + \frac{r^*}{k}}
    \leq \ceil{\frac{\floor{\frac{n- r}{2}}}{k}} + \ceil{\frac{r^*}{k}}.\nonumber
  \end{displaymath}
  Thus, we get that $\abs{\mathcal{I}}$ is at most $\ceil{\frac{r^*}{k}}$ more
  than the lower bound of $\ceil{\frac{\ceil{\frac{n- r}{2}}}{k}}$ imposed by
  \cref{thm:lower-bound-size-k-interventions} on the smallest set of
  interventions of size at most $k$ that fully orients $D$ starting from
  $\mathcal{E}(D)$. This completes the proof.
\end{proof}

\section{Empirical Explorations}
\label{sec:empir-expl}
In this section, we report the results of two experiments on synthetic data. In
\emph{Experiment $1$},
we compare our lower bound with the \emph{optimal intervention size} for a large
number of randomly generated DAGs. Optimal intervention size for a DAG $D$ is
defined as the size of the smallest set of atomic interventions $I$ such that
$\mathcal{E}_I(D) = D$. Next, in \emph{Experiment $2$}, we compare our universal
lower bound with the one in the work of \cite{squires2020} for randomly generated DAGs with
small cliques. These experiments provide empirical evidence that strengthens our
result about the tightness of our universal lower bound
(\Cref{thm:tight-lower-bound-general}) and the constructions presented in
\Cref{subsubsec:examples}. The experiments use the open source
\texttt{causaldag}~\citep{squires18:_python} and
\texttt{networkx}~\citep{hagberg08:_explor_networ} packages.  Further details
about the experimental setup for both experiments are given in
Section~\ref{sec:supp:experiments}.

\paragraph{Experiment 1}
\begin{figure}[ht]
    \centering
    \includegraphics[scale=0.7]{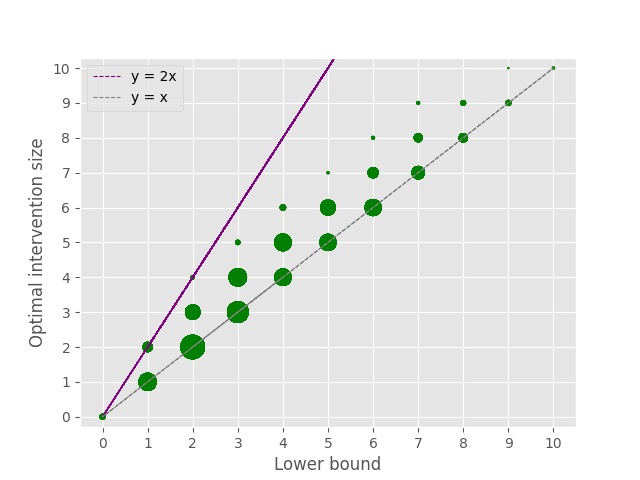}
    \vspace{\baselineskip}
    \caption{Comparison of the Optimal Intervention Set Size with our Universal
      Lower Bound}
    \label{fig:opt-vs-lower-bound}
\end{figure}
For this experiment, we generate $1000$ graphs from Erd\H{o}s-R\'enyi graph model $G(n,p)$:
for each of these graphs, the number of nodes $n$ is a random integer in $[5, 25]$ and
the connection probability $p$ is a random value in $[0.1, 0.3)$. These graphs
are then converted to DAGs without v-structures by imposing a random topological ordering
and adding extra edges if needed. To compute the optimal intervention size, we check
if a subset of nodes, $I$ of a DAG $D$ is such that $\mathcal{E}_{\mathcal{I}}(D) = D$, in
increasing order of the size of such subsets. Next, we compute the universal lower bound value
for each of these DAGs as given in \Cref{theorem:main-theorem-arbitrary-DAGs}.
In \Cref{fig:opt-vs-lower-bound}, we plot the optimal intervention size and our lower bound
for each of the generated DAGs. Thickness of the points is proportional to the number of
points landing at a coordinate. Notice that, all points lie between lines $y = x$ and $y = 2x$,
as implied by our theoretical results. Further, we can see that, a large fraction of points
are closer to the line $y = x$ compared to the line $y = 2x$, suggesting that our lower bound
is even tighter for many graphs.

\paragraph{Experiment 2}
\begin{figure}[ht]
    \centering
    \includegraphics[scale=0.7]{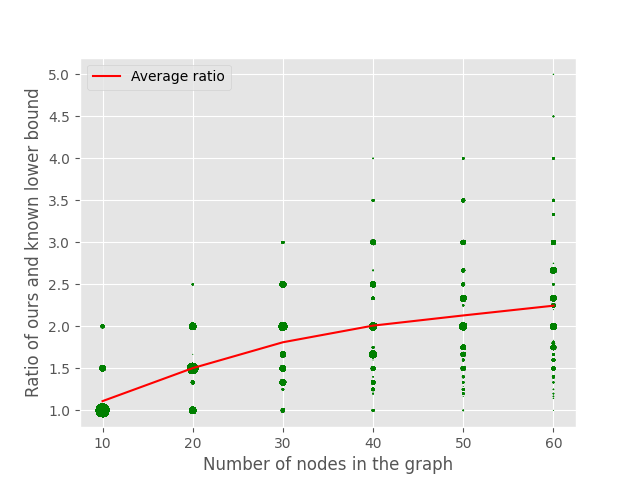}
    \vspace{\baselineskip}
    \caption{Comparison of our Universal Lower Bound with that of \cite{squires2020}}
    \label{fig:comparison-lower-bounds}
\end{figure}
For this experiment, we generate $1000$ random DAGs without v-structures for each size in
$\inb{10, 20, 30, 40, 50, 60}$ by fixing a perfect-elimination ordering of the
nodes and then adding edges (which are oriented according to the
perfect-elimination ordering) to the DAG making sure that there are no
v-structures, while trying to keep the size of each clique below $5$.
For each DAG, we compute the ratio of the two lower bounds. In
\Cref{fig:comparison-lower-bounds}, we plot each of these ratios in a scatter
plot with the $x$-axis representing the number of nodes of the DAG. Thickness of
the points is proportional to the number of DAGs having a particular value of
the ratio described above. We also plot the average of the ratios for each
different value of the number of nodes. We see that our lower bound can
sometimes be $\sim 5$ times of the lower bound of \cite{squires2020}.
Moreover, the average ratio has an increasing trend suggesting that our lower
bound is much better for this class of randomly generated DAGs.

\section{Conclusion}
We prove a strong universal lower bound on the minimum number of atomic
interventions required to fully learn the orientation of a DAG starting from its
MEC. For any DAG $D$, by constructing an explicit set of atomic interventions
that learns $D$ completely (starting with the MEC of $D$) and has size at most
twice of our lower bound for the MEC of $D$, we show that our universal lower
bound is tight up to a factor of two. We prove that our lower bound is better
than the best previously known universal lower bound \citep{squires2020} and
also construct explicit graph families where it is significantly better.  We
then provide empirical evidence that our lower bound may be stronger than what
we are able to prove about it: by conducting experiments on randomly generated
graphs, we demonstrate that our lower bound is often tighter (than what we have
proved), and also that it is often significantly better than the previous
universal lower bound \citep{squires2020}. We then illustrate that the notion of
\gcb{} orderings is also useful in handling the case of multi-node
interventions. An interesting direction for future work is to design
intervention sets of sizes close to our universal lower bound.  We note that in
contrast to the earlier work of \cite{squires2020}, whose lower bound proofs
were based on new sophisticated constructions, our proof is based on the simpler
notion of a \gcb{} ordering, which in turn is inspired from elementary ideas in
the theory of chordal graphs.  We expect that the notion of \gcb{} orderings may
also play an important role in future work on designing optimal intervention
sets.

\acks{We thank AISTATS reviewers for their comments and suggestions. PS
  acknowledges support from the Department of Atomic Energy, Government of
  India, under project no. RTI4001, from the Ramanujan Fellowship of SERB, from
  the Infosys foundation, through its support for the Infosys-Chandrasekharan
  virtual center for Random Geometry, and from Adobe Systems Incorporated via a
  gift to TIFR. The contents of this paper do not necessarily reflect the views
  of the funding agencies listed above.}

\newpage

\appendix
\section{Properties of \GCB{} Orderings: Omitted Proofs}
In this section, we provide the omitted proofs of the various properties of
\gcb{} orderings.

\subsection{Proof of \cref{lemma:sink-nodes}}
\label{sec:supp:proof-sink-nodes}
\begin{proof}[Proof of \cref{lemma:sink-nodes}]
  Let $C$ be a maximal clique of $\skel{D}$.  Since the induced subgraph $D[C]$
  is a DAG, there is at least one node $s$ in $D[C]$ with out-degree $0$.  Thus,
  for all $v \in C, v \neq s$ we have, $v \rightarrow s$, which implies that
  $C \setminus \inb{s} \subseteq \pa[D]{s}$. Now, $\pa[D]{s} \cup \inb{s}$ must
  be a clique as $D$ does not contain v-structures.  Thus, we must indeed have
  $\pa[D]{s} = C \setminus \inb{s}$ since $C$ is maximal.  We thus see that
  there is a unique $s \in C$ such that $C = \pa[D]{s} \cup \inb{s}$.

  Now, suppose, if possible that there exist distinct maximal cliques $C_1$ and
  $C_2$ of $\skel{D}$ such that $\sink[D]{C_1} = \sink[D]{C_2} = s$.  Since
  $C_1$ and $C_2$ are distinct maximal cliques, there must exist
  $a \in C_1 \setminus C_2, b \in C_2 \setminus C_1$ such that $a$ is not
  adjacent to $b$.  But then, since we have $a \in \pa[D]{s}$ and
  $b \in \pa[D]{s}$, we would have a v-structure $a \rightarrow s \leftarrow b$,
  which is a contradiction to the hypothesis that $D$ has no v-structures.
\end{proof}

\subsection{Proof of Lemma~\ref{lem:P1-ordering}}
\label{sec:supp:proof-p1-ordering}
We now provide the proof of \cref{lem:P1-ordering}. As stated before the
statement of the lemma, its proof follows from well-known ideas in the theory of
chordal graphs.  The following generalization of the definition of the clique
block property P1 of \cref{def:nice-ordering} will be useful in the proof.

\begin{definition}[\textbf{$A$-clique block
    ordering}]\label{supp:def:set-clique-block}
  Let $D$ be a DAG and $A$ a subset of vertices of $D$. Let $\sigma$ be a
  topological ordering of $D$.  Let the elements of $A$ be
  $a_1, a_2, \dots, a_k$, arranged so that $\sigma(a_i) < \sigma(a_j)$ whenever
  $i < j$.  Define $L_1^A(\sigma)$ to be the set of nodes $u$ which occur before
  or at the same position as $a_1$ in $\sigma$ i.e.,
  $\sigma(u) \leq \sigma(a_1)$. Similarly, for $2 \leq i \leq k$, define
  $L_i^A(\sigma)$ to be the set of nodes which occur in $\sigma$ before or at the
  same position as $a_i$, but strictly after $a_{i-1}$ (i.e.,
  $\sigma(a_{i-1}) < \sigma(u) \leq \sigma(a_i)$).
Then, $\sigma$ is said to be an \emph{$A$-clique block ordering} of $D$ if
  $\cup_{i=1}^kL_i^A(\sigma)$ is the set of all vertices of $D$, and for each
  $1\leq i \leq k$, $L_i^A(\sigma)$ is a (not necessarily maximal) clique in
  \skel{D}.
\end{definition}
The following observation is immediate with this definition.
\begin{observation}\label{supp:obv:equivalent-cb-property}
  Let $D$ be a DAG without v-structures, and let $A$ be the set of \sinkv{}
  vertices of $D$.  Then, a topological ordering $\sigma$ of $D$ satisfies
  property P1 of \cref{def:nice-ordering} if and only if $\sigma$ is an
  $A$-clique block ordering of $D$.
\end{observation}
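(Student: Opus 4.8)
The plan is to observe that the two definitions in question differ only in two superficial-looking ways, and that both differences dissolve once we use the specific structure of $A$. First I would note that when $A$ is the set of \sinkv{} vertices of $D$, the enumeration $a_1, \dots, a_k$ of $A$ in increasing $\sigma$-order used in \cref{supp:def:set-clique-block} is exactly the enumeration $s_1, \dots, s_r$ of \sinkv{} vertices used in property P1 of \cref{def:nice-ordering} (so that $k = r$; recall that the number of \sinkv{} vertices equals the number $r$ of maximal cliques by \cref{lemma:sink-nodes}). Consequently, for every $i$ the set $L_i^A(\sigma)$ is defined by exactly the same inequalities on $\sigma$ as $L_i(\sigma)$, so that $L_i^A(\sigma) = L_i(\sigma)$ as sets. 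Hence the clique requirement ``each $L_i^A(\sigma)$ is a clique in \skel{D}'' in \cref{supp:def:set-clique-block} is word-for-word the clique requirement of P1.

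Given this, the only remaining content in the equivalence is the extra condition appearing in \cref{supp:def:set-clique-block} but not in P1, namely that $\cup_{i=1}^k L_i^A(\sigma)$ is the set of all vertices of $D$. Since the sets $L_i^A(\sigma)$ together cover precisely the vertices $u$ with $\sigma(u) \leq \sigma(a_k)$, this condition is equivalent to requiring that the last vertex of $\sigma$ belong to $A$. I would therefore establish, as the one substantive step, that the last vertex $v$ of $\sigma$ is always a \sinkv{} vertex of $D$ (regardless of whether $\sigma$ satisfies P1). Being last in a topological ordering, $v$ has out-degree $0$ in $D$; since $D$ has no v-structures, $\pa[D]{v}$ is a clique, and hence $K \defeq \pa[D]{v} \cup \inb{v}$ is a clique in \skel{D}. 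The key point is that $K$ is in fact \emph{maximal}: any vertex adjacent to $v$ must, because $v$ has out-degree $0$, be a parent of $v$ and hence already lie in $K$. By \cref{lemma:sink-nodes}, $v = \sink[D]{K}$, so $v$ is a \sinkv{} vertex and thus $v \in A$.

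With this step in hand the equivalence is immediate. For the forward direction, if $\sigma$ satisfies P1 then each $L_i^A(\sigma) = L_i(\sigma)$ is a clique, and the covering condition holds automatically since the last vertex of $\sigma$ lies in $A$; hence $\sigma$ is an $A$-clique block ordering. For the converse, if $\sigma$ is an $A$-clique block ordering then each $L_i(\sigma) = L_i^A(\sigma)$ is a clique, which is exactly P1. The only place where any genuine graph-theoretic argument is needed is the verification that the last vertex of $\sigma$ is a \sinkv{} vertex, which I expect to be the sole (and mild) obstacle; everything else is a matching-up of identical definitions, consistent with the paper's remark that the observation is immediate.
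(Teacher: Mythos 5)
Your proof is correct and follows essentially the same route as the paper: both reduce the observation to checking that the blocks $L_i(\sigma)$ cover all vertices, the rest being a matching of identical definitions. The only (cosmetic) difference is in how that covering condition is verified — you show that the last vertex of $\sigma$ has out-degree $0$ and is therefore itself a sink vertex, whereas the paper notes that every vertex lies in some maximal clique $C = \{s\} \cup \pa[D]{s}$ and hence occurs no later than the sink vertex $s$ in $\sigma$; both verifications are valid.
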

\begin{proof}
  The ``if'' direction follows from the definition.  For the ``only if''
  direction, we note that since every vertex of $D$ must be contained in some
  maximal clique $C$ of \skel{D}, and since $C = \inb{s} \cup \pa[D]{s}$ for some
  \sinkv{} vertex $s$, it follows that every vertex of $D$ must lie in some
  $L_i(\sigma)$ if $\sigma$ satisfies the clique block property P1.
\end{proof}

We also note the following simple property of $A$-clique block orderings.
\begin{observation}\label{supp:obv:subset-property}
  Let $D$ be a DAG without v-structures, and let $\sigma$ be a topological
  ordering of $D$.  Let $A$ and $B$ be subsets of vertices of $D$ such that
  $A \subseteq B$.  If $\sigma$ is an $A$-clique block ordering of $D$, then it
  is also a $B$-clique block ordering of $D$.
\end{observation}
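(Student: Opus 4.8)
The plan is to exploit the fact that, since $A \subseteq B$, the blocks $L_j^B(\sigma)$ form a refinement of the blocks $L_i^A(\sigma)$. Writing $A = \inb{a_1, \dots, a_k}$ and $B = \inb{b_1, \dots, b_m}$ in increasing order of $\sigma$, the right end-points $\sigma(a_1) < \dots < \sigma(a_k)$ of the $A$-blocks are a subset of the right end-points $\sigma(b_1) < \dots < \sigma(b_m)$ of the $B$-blocks, precisely because every $a_i$ is also some $b_j$. First I would record this, together with the elementary remark that the property of $L_i^A(\sigma)$ being a clique in $\skel{D}$ (which holds by hypothesis) is inherited by every subset of $L_i^A(\sigma)$.

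The key step — and the only one that needs any care — is to show that each $B$-block $L_j^B(\sigma)$ is contained in a single $A$-block. I would prove this by contradiction. Suppose two vertices $u, u' \in L_j^B(\sigma)$ lay in distinct $A$-blocks, say $u \in L_i^A(\sigma)$ and $u' \in L_{i'}^A(\sigma)$ with $i < i'$. Then $\sigma(u) \leq \sigma(a_i)$ while $\sigma(u') > \sigma(a_{i'-1}) \geq \sigma(a_i)$, so $\sigma(u) \leq \sigma(a_i) < \sigma(u')$. Combined with $\sigma(b_{j-1}) < \sigma(u)$ and $\sigma(u') \leq \sigma(b_j)$ (with the usual convention when $j = 1$), this places $a_i$ strictly after $b_{j-1}$ and strictly before $b_j$ in $\sigma$. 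But $a_i \in A \subseteq B$, which contradicts $b_{j-1}$ and $b_j$ being consecutive elements of $B$. Hence $L_j^B(\sigma) \subseteq L_i^A(\sigma)$ for a single index $i$.

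The remaining steps are then immediate. Since $L_j^B(\sigma)$ is a subset of the clique $L_i^A(\sigma)$, it is itself a clique in $\skel{D}$. To see that $\cup_{j} L_j^B(\sigma)$ is all of $V$, I would observe that the hypothesis $\cup_i L_i^A(\sigma) = V$ forces $a_k$ to occupy the last position of $\sigma$; as $a_k \in B$, the maximal $B$-vertex $b_m$ also occupies the last position, so the $B$-blocks cover every vertex. Together these facts verify that $\sigma$ is a $B$-clique block ordering, as required. The main obstacle, such as it is, lies entirely in the refinement argument of the second paragraph; everything else is bookkeeping.
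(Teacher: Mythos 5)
Your proof is correct and rests on the same two facts as the paper's: the $B$-blocks refine the $A$-blocks, and a subset of a clique in $\skel{D}$ is again a clique. The only difference is organizational --- the paper adds one element of $B \setminus A$ at a time and inducts on $\abs{B \setminus A}$, splitting a single $A$-block into two pieces at each step, whereas you establish the refinement directly for arbitrary $B \supseteq A$ via the ``no element of $B$ strictly between consecutive $B$-positions'' contradiction; both routes are equally valid and of comparable length.
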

\begin{proof}
  When $A = B$, there is nothing to prove.  Thus, we can assume that there must
  exist a $b \in B \setminus A$.  We consider the case when $B = A \cup
  \inb{b}$. The general case then follows by straightforward induction on the size
  of $\abs{B \setminus A}$.

  For $1 \leq i \leq \abs{A}$, let $L_i^{A}(\sigma)$ be as in the definition of
  the $A$-clique block orderings.  Let $i$ be the unique index such that
  $b \in L_i^{A}(\sigma)$.  Now, for $j < i$, define
  $L_j^B(\sigma) \defeq L_j^A(\sigma)$, and for $j \geq i + 1$, define
  $L_{j+1}^B(\sigma) \defeq L_j^A(\sigma)$.  By construction, for
  $1 \leq j \leq i - 1$ and $i + 2 \leq j \leq \abs{A} + 1$, the $L_j^B(\sigma)$ are
  cliques in \skel{D}. Further define
  \begin{align*}
    L_i^B(\sigma) &\defeq \inb{u \in L_i^A(\sigma) \st \sigma(u) \leq \sigma(b)}\text{, and}\\
    L_{i+1}^B(\sigma) &\defeq L_i^A(\sigma) \setminus L_i^B(\sigma).
  \end{align*}
  Again, $L_i^B(\sigma)$ and $L_{i+1}^B(\sigma)$ are also cliques in $\skel{D}$
  since they are subsets of the clique $L_i(\sigma)$.  Further, by construction,
  $\cup_{j=1}^{|B|}L_j^B(\sigma) = \cup_{j=1}^{|A|}L_j^A(\sigma)$.  This shows
  that $\sigma$ is also a $B$-clique block ordering.
\end{proof}

We can now state the main technical lemma required for the proof of
\cref{lem:P1-ordering}.
\begin{lemma}\label{supp:lemma:ordering-technical}
  Let $D$ be a DAG without $v$-structures. Let $S$ be the set of \sinkv{}
  vertices of $D$.  Then, there exists a maximal clique $C$ of \skel{D} with the
  following two properties:
  \begin{enumerate}
  \item\label{supp:item:1} If $u \in C$ then $\pa[D]{u} \subseteq C$.  That is,
    if $u \in C$ and $v \not\in C$, then the edge $v \rightarrow u$ is not
    present in $D$.
  \item\label{supp:item:2} Let $S'$ be the set of \sinkv{} nodes of the induced DAG
    $D[V \setminus C]$, where $V$ is the set of nodes of $D$.  Then $S'$ is a
    subset of $S \setminus \inb{\sink[D]{C}}$.
  \end{enumerate}
\end{lemma}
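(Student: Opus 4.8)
The plan is to find the required maximal clique $C$ by locating a \emph{source} maximal clique in the natural partial order that the DAG structure imposes on the maximal cliques of $\skel{D}$. The intuition is that property~\ref{supp:item:1} asks for a clique $C$ that is ``closed under taking parents'', i.e.\ no edge enters $C$ from outside; such a clique behaves like an initial segment of the topological order, and its sink node $\sink[D]{C}$ is the natural ``last'' vertex of $C$.

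First I would fix any topological ordering $\rho$ of $D$ and, among all maximal cliques of $\skel{D}$, choose $C$ to be the one whose sink vertex $\sink[D]{C}$ comes \emph{earliest} in $\rho$. (Equivalently, one can choose $C$ so that $\max_{u \in C}\rho(u)$ is smallest, since by \Cref{lemma:sink-nodes} the sink vertex is exactly the out-degree-$0$ vertex of $D[C]$, hence the $\rho$-largest vertex of $C$.) To establish property~\ref{supp:item:1}, suppose toward contradiction that some $u \in C$ has a parent $v \notin C$, so the edge $v \rightarrow u$ is present in $D$. Because $C = \pa[D]{\sink[D]{C}} \cup \inb{\sink[D]{C}}$ is a clique and $D$ has no v-structures, I would argue that the vertex $v$ together with the neighbors it must share with $C$ forces the existence of another maximal clique $C'$ whose sink vertex precedes $\sink[D]{C}$ in $\rho$, contradicting the minimality in the choice of $C$. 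The key mechanism is that $v \rightarrow u$ and the absence of v-structures means $v$ is adjacent to all of $\pa[D]{u}$, and iterating this (or taking a maximal clique containing $\{v,u\}$) produces a competing clique with an earlier sink; I expect this is the main obstacle and will require care to handle the v-structure-freeness correctly at each step.

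For property~\ref{supp:item:2} I would show that $\sink[D]{C} \notin S'$ and that every sink vertex of $D[V \setminus C]$ is already a sink vertex of $D$. The first part is immediate since $\sink[D]{C} \in C$ is removed when forming $D[V \setminus C]$. For the second, let $x$ be a \sinkv{} node of $D[V \setminus C]$, so $x$ has out-degree $0$ in $D[V \setminus C]$; I need to rule out that $x$ has an out-edge in $D$ that was deleted, i.e.\ an edge $x \rightarrow c$ with $c \in C$. But such an edge $c \in \pa[D]{\cdots}$ together with property~\ref{supp:item:1} applied to $c$ would force $x \in C$ (since property~\ref{supp:item:1} says nothing enters $C$, but an edge $x \to c$ is an edge entering $C$ from $x$), giving a contradiction. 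Hence $x$ has out-degree $0$ in all of $D$, which means $x$ is the sink of the maximal clique $\pa[D]{x} \cup \inb{x}$, so $x \in S$. Combined with $x \neq \sink[D]{C}$, this gives $S' \subseteq S \setminus \inb{\sink[D]{C}}$, completing the argument.
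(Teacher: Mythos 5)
Your choice of $C$ is the gap, and it is not a detail that more care can repair. Take $D$ with vertices $1,2,3,4$ and edges $1\to 2$, $2\to 3$, $1\to 4$, $2\to 4$: this DAG has no v-structures, and $\rho=(1,2,3,4)$ is a topological ordering. Its maximal cliques are $\{1,2,4\}$, with $\sink[D]{\{1,2,4\}}=4$, and $\{2,3\}$, with $\sink[D]{\{2,3\}}=3$. Your rule selects $C=\{2,3\}$ (earliest sink), but the edge $1\to 2$ enters it from outside, so property~\ref{supp:item:1} fails; the clique the lemma needs here is $\{1,2,4\}$. In particular, the contradiction you hope to extract from an incoming edge $v\to u$ cannot exist in general: in this example there is no maximal clique whose sink precedes vertex $3$ in $\rho$. (Your criterion is also not ordering-invariant: under $(1,2,4,3)$ it would instead pick $\{1,2,4\}$.) The paper proceeds differently: it takes the \emph{first} vertex $v_1$ of the topological order, whose neighbours are all its children, and grows a clique through those children greedily in $\rho$-order, adding $c_i$ only when every previously added vertex is a parent of $c_i$ --- a maximum-cardinality-search step. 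The absence of v-structures then forces this clique to be maximal and closed under taking parents.

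Your argument for property~\ref{supp:item:2} also rests on a false premise: a \sinkv{} vertex of $H=D[V\setminus C]$ is the out-degree-zero vertex of $H$ \emph{restricted to its maximal clique}, not of $H$ itself. In the path $1\to 2\to 3$, vertex $2$ is a \sinkv{} vertex (for the clique $\{1,2\}$) yet has a child. So you cannot conclude that $x$ has out-degree $0$ in $H$, let alone in $D$, and the deduction that $x$ is the sink of $\pa[D]{x}\cup\{x\}$ collapses. The correct route, taken in the paper, is to let $C'$ be the maximal clique of $\skel{H}$ witnessing that $x$ is a \sinkv{} vertex of $H$, extend it to a maximal clique $C''$ of $\skel{D}$, note that $C''\cap(V\setminus C)=C'$ by maximality of $C'$ in $H$, and then use property~\ref{supp:item:1} to rule out $\sink[D]{C''}\in C$, which forces $\sink[D]{C''}=x$ and hence $x\in S$. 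The one step of yours that does survive is the observation that property~\ref{supp:item:1} forbids edges from $V\setminus C$ into $C$; that is exactly what makes this extension argument go through.
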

\begin{proof}
  As already alluded to before the statement of \cref{lem:P1-ordering}, the
  proof of \cref{supp:item:1} uses ideas that are very similar to the ``maximal
  cardinality search'' algorithm for chordal graphs (\citet{tarjan_simple_1984},
  see also Corollary 2 of \cite{wienobst_polynomial-time_2020}).  Fix an
  arbitrary topological ordering $\tau$ of $D$, and let $v_1 = \tau(1)$ be the
  top vertex in $\tau$.  Note that $v_1$ has no parents in $D$, so any vertices
  adjacent to $v_1$ in $D$ are children of $v_1$.  Let $C'$ be the set of these
  children of $v_1$ in $D$.  If $C'$ is empty, then $v_1$ is isolated in $D$ and
  we are done with the proof of \cref{supp:item:1} after taking $C = {v_1}$.
  So, assume that $C'$ is not empty, and let its elements be
  $c_1, c_2, \dots c_k$, arranged so that $\tau(c_i) < \tau(c_j)$ whenever
  $i < j$.  Now, define the sets $C'_i$, where $1 \leq i \leq k$ as follows.
  First, $C_1' \defeq \inb{v_1, c_1}$.  For $2 \leq i \leq k$,
  \begin{align}
    \label{eq:1}
    C_i' \defeq
    \begin{cases}
      C_{i-1}' \cup \inb{c_i} & \text{if $ C_{i-1}' \subseteq \pa[D]{c_i}$,}\\
      C_{i-1}' & \text{otherwise}.
    \end{cases}
  \end{align}
  Define $C \defeq C_{k}'$.  Note that by construction, $C$ is a clique. Note
  also the following property of this construction: for any $i$, $c_i \not\in C$
  if and only if there exists $1 \leq j < i$ such that $c_j \in C$ and $c_j$ is
  not adjacent to $c_i$ in $D$.

  We now claim that $C$ is also a maximal clique. For, if not, let $u \not\in C$
  be such that $u$ is adjacent to every vertex in $C$.  Then, we must have
  $u = c_i$ for some $i$ (since $v_1 \in C$, and only the children of $v_1$ are
  adjacent to $v_1$ in $D$).  But then, since $u = c_i \not \in C$, there must
  exist some $c_j$, $j < i$, such that $c_j \in C$ is not adjacent to $u = c_i$,
  which is a contradiction to the assumption of $u$ being adjacent to every vertex
  of $C$.

  We now claim that if $v \not \in C$, then for all $u \in C$, the edge
  $v \dir u$ is not present in $D$.  Suppose, if possible, that there exist
  $v \not \in C$ and $u \in C$ such that $v \dir u$ is present in $D$.  By the
  choice of $v_1$ as a top vertex in a topological order, we must have
  $u \neq v_1$.  Thus, $u$ must be a child of $v_1$ in $D$.  Suppose $u = c_i$,
  for some $i \in [k]$. Then, $v$ must also be a child of $v_1$, for otherwise
  $v \dir c_i \rdir v_1$ would be a v-structure in $D$.  Thus, $v = c_j$ for
  some $j < i$.  Since $c_j = v \not\in C$, there exists some $\ell < j$ such
  that $c_\ell \in C$ and $c_\ell$ and $v = c_j$ are not adjacent.  But then
  $c_\ell \dir c_i \rdir c_j$ is a v-structure in $D$, so we again get a
  contradiction.  This proves \cref{supp:item:1} of the lemma for the clique
  $C$.

  \Cref{supp:item:2} of the lemma trivially follows if $V \setminus C$ is empty,
  therefore, we are interested in the case when $V \setminus C$ is non-empty.
  Now consider the induced DAG $H \defeq D[V \setminus C]$. Since $D$ has no
  v-structures, neither does $H$.  Thus, by \cref{lemma:sink-nodes}, the
  \sinkv{} nodes of $H$ and the maximal cliques of \skel{H} are in one-to-one
  correspondence: for each maximal clique $C'$ of \skel{H}, there is a unique
  vertex $\sink[H]{C'}$ of $H$ such that
  $C' = \inb{\sink[H]{C'}} \cup \pa[H]{\sink[H]{C'}}$.

  Consider now a \sinkv{} vertex $s'$ of $H$.  There exists then a maximal
  clique $C'$ of \skel{H} such that $s' = \sink[H]{C'}$.  Also, since $H$ is an
  induced subgraph of $D$, there must exist a maximal clique $C''$ of \skel{D}
  such that $C'' \supseteq C'$.  In fact, we must further have
  $C'' \cap (V \setminus C) = C'$, for otherwise $C'$ would not be a maximal
  clique of $H$.  Let $t \defeq \sink[D]{C''}$.  We will show that $t = s'$.
  Note first that we cannot have $t \in C$, for then, by \cref{supp:item:1},
  $C'' = \inb{t} \cup \pa[D]{t}$ would be contained in $C$, and would not
  therefore contain $C' \subseteq V \setminus C$.  Thus, $t$ must be a node in
  $V \setminus C$.  But then $C'' \cap (V \setminus C) = C'$ implies that $t$
  must in fact be in $C'$, and must therefore be equal to $s' = \sink[H]{C'}$.
  We thus see that any \sinkv{} vertex $s'$ of $H$ is also a \sinkv{} vertex of
  $D$.  \Cref{supp:item:2} of the lemma then follows by noting that
  $\sink[D]{C}$ is the only \sinkv{} vertex of $D$ not contained in
  $V \setminus C$.
\end{proof}

We are now ready to prove \cref{lem:P1-ordering}.
\begin{proof}[Proof of \cref{lem:P1-ordering}]
  We prove this claim by induction on the number of nodes in $D$.  The claim of
  the lemma is trivially true when $D$ has only one node.  Now, fix $n > 1$, and
  assume the induction hypothesis that every DAG without v-structures which has
  at most $n - 1$ nodes admits a topological ordering that satisfies the clique
  block property P1 of \cref{def:nice-ordering}.  We will complete the induction
  by showing that if $D$ is a DAG without v-structures which has $n$ nodes,
  then $D$ also admits a topological ordering that satisfies the clique block
  property P1 of \cref{def:nice-ordering}.

  Let the maximal clique $C$ of $D$ be as guaranteed by
  \cref{supp:lemma:ordering-technical} above.  If all the nodes of $D$ are
  contained in $C$, then the total ordering on the vertices of the clique $C$ in
  $D$ trivially satisfies the clique block property.  Therefore, we assume
  henceforth that $V \setminus C$ is non-empty.  Thus, the induced DAG
  $H \defeq D[V \setminus C]$ is a DAG on at most $n - 1$ nodes. Let $S'$ be the
  set of \sinkv{} nodes of $H$, and let $S$ be the set of \sinkv{} nodes of
  $D$. By the induction hypothesis, $H$ has a topological ordering $\tau$ which
  satisfies the clique block property.  Equivalently, by
  \cref{supp:obv:equivalent-cb-property}, $\tau$ is an $S'$-clique block
  ordering of $H$.

  Consider now the ordering $\sigma$ of $D$ obtained by listing first the
  vertices of the clique $C$ in the total order imposed on them by the DAG $D$,
  followed by the vertices of $V \setminus C$ in the order specified by
  $\tau$.  By \cref{supp:item:1} of
  \cref{supp:lemma:ordering-technical}, there is no directed edge in $D$ from a
  vertex in $V \setminus C$ to a vertex in $C$, so we get that $\sigma$ is in
  fact a topological ordering of $D$.

  Define $T = S' \cup {\sink[D]{C}}$.  We now observe that $\sigma$ is a
  $T$-clique block ordering of $D$, with $L_1^T(\sigma) = C$ and
  $L_{i + 1}^T(\sigma) = L_i^{S'}(\tau)$, for $1 \leq i \leq \abs{S'}$.  By
  \cref{supp:item:2} of \cref{supp:lemma:ordering-technical}, we have
  $T \subseteq S$.  Thus, by \cref{supp:obv:subset-property}, $\sigma$ is also
  an $S$-clique block ordering of $D$, and therefore (by
  \cref{supp:obv:equivalent-cb-property}) satisfies the clique block property P1
  of \cref{def:nice-ordering}.
\end{proof}

\subsection{Proof of \cref{prop:intermediate}}
\label{sec:supp:proof-prop-intermediate}
\begin{proof}[Proof of \cref{prop:intermediate}]  We use the
  same notation as in the proof of \cref{lemma:good-topological-ordering}.

  \begin{enumerate}
  \item Since $y \in Y_a$, there exists $u \in C^a$ such that $u$ is not
    adjacent to $y$ in $D$.  But then, if $z \in S^a$, we get the v-structure
    $u \rightarrow z \rdir y$, which is a contradiction to $D$ not having any
    v-structures.  This proves \cref{item:1} of
    the proposition.
  \item Consider the clique $C \defeq \inb{x} \cup \pa[D]{x}$.  There exists a
    maximal clique $C'$ in \skel{D} such that $C' \supsetneq C$, since $x$ is
    not a \sinkv{} node. Set $y \defeq \sink[D]{C'}$.  Since $x$ is not a
    \sinkv{} node in $D$, we thus have $x \in C \subseteq \pa[D]{y}$, so that
    $y$ is a child of $x$.  We also have $y \in S^a$ since
    $C^a \subseteq \inb{x} \cup \pa[D]{x} = C \subseteq \pa[D]{y}$, where the
    first inclusion comes from the assumption that $x \in \inb{a} \cup S^a$.
    The fact that $S^a$ is non-empty follows by applying the item with $x = a$,
    and noticing that, by construction, $a$ is not a \sinkv{} vertex in $D$.
    This follows since $a \in L_i(\sigma)$ has a child (namely, $b$) in
    $L_i(\sigma)$, while by the definition of the $L_i$, only the last vertex in
    $L_i(\sigma)$ is a \sinkv{} node of $D$. This proves \cref{item:2} of the
    proposition.
  \item Since $x$ is a \sinkv{} node in $H = D[S^a]$,
    $C \defeq \inb{x} \cup \pa[H]{x}$ is a maximal clique in \skel{H}, and
    thereby a clique in \skel{D}.  However, note that $C' \defeq C \cup C^a$ is
    also a clique in \skel{D}, since both $C$ and $C^a$ are cliques in \skel{D},
    and as $C \subseteq S^a$, every vertex in $C$ is adjacent to every vertex in
    $C^a$.  Thus, there exists a maximal clique $C''$ in \skel{D} such that
    $C' \subseteq C''$.

    Consider $y \defeq \sink[D]{C''}$.  Suppose, if possible, that $x \neq y$.
    Then we must have $x \in \pa[D]{y}$ (since $x \in C''$), and also that
    $C^a \subseteq \pa[D]{y}$ (as $C^a \subseteq C''$).  Thus, we must have
    $y \in S^a$.  But then, we get that $\inb{y} \cup C \subseteq C'' \cap S^a$,
    which contradicts the assumption that $C$ is a maximal clique in $H$.  Thus,
    we must have $x = y$, so that $x$ is a \sinkv{} node in $D$.  This proves
    \cref{item:3} of the proposition.
\end{enumerate}
\end{proof}

\section{Some folklore results}
\label{sec:some-folkl-results}

We collect here some well known and folklore results concerning
$\mathcal{I}$-essential graphs that are used in our proofs.  We begin by
restating the characterization by \cite{HB12} of $\mathcal{I}$-essential graphs
(\cref{thm:hb-i-essential}).  Recall that in the main paper, this
characterization was only used in the setting of DAGs without v-structures, in
the proof of \cref{theorem:main-theorem}.  Here, we will need to use it in the
setting of general graphs. (\Cref{fig:strong-protection} in the statement of the
theorem can be found on page~\pageref{fig:strong-protection}).  Recall also that
we always assume that every intervention set contains the empty intervention,
but the empty intervention is not counted in the size of an intervention set.

\hbcharthm*

\begin{remark}\label{supp:rem:hauser-buhlman}
  Strictly speaking, Theorem 18 of \cite{HB12} only identifies the class of all
  $\mathcal{I}$-essential graphs. However, it is well known, and follows easily
  from their results that $H$ satisfies all the four items in the statement of
  \cref{thm:hb-i-essential} along with the additional conditions in the theorem
  (i.e., $H$ has the same skeleton as $D$, all directed edges of $H$ are
  directed in the same direction as in $D$, and all v-structures of $D$ are
  directed in $H$), if and only if $H = \mathcal{E}_{\mathcal{I}}(D)$.
\end{remark}
For completeness, we provide a proof of the above folklore remark in
Section~\ref{supp:sec:remark-proof}. Here, we proceed to state a couple of easy
and folklore corollaries of this characterization.  The first of these,
\cref{corollary:supp:HB14Lemma1} below, has a proof similar to the proof of
Lemma 1 of \cite{hauser_two_2014}.  For completeness, we provide this proof in
Section~\ref{supp:sec:separate-chain-corollary}.
\begin{corollary}
\label{corollary:supp:HB14Lemma1}
  Let $D$ be an arbitrary DAG, and let $\mathcal{I}$ be any intervention set
  containing the empty set.  Let $CC$ be the set of chordal chain components of
  the essential graph $\mathcal{E}(D) = \mathcal{E}_{\inb{\emptyset}}(D)$ of $D$.
  For each $S \in CC$, define
  $\mathcal{I}_S \defeq \inb{I \cap S \st I \in \mathcal{I}}$ to be the
  projection of the intervention set $\mathcal{I}$ to $S$.  Consider vertices
  $a$ and $b$ that are adjacent in $D$.  Then, the edge between $a$ and $b$ is
  directed in the $\mathcal{I}$-essential graph $\mathcal{E}_{\mathcal{I}}(D)$
  if and only if one of the following conditions is true:
  \begin{enumerate}
  \item $a$ and $b$ are elements of distinct chain components of the
    observational essential graph $\mathcal{E}(D)$ (so that the edge between $a$
    and $b$ is already directed in $\mathcal{E}(D)$).
  \item $a$ and $b$ are in the same chain components $S \in CC$ of
    $\mathcal{E}(D)$ and the edge between $a$ and $b$ is directed in the
    $\mathcal{I}_S$-essential graph $\mathcal{E}_{\mathcal{I}_s}(D[S])$ of the
    induced DAG $D[S]$.
  \end{enumerate}
  In particular, $\mathcal{E}_{\mathcal{I}}(D) = D$ if and only if
  $\mathcal{E}_{\mathcal{I}_S}(D[S]) = D[S]$ for every $S \in CC$.
\end{corollary}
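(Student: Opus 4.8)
The plan is to construct an explicit candidate graph $H$ for $\mathcal{E}_{\mathcal{I}}(D)$ and verify that it meets the characterization of \cref{thm:hb-i-essential} together with the side conditions of \cref{supp:rem:hauser-buhlman}; by the uniqueness part of that remark, $H$ must then equal $\mathcal{E}_{\mathcal{I}}(D)$, and both assertions of the corollary follow by inspection. I would build $H$ on the skeleton $\skel{D}$ by orienting every across-component edge of $\mathcal{E}(D)$ (equivalently, every edge already directed in the observational essential graph) as in $D$, and orienting the edges inside each chain component $S \in CC$ exactly as in $\mathcal{E}_{\mathcal{I}_S}(D[S])$. By construction $H$ has the skeleton of $D$, every directed edge of $H$ agrees in direction with $D$, the undirected edges of $H$ lie entirely inside chain components of $\mathcal{E}(D)$, and $H[S] = \mathcal{E}_{\mathcal{I}_S}(D[S])$ for every $S$. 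I would also record the preliminary fact that every v-structure of $D$ survives in $H$: both of its edges are directed in $\mathcal{E}(D)$, hence are across-component edges left intact by the construction.

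I would then verify the four items of \cref{thm:hb-i-essential} for $H$. For \cref{item:chain-chordal}, the undirected edges of $H$ lie within chain components of $\mathcal{E}(D)$, where they form the chain components of the various $\mathcal{E}_{\mathcal{I}_S}(D[S])$ (which are chordal); and since $\mathcal{E}(D)$ induces a DAG order on $CC$ along which every across-component edge is compelled, a directed cycle of $H$ cannot leave a single component $S$, so it would be a directed cycle of $\mathcal{E}_{\mathcal{I}_S}(D[S])$, which is impossible. For \cref{item:directed-by-intervention}, an edge $a \rightarrow b$ with $\abs{J \cap \inb{a,b}} = 1$ for some $J \in \mathcal{I}$ is either across-component (hence already directed in $H$) or has $a, b \in S$, in which case $J \cap S \in \mathcal{I}_S$ separates $a$ and $b$ and the edge is directed in $H[S] = \mathcal{E}_{\mathcal{I}_S}(D[S])$. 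For \cref{item:directed-by-Meek-rule-1}, a putative induced $x \rightarrow y \undir z$ in $H$ (with $x, z$ non-adjacent) has $y \undir z$ inside some component $S$; if $x \in S$ too this contradicts Meek rule 1 for $H[S] = \mathcal{E}_{\mathcal{I}_S}(D[S])$, and otherwise $x \rightarrow y$ is across-component and the same configuration appears in $\mathcal{E}(D)$, contradicting Meek rule 1 there.

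The crux is \cref{item:strong-i-protection}: every directed edge of $H$ must be strongly $\mathcal{I}$-protected. For a within-$S$ edge this is immediate, since $H[S] = \mathcal{E}_{\mathcal{I}_S}(D[S])$: a separating intervention $J \cap S \in \mathcal{I}_S$ lifts to $J \in \mathcal{I}$, and any of the four protecting configurations of \Cref{fig:strong-protection} present in $\mathcal{E}_{\mathcal{I}_S}(D[S])$ is also present as an induced subgraph of $H$. The hard case is an across-component edge $a \rightarrow b$; it is directed in $\mathcal{E}(D)$, hence strongly protected there by one of the four configurations. Configurations (i)--(iii) use only edges directed in $\mathcal{E}(D)$, i.e. across-component edges, which survive unchanged in $H$, so they transfer directly. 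The remaining configuration (iv) involves two edges $a \undir c_1$ and $a \undir c_2$ inside a component $S$, where $c_1, c_2$ are parents of $b$ and are non-adjacent, and these may become oriented in $H$. Here I would argue case-by-case on the orientations that $\mathcal{E}_{\mathcal{I}_S}(D[S])$ assigns: since $\mathcal{E}_{\mathcal{I}_S}(D[S])$ has no v-structures (as $D[S]$ has none), we cannot have $c_1 \rightarrow a \leftarrow c_2$; and Meek rule 1 inside $H[S]$ forbids one of these edges pointing into $a$ while the other stays undirected (that would be an induced $c_i \rightarrow a \undir c_j$ with $c_i$ and $c_j$ non-adjacent). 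The only remaining options are that both edges stay undirected, so configuration (iv) still protects $a \rightarrow b$, or that some edge is oriented $a \rightarrow c_i$, producing the triangle $a \rightarrow c_i \rightarrow b$ with $a \rightarrow b$, which is configuration (iii). Thus $a \rightarrow b$ is strongly protected in $H$ in every case, and this case analysis is where essentially all the work lies.

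With all four items and the side conditions (same skeleton, directions agreeing with $D$, v-structures of $D$ preserved) verified, \cref{supp:rem:hauser-buhlman} yields $H = \mathcal{E}_{\mathcal{I}}(D)$. The first claim is then read off $H$: an edge between adjacent $a, b$ is directed in $\mathcal{E}_{\mathcal{I}}(D)$ exactly when it is across-component (condition 1) or is a within-$S$ edge directed in $\mathcal{E}_{\mathcal{I}_S}(D[S])$ (condition 2). For the final ``in particular'' statement, note that $\mathcal{E}_{\mathcal{I}}(D)$ shares the skeleton of $D$ and directs each of its edges in the $D$-direction, so $\mathcal{E}_{\mathcal{I}}(D) = D$ holds iff every edge is directed; by the equivalence this happens iff every within-$S$ edge is directed in $\mathcal{E}_{\mathcal{I}_S}(D[S])$ for each $S \in CC$, i.e. iff $\mathcal{E}_{\mathcal{I}_S}(D[S]) = D[S]$ for every $S \in CC$.
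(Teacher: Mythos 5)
Your proposal is correct and follows essentially the same route as the paper's proof: construct the candidate graph $H$ by keeping the across-component orientations of $\mathcal{E}(D)$ and orienting each chain component $S$ as $\mathcal{E}_{\mathcal{I}_S}(D[S])$, verify the four conditions of \cref{thm:hb-i-essential}, and invoke the uniqueness in \cref{supp:rem:hauser-buhlman}. Your case analysis for strong protection of an across-component edge in configuration~(iv) — ruling out $c_1 \rightarrow a \leftarrow c_2$ and the mixed Meek-rule-1 violations, leaving either configuration~(iv) intact or a fallback to configuration~(iii) — matches the paper's argument almost verbatim.
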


The second corollary formalizes the intuitive fact that ``breaking up'' an
intervention $I$ into two smaller interventions provides at least as much
``information'' as the intervention $I$ itself.  Again, for the sake of
completeness, we provide the proof in Section~\ref{sec:proof-coroll-partition}.

\begin{corollary}
  \label{cor:partition-intervention}
  Let $D$ be an arbitrary DAG, and let $\mathcal{I}$ be any intervention set
  containing the empty set. Let $\mathcal{I}'$ be an intervention set obtained
  from $\mathcal{I}$ by ``breaking up'' a non-empty intervention in
  $\mathcal{I}$: formally,
  $\mathcal{I}' = \inp{\mathcal{I} \setminus \inb{I}} \cup \inb{I^1, I^2} $
  where $I^1$, $I^2$ are distinct non-empty sets such that $I = I^1 \cup I^2$.
  Then, any edge that is directed in $\mathcal{E}_{\mathcal{I}}(D)$ is also directed in
  $\mathcal{E}_{\mathcal{I}'}(D)$.
\end{corollary}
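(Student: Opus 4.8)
The plan is to reduce the statement to a monotonicity property of $\mathcal{I}$-Markov equivalence classes, and then verify that property using the characterization of $\mathcal{I}$-Markov equivalence recalled in \citep[Theorem 10]{HB12}. Recall that $\mathcal{E}_{\mathcal{I}}(D)$ is the representative of the class of all DAGs that are $\mathcal{I}$-Markov equivalent to $D$, so that an edge is directed $a \rightarrow b$ in $\mathcal{E}_{\mathcal{I}}(D)$ if and only if every DAG that is $\mathcal{I}$-Markov equivalent to $D$ contains $a \rightarrow b$. Writing $[D]_{\mathcal{I}}$ for the set of DAGs $\mathcal{I}$-Markov equivalent to $D$, it therefore suffices to prove the inclusion $[D]_{\mathcal{I}'} \subseteq [D]_{\mathcal{I}}$: once this is established, any edge directed in $\mathcal{E}_{\mathcal{I}}(D)$ is directed the same way in every member of $[D]_{\mathcal{I}} \supseteq [D]_{\mathcal{I}'}$, hence in $\mathcal{E}_{\mathcal{I}'}(D)$ as well.

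To prove $[D]_{\mathcal{I}'} \subseteq [D]_{\mathcal{I}}$, I would fix an arbitrary $D'$ that is $\mathcal{I}'$-Markov equivalent to $D$ and show it is $\mathcal{I}$-Markov equivalent to $D$. For a set $S$ of vertices, let $D^{(S)}$ denote the DAG obtained from $D$ by deleting all edges of $D$ whose head lies in $S$. By the characterization in \citep[Theorem 10]{HB12}, $D'$ is $\mathcal{I}$-Markov equivalent to $D$ precisely when $D'^{(S)}$ and $D^{(S)}$ are Markov equivalent (same skeleton and same v-structures) for every $S \in \mathcal{I}$. Since $\mathcal{I} \setminus \inb{I} \subseteq \mathcal{I}'$, the hypothesis that $D'$ is $\mathcal{I}'$-Markov equivalent to $D$ immediately yields this for every $S \in \mathcal{I} \setminus \inb{I}$. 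The entire content of the proof is therefore the single remaining case $S = I = I^1 \cup I^2$, for which the available facts are that $D'$ and $D$ are Markov equivalent (from $\emptyset \in \mathcal{I}'$) and that $D'^{(I^1)}$, $D^{(I^1)}$ are Markov equivalent, as are $D'^{(I^2)}$, $D^{(I^2)}$.

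Thus the core of the argument is the combinatorial claim: if $D_1$, $D_2$ are Markov equivalent, as are $D_1^{(I^1)}$, $D_2^{(I^1)}$ and $D_1^{(I^2)}$, $D_2^{(I^2)}$, then $D_1^{(I)}$ and $D_2^{(I)}$ are Markov equivalent, where $I = I^1 \cup I^2$. For the skeletons, note that an edge with both endpoints in $I$ is deleted from both $D_1^{(I)}$ and $D_2^{(I)}$, an edge with no endpoint in $I$ is deleted from neither, and for an edge with exactly one endpoint $v \in I$ (say $v \in I^j$) being deleted by $I$ is equivalent to being deleted by $I^j$; agreement of the skeletons of $D_1^{(I^j)}$ and $D_2^{(I^j)}$ then forces the two skeletons to agree. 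The subtle part---which I expect to be the main obstacle---is matching the v-structures, because deleting the edge between two parents of a node can \emph{create} new v-structures in $D^{(I)}$. Here I would take a v-structure $a \rightarrow c \leftarrow b$ of $D_1^{(I)}$ (so $c \notin I$) and split into two cases. If $a$ and $b$ are non-adjacent already in $D_1$, then the v-structure is present in $D_1$, hence in $D_2$ by Markov equivalence, and survives into $D_2^{(I)}$ since $c \notin I$. If instead $a, b$ are adjacent in $D_1$ with the connecting edge cut by $I$, then its head lies in $I$---say it is $a$, with $a \in I^j$---and one checks that $a \rightarrow c \leftarrow b$ is already a v-structure of $D_1^{(I^j)}$ (the $a$--$b$ edge is cut since its head $a \in I^j$, while the two edges into $c$ survive because $c \notin I^j$); transporting this v-structure to $D_2^{(I^j)}$ via Markov equivalence of $D_1^{(I^j)}$ and $D_2^{(I^j)}$, and then checking that it persists in $D_2^{(I)}$ (its $a$--$b$ edge is cut by $I^j \subseteq I$, and $c \notin I$), completes this case. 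By symmetry in $D_1, D_2$, the v-structures of $D_1^{(I)}$ and $D_2^{(I)}$ coincide, establishing the claim and hence the corollary.
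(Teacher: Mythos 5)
Your proof is correct, but it takes a genuinely different route from the paper's. The paper argues syntactically: it fixes a topological ordering of $D$, picks a minimal counterexample edge $a \rightarrow b$ (directed in $\mathcal{E}_{\mathcal{I}}(D)$ but not in $\mathcal{E}_{\mathcal{I}'}(D)$), and then runs through the five ways $a \rightarrow b$ could be strongly $\mathcal{I}$-protected in $\mathcal{E}_{\mathcal{I}}(D)$ (an intervention separating $a$ and $b$, or one of the four configurations of Figure~\ref{fig:strong-protection}), deriving in each case a violation of the characterization of $\mathcal{E}_{\mathcal{I}'}(D)$ given in Theorem~\ref{thm:hb-i-essential}. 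You instead argue semantically: you show the stronger statement that the $\mathcal{I}'$-Markov equivalence class of $D$ is contained in its $\mathcal{I}$-Markov equivalence class, from which monotonicity of directed edges is immediate by the definition of the essential graph as the union of the class. All your work is then concentrated in one clean combinatorial lemma about intervention graphs $D^{(S)}$ --- that agreement of skeletons and v-structures for $S \in \{\emptyset, I^1, I^2\}$ forces agreement for $S = I^1 \cup I^2$ --- and your handling of the delicate point (v-structures that are \emph{created} by deleting the edge between two parents) is correct: such a v-structure already appears in $D_1^{(I^j)}$ for the $I^j$ containing the head of the deleted edge, and transports via the Markov equivalence of $D_1^{(I^j)}$ and $D_2^{(I^j)}$. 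Your approach buys a stronger conclusion (containment of equivalence classes, which iterates to arbitrary refinements of an intervention) and a more conceptual explanation of \emph{why} the corollary holds, at the cost of invoking the Markov-equivalence characterization of \cite{HB12} (their Theorem 10), which this paper quotes only informally rather than as a formally restated tool; the paper's proof stays entirely within the machinery of Theorem~\ref{thm:hb-i-essential} that it has already set up and reuses elsewhere. One cosmetic remark: in your second v-structure case the parenthetical ``its $a$--$b$ edge is cut by $I^j \subseteq I$'' should be justified by noting that $a$ and $b$ are adjacent in $D_2$ (same skeleton as $D_1$) but non-adjacent in $D_2^{(I^j)}$, so the head of that edge in $D_2$ lies in $I^j$ and hence in $I$; this is exactly the argument you sketch, so it is a matter of wording, not a gap.
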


\section{Other Omitted Proofs}
\subsection{Proof of \cref{theorem:main-theorem-arbitrary-DAGs}}
\label{sec:supp:proof:main-arbitrary-DAGS}
\begin{proof}[Proof of \cref{theorem:main-theorem-arbitrary-DAGs}]
  \cref{corollary:supp:HB14Lemma1} says that an intervention set $\mathcal{I}$
  learns $D$ starting with $\mathcal{E}(D)$ if and only if
  $\mathcal{E}_{\mathcal{I}_S}(D[S]) = D[S]$ for every $S \in CC$. If
  $\mathcal{I}$ is a set of atomic interventions, then for each
  $I \in \mathcal{I}$, $\abs{I \cap S} = 0$ for all but one of the $S \in
  CC$. This means that if an intervention set $\mathcal{I}$ of atomic
  interventions is such that $\mathcal{E}_{\mathcal{I}}(D) = D$, then
  $\abs{\mathcal{I}} = \sum_{S \in CC} \abs{\mathcal{I_S}}$, and
  $\mathcal{E}_{\mathcal{I}_S}(D[S]) = D[S]$ for every $S \in CC$, where
  $\mathcal{I_S}$ is a set of atomic interventions defined as in
  \cref{corollary:supp:HB14Lemma1}.  By definition of $\mathcal{E}(D)$, $D[S]$
  is a DAG without v-structures for every $S \in CC$.  Thus, by
  \cref{theorem:main-theorem} we have,
  $\abs{\mathcal{I_S}} \geq \ceil{\frac{\abs{S} - r(S)}{2}}$ which implies,
  \[
    \abs{\mathcal{I}} \geq \sum_{S \in CC} \ceil{\frac{\abs{S} - r(S)}{2}} \geq
    \ceil{\sum_{S \in CC} {\frac{\abs{S} - r(S)}{2}}} = \ceil{\frac{n -
        r}{2}}.\] This completes the proof.
\end{proof}
\subsection{Proof of \cref{thm:tight-lower-bound-general}}
\label{sec:supp:general-DAGs-tight-lower-bound}
Here we restate \cref{thm:tight-lower-bound-general} and provide its proof.
\begin{theorem}[Restatement of \cref{thm:tight-lower-bound-general}]
  \label{theorem:alg-sink-nodes-arbitrary-DAGs}
  Let $D$ be an arbitrary DAG and let $\mathcal{E}(D)$ be the chain graph with
  chordal chain components representing the MEC of $D$. Let $CC$
  denote the set of chain components of $\mathcal{E}(D)$, and $r(S)$ the
  number of maximal cliques in the chain component $S \in CC$. Then, there exists
  a set $I$ of atomic interventions of size at most
  $\sum_{S \in CC}\left(\abs{S} - r(S)\right) = n - r$,
  such that $I$ fully orients $\mathcal{E}(D)$ (i.e., $\mathcal{E}_I(D) = D$),
  where $n$ is the number of nodes in $D$, and $r$ is the total number of maximal
  cliques in the chordal chain components of $\mathcal{E}(D)$ (including chain
  components consisting of singleton vertices).
\end{theorem}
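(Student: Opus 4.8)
The plan is to reduce the general case to the v-structure-free case already settled in \cref{theorem:correctness-alg-sink-nodes}, exploiting the fact that fully orienting $\mathcal{E}(D)$ is equivalent to orienting each of its chordal chain components \emph{separately}. The precise form of this reduction is \cref{corollary:supp:HB14Lemma1}, which states that $\mathcal{E}_{\mathcal{I}}(D) = D$ if and only if $\mathcal{E}_{\mathcal{I}_S}(D[S]) = D[S]$ for every chain component $S \in CC$, where $\mathcal{I}_S$ is the projection of the intervention set $\mathcal{I}$ onto $S$.

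First I would recall that, by the defining properties of the essential graph, each induced DAG $D[S]$ for $S \in CC$ is a DAG without v-structures, so \cref{theorem:correctness-alg-sink-nodes} applies to it. Applying that theorem to each $S$ yields a set $I_S \subseteq S$ of atomic interventions (namely the non-\sinkv{} nodes of $D[S]$) with $\abs{I_S} \leq \abs{S} - r(S)$ and $\mathcal{E}_{\inb{\emptyset, I_S}}(D[S]) = D[S]$. I would then define $I \defeq \bigcup_{S \in CC} I_S$. Since the chain components partition the vertices of $D$, the sets $I_S$ are pairwise disjoint, so $\abs{I} = \sum_{S \in CC} \abs{I_S} \leq \sum_{S \in CC}\left(\abs{S} - r(S)\right) = n - r$. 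Note that a singleton chain component $S = \inb{v}$ contributes $\abs{S} - r(S) = 0$ and no interventions, which is consistent with both sides of the bound.

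Finally I would verify $\mathcal{E}_I(D) = D$ through \cref{corollary:supp:HB14Lemma1}. The key check is that, writing $\mathcal{I} \defeq \inb{\emptyset} \cup \inb{\inb{v} \st v \in I}$, the projection $\mathcal{I}_S = \inb{I' \cap S \st I' \in \mathcal{I}}$ of $\mathcal{I}$ onto a component $S$ equals exactly $\inb{\emptyset, I_S}$: an atomic intervention $\inb{v}$ with $v \in I_{S'}$ for $S' \neq S$ projects to $\emptyset$, while those with $v \in I_S$ project to themselves. Hence $\mathcal{E}_{\mathcal{I}_S}(D[S]) = D[S]$ for every $S \in CC$ (trivially so when $\abs{S} = 1$), and \cref{corollary:supp:HB14Lemma1} then gives $\mathcal{E}_{\mathcal{I}}(D) = D$.

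The main obstacle here is essentially bookkeeping rather than any new combinatorial idea: one must confirm that the projection of the global atomic intervention set onto each chain component coincides with the per-component intervention set $I_S$, so that the per-component guarantees of \cref{theorem:correctness-alg-sink-nodes} transfer cleanly through \cref{corollary:supp:HB14Lemma1}, and that the sizes add up to $n - r$. Once these disjointness and projection facts are in place, no substantive difficulty remains beyond the v-structure-free case already established.
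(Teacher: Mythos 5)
Your proposal is correct and follows essentially the same route as the paper's proof: apply \cref{theorem:correctness-alg-sink-nodes} to each chordal chain component $D[S]$ (which has no v-structures), take the union of the resulting per-component intervention sets, and invoke \cref{corollary:supp:HB14Lemma1} to glue the components back together. Your additional check that the projection of the global atomic intervention set onto each $S$ coincides with $\inb{\emptyset, I_S}$ is a detail the paper leaves implicit, but it is the right verification and does not change the argument.
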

\begin{proof}
  By definition of $\mathcal{E}(D)$, $D[S]$ is a DAG without v-structures for
  every $S \in CC$. \cref{theorem:correctness-alg-sink-nodes} therefore implies
  that for each $S \in CC$ there is a set $\mathcal{I}_S$ of atomic
  interventions such that $\abs{\mathcal{I}_S} \leq \abs{S} - r(S)$ and
  $\mathcal{E}_{\mathcal{I}_S}(D[S]) = D[S]$. Now, let
  $\mathcal{I} = \cup_{S \in CC} \mathcal{I}_S$.
  $\mathcal{E}_{\mathcal{I}}(D) = D$ by \cref{corollary:supp:HB14Lemma1}, and
  $\abs{\mathcal{I}} = \sum_{S \in CC}\abs{\mathcal{I}_S}$, which means
  $\abs{\mathcal{I}} \leq \sum_{S \in CC} (\abs{S} - r(S)) = n - r$.  This shows
  that there is a set of atomic interventions of size at most $n - r$ which
  fully orients $\mathcal{E}(D)$.
\end{proof}

\subsection{Proof of \cref{lemma:clique-number-bound}}
\label{sec:supp:proof-clique-number-bound}
\begin{proof}[Proof of \cref{lemma:clique-number-bound}]
  Let $C$ be a (necessarily maximal) clique of $G$ of size $\omega$.  Since $C$
  is a maximal clique of the chordal graph $G$, there exists a perfect
  elimination ordering $\sigma$ of $G$ that starts with $C$. (This is a
  consequence, e.g., of the structure of the lexicographic breadth-first-search
  algorithm used to find perfect elimination orderings of chordal graphs: see,
  e.g., the paragraph before Proposition 1 of \cite{hauser_two_2014} and
  Algorithm 6 of \cite{HB12} for a proof.  It can also be seen as a consequence
  of the maximal cardinality search algorithm of \cite{tarjan_simple_1984}: see
  Theorem 2.5 of \cite{blair_introduction_1993}.)

  Now, let $D$ be the DAG obtained by orienting the edges of $G$ according to
  $\sigma$ (i.e., the edge $u - v \in G$ is directed as $u \rightarrow v$
  in $D$ if and only if $\sigma(u) < \sigma(v)$). Suppose that
  $\sink[D]{C} = s$.  Note that $C$ cannot contain the node \sink[D]{C'} for any
  other maximal clique $C'$ since, as $\sigma$ starts with $C$, this would imply
  $C' \subseteq C$ and would contradict the maximality of $C'$.  Thus, there are
  $|\mathcal{C}| - 1$ \sinkv{} nodes of $D$ other than $s$ by
  \Cref{lemma:sink-nodes}, and, by the
  above observation, they occur in $\sigma$ after $C$. Thus,
  $n \geq \abs{C} + |\mathcal{C}| - 1$, which gives
  $n - |\mathcal{C}| \geq \omega - 1$ as $\abs{C} = \omega$.
\end{proof}

\section{Various Example Graphs}
\label{sec:supp:graph-examples}
In \Cref{subsec:comparison-with-known-lb}, we proved that our universal lower
bound is always at least as good as the previous best universal lower bound
given by \cite{squires2020}, and also gave examples of graph families where our
bound is significantly better.  We also pointed out that our lower bound and the
lower bound of \cite{squires2020} are close only in certain special
circumstances.  We now make give more details of these special cases.

We work with the same notation as that used in \cref{lemma:clique-number-bound}:
$G$ is an undirected chordal graph, $n$ is the number of nodes in $G$, $\omega$
is the size of its largest clique, and $\mathcal{C}$ is the set of its maximal
cliques.  From \cref{lemma:clique-number-bound}, it follows that for our lower
bound of $\ceil{\frac{n - \abs{\mathcal{C}}}{2}}$ and the lower bound of
$\floor{\frac{\omega}{2}} = \ceil{\frac{\omega-1}{2}}$ of \cite{squires2020} to
be equal, one of the following conditions must be true: either (i)
$n - \abs{\mathcal{C}} = \omega - 1$, or (ii) $\omega$ is even and
$n - \abs{\mathcal{C}} = \omega$.

Now consider the perfect elimination ordering $\sigma$ of $G$ used in the proof
of \cref{lemma:clique-number-bound}, and let $D$ be the DAG with skeleton $G$
constructed by orienting the edges of $G$ in accordance with $\sigma$.  Note
that by the construction of $\sigma$, the $\omega$ vertices of a largest clique
$C$ of $G$ are the first $\omega$ vertices in $\sigma$.  Note also that
$n - \abs{\mathcal{C}}$ is the number of vertices in $G$ that are \emph{not}
\sinkv{} nodes of $D$ (by \Cref{lemma:sink-nodes}).

Thus, it follows that condition (i) above for the two lower bounds to be equal
can hold only when $D$ is such that \emph{all} nodes of $G$ outside the largest
clique $C$ of $G$ are \sinkv{} nodes of $D$.  In other words, $\sigma$ is a
clique block ordering, in the sense of P1 of \cref{def:nice-ordering} of a
\gcb{} orderings, in which the first clique block $L_1(\sigma)$ consists of the
largest clique $C$, while \emph{all} other clique blocks $L_i(\sigma), i \geq 2$
are of size exactly $1$.  Similarly, condition (ii) above for the two lower
bounds to be equal can hold only when $D$ is such that \emph{all but one} of the
nodes of $G$ outside the largest clique $C$ of $G$ are \sinkv{} nodes of $G$.

We now give examples of two special families of chordal graphs where
the above conditions for the equality of the two lower bounds hold: Split graphs
and $k$-trees. Here, $\mathcal{C}(G)$ will denote the set of maximal cliques of
graph $G$.

\textit{Split graphs.} $G$ is a split graph if its vertices can be partitioned
into a clique $C$ and an independent set $Z$. For such a $G$, one of the
following possibilities must be true.
\begin{enumerate}
  \item $\exists x \in Z$ such that $C \cup \{x\}$ is complete. In this case,
  $C \cup \{x\}$ is a maximum clique and $Z$ is a maximum independent set.
  \item $\exists x \in C$ such that $Z \cup \{x\}$ is independent. In this case,
  $Z \cup \{x\}$ is a maximum independent set and $C$ is a maximum clique.
\item $C$ is a maximal clique and $Z$ is a maximal independent set. In this
  case, $C$ must also be a \emph{maximum} clique and $Z$ a \emph{maximum}
  independent set.
\end{enumerate}
For each of these cases, we have $n - \abs{\mathcal{C}(G)} = \omega - 1$, 
which implies \ceil{\frac{n - \abs{\mathcal{C}(G)}}{2}} = \floor{\frac{\omega}{2}}.

\textit{$k$-trees.} A $k$-tree is formed by starting with $K_{k+1}$ (complete
graph with $k+1$ vertices) and repeatedly adding vertices in such a way that
each added vertex $v$ has exactly $k$ neighbors, and such that these neighbors
along with $v$ form a clique.  Thus, each added vertex creates exactly one
clique of size $k+1$.  In particular, in a $k$-tree, all maximal cliques are of
size $k+1$. So, in a $k$-tree $G$ with $n = k+1+r$ nodes, we have,
$\abs{\mathcal{C}(G)} = 1+r$ and $\omega = k+1$, which implies
$n - \abs{\mathcal{C}(G)} = \omega - 1$.  Thus,
$\ceil{\frac{n - \abs{\mathcal{C}(G)}}{2}} = \ceil{\frac{\omega - 1}{2}} =
\floor{\frac{\omega}{2}}$.

In contrast to the above two families, \emph{block graphs} are an example family
of chordal graphs where our lower bound can be significantly better.
Construction 1 and 2 presented in \Cref{subsubsec:examples} are examples of
block graphs, and as discussed there, our lower bound can be $\Theta(k)$ times
the previous best universal lower bound for block graphs, where $k$ can be as
large as $\Theta(n)$ (where $n$ is the number of nodes in the graph).

\section{Details of Experimental Setup}
\label{sec:supp:experiments}
\paragraph{Experiment 1}
For this experiment, we generate $1000$ graphs from Erd\H{o}s-R\'enyi graph
model $G(n,p)$: for each of these graphs, the number of nodes $n$ is a random
integer in $[5, 25]$ and the connection probability $p$ is a random value in
$[0.1, 0.3)$.  Each of these graphs is then converted to a DAG without
v-structures, using the following procedure.  First, the edges of $G$ are
oriented according to a topological ordering $\sigma$ which is a random
permutation of the nodes of $G$: this converts $G$ into a DAG $D$ (possibly with
v-structures). Now, the nodes of $D$ are processed in a \emph{reverse} order
according to $\sigma$ (i.e., nodes coming later in $\sigma$ are processed first)
and whenever we find two non-adjacent parents, $a$ and $b$ of the current node
$u$ being processed, we add an edge $a \rightarrow b$ in $D$ if
$\sigma(a) < \sigma(b)$, and $b \rightarrow a$ in $D$ if
$\sigma(b) < \sigma(a)$.  Since nodes are processed in an order that is a
reversal of $\sigma$, this procedure ensures that the resulting DAG $D$ has no
v-structures.

In \Cref{fig:supp:opt-vs-lower-bound-seeds}, we provide plots from four further
runs of Experiment 1.  These plots use exactly the same set-up and procedure as
the plot given in \Cref{fig:opt-vs-lower-bound} on
\pageref{fig:opt-vs-lower-bound}, and differ only in the initial seed provided
to the underlying pseudo-random number generator.  These seeds are used for
generation of random graphs as well as for generating $n$ and $p$. To avoid any
post selection bias, the seeds for these plots were formed using the decimal
expansion of $\pi$ after skipping first $1015$ digits in the decimal expansion,
and then taking the next $10$ digits as the first seed, $10$ consecutive digits
after that as the second seed, and so on.  Our interpretation and inferences
from these further runs remain the same as that reported in
\Cref{sec:empir-expl} for the run underlying \Cref{fig:opt-vs-lower-bound}.

\begin{figure*}[t]
    \centering
    \renewcommand{\thesubfigure}{\roman{subfigure}}
    \begin{subfigure}[b]{0.45\textwidth}
      \centering
      \includegraphics[scale=0.45]{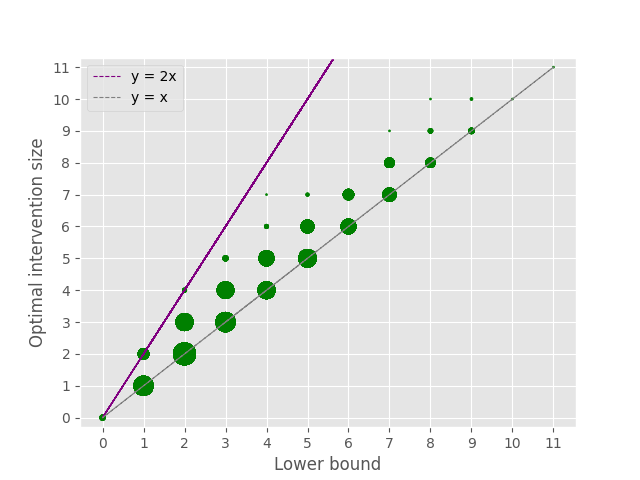}
      \caption{Run $1$}
      \label{fig:supp:exp1-seed1}
    \end{subfigure}
    \begin{subfigure}[b]{0.45\textwidth}
      \centering
      \includegraphics[scale=0.45]{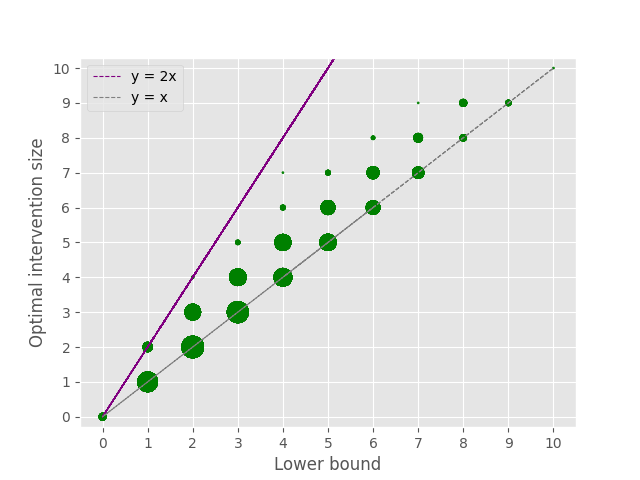}
      \caption{Run $2$}
      \label{fig:supp:exp1-seed2}
    \end{subfigure}
    \begin{subfigure}[b]{0.45\textwidth}
      \centering
      \includegraphics[scale=0.45]{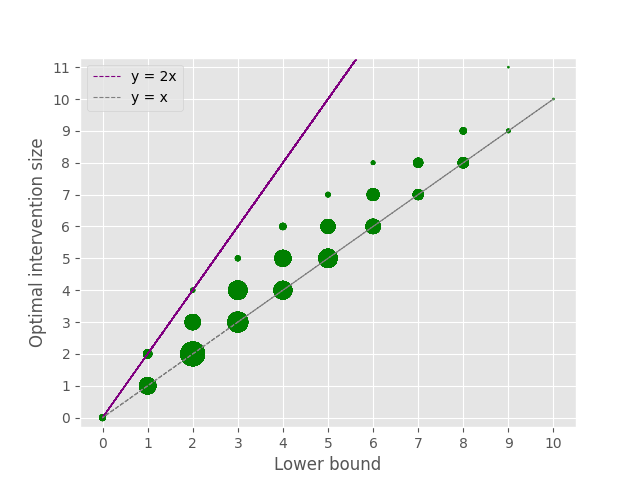}
      \caption{Run $3$}
      \label{fig:supp:exp1-seed3}
    \end{subfigure}
    \begin{subfigure}[b]{0.45\textwidth}
      \centering
      \includegraphics[scale=0.45]{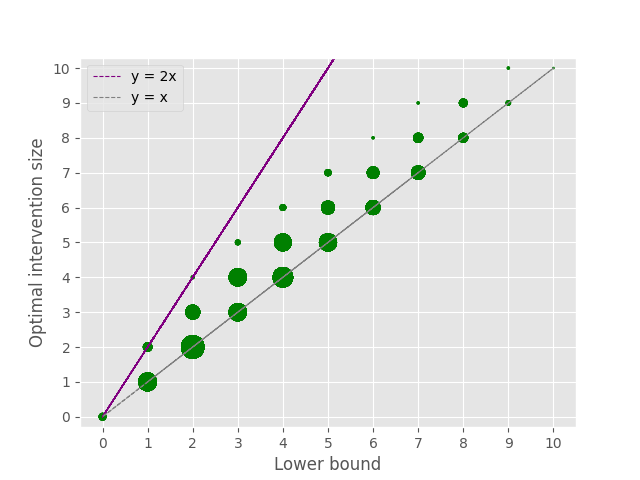}
      \caption{Run $4$}
      \label{fig:supp:exp1-seed4}
    \end{subfigure}
    \vspace{\baselineskip}
  \caption{Experiment 1 Runs with Varying Seeds}
  \label{fig:supp:opt-vs-lower-bound-seeds}
\end{figure*}

\paragraph{Experiment 2}
For this experiment, we generate $1000$ random DAGs without v-structures for
each size in $\inb{10, 20, 30, 40, 50, 60}$.  We now describe the procedure for
generating a DAG $D$ (without v-structures) with $n$ nodes, other than $n$, this
procedure takes two more inputs, ${min\_clique\_size}$ and
${max\_clique\_size}$. If ${min\_clique\_size} = X$ and $max\_clique\_size = Y$,
we try to keep the size of all cliques of $D$ in $[X, Y]$. First, we initialize
a DAG $D$ with nodes $0, \dots, n-1$, and no edges. We take
$\sigma = (0, \dots, n-1)$ to be a perfect elimination ordering of $D$. We then
process the nodes of $D$ in reverse order of $\sigma$.  When node $u$ is being
processed, we first compute the number of parents that $u$ already has in $D$.
Now we compute lower and upper bounds $\ell_1 \geq 0$ and $\ell_2 \geq 0$ on the
number of parents that could be added to the set of parents of $u$ while still
keeping the total number of parents below $Y - 1$, and at least $X - 1$.  (Note
that $\ell_1, \ell_2 \leq \abs{\inb{0, \dots, u - 1} \setminus \pa[D]{u}}$,
since the latter is the number of currently available vertices that could be
added to the parent set of $u$).  We now choose an integer $\ell$ uniformly at
random from $[\ell_1, \ell_2]$: this will be the number of new parents to be
added to the set of parents of $u$.  Note that it may happen that $\ell = 0$,
for example when $u$ already has $Y - 1$ or more parents, so that $\ell_2 = 0$.
Next, we sample a set $Z$ of $\ell$ nodes (without replacement) from
$(\inb{0, \dots, u - 1} \setminus \pa[D]{u})$, and add the edges
$z \rightarrow u$ to $D$ for each $z \in Z$.  Further, for any two non-adjacent
parents of $u$, we add an edge $a \rightarrow b$ to $D$ if
$\sigma(a) < \sigma(b)$, and $b \rightarrow a$ if $\sigma(b) < \sigma(a)$.  This
makes sure that there are no v-structures in $D$. Note that, as described in
\Cref{sec:empir-expl}, the procedure used here only tries to keep the maximum
clique size bounded above by $Y$, but it can overshoot and produce a graph with
a clique of size larger than $Y$ as well. In our experiments, we take
$min\_clique\_size = 2$, $max\_clique\_size = 4$.

In \Cref{fig:supp:comparison-lower-bounds-seeds}, we provide plots from four
further runs of Experiment 2.  These plots use exactly the same set-up and
procedure as the plot given in \Cref{fig:comparison-lower-bounds} in
\Cref{sec:empir-expl}, and differ only in the initial seed provided to the
underlying pseudo-random number generator.  Again, to avoid post-selection bias, we use seeds given by the procedure given
for Experiment 1 above.  Our interpretation and inferences from these further
runs remain the same as that reported in \Cref{sec:empir-expl} for the run
underlying \Cref{fig:comparison-lower-bounds}.

\begin{figure*}[t]
    \centering
    \renewcommand{\thesubfigure}{\roman{subfigure}}
    \begin{subfigure}[b]{0.45\textwidth}
      \centering
      \includegraphics[scale=0.45]{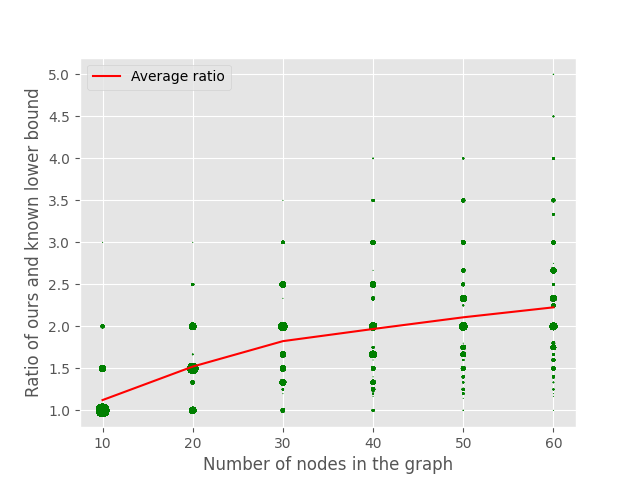}
      \caption{Run $1$}
      \label{fig:supp:exp2-seed1}
    \end{subfigure}
    \begin{subfigure}[b]{0.45\textwidth}
      \centering
      \includegraphics[scale=0.45]{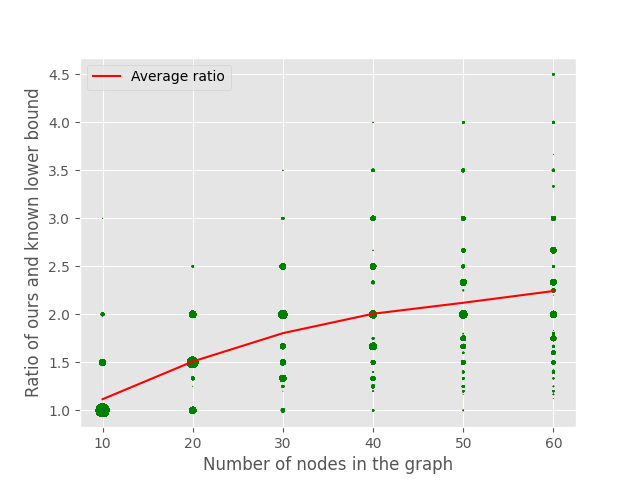}
      \caption{Run $2$}
      \label{fig:supp:exp2-seed2}
    \end{subfigure}
    \begin{subfigure}[b]{0.45\textwidth}
      \centering
      \includegraphics[scale=0.45]{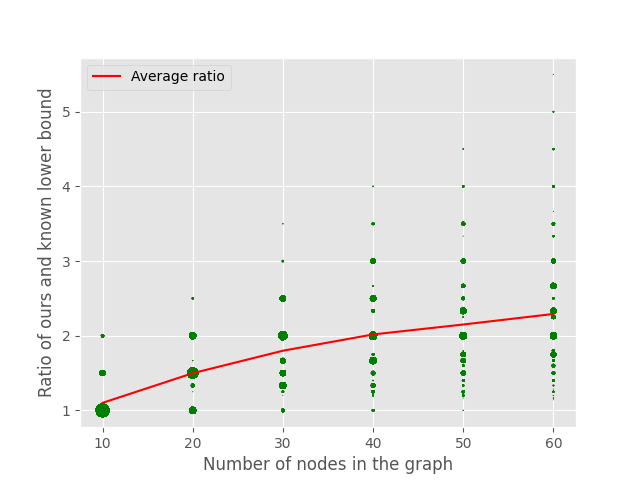}
      \caption{Run $3$}
      \label{fig:supp:exp2-seed3}
    \end{subfigure}
    \begin{subfigure}[b]{0.45\textwidth}
      \centering
      \includegraphics[scale=0.45]{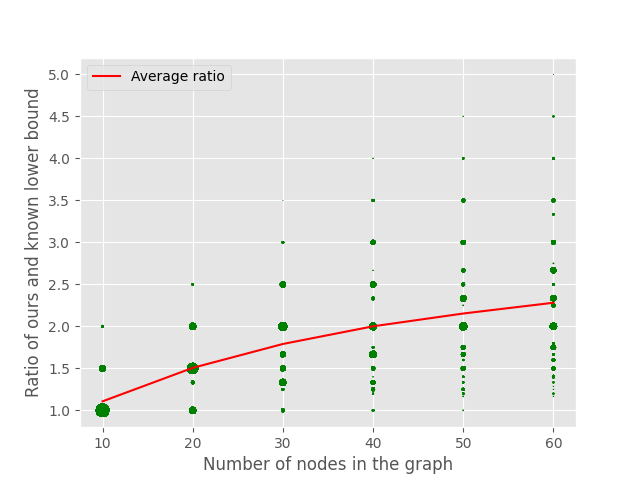}
      \caption{Run $4$}
      \label{fig:supp:exp2-seed4}
    \end{subfigure}
    \vspace{\baselineskip}
  \caption{Experiment 2 Runs with Varying Seeds}
  \label{fig:supp:comparison-lower-bounds-seeds}
\end{figure*}

\section{Omitted proofs of folklore results}
In this section, we provide the proofs of
\cref{supp:rem:hauser-buhlman,cor:partition-intervention,corollary:supp:HB14Lemma1}.
As described above, we believe these results to be folklore and well known, and
provide the proofs below only for the sake of completeness.

\subsection{Proof of Remark~\ref{supp:rem:hauser-buhlman}}
\label{supp:sec:remark-proof}
  \begin{proof}[Proof of \cref{supp:rem:hauser-buhlman}]
    By Theorem 10(iv) of \cite{HB12}, it follows that
  $\mathcal{E}_{\mathcal{I}}(D)$ must have the same skeleton and the same
  v-structures as $D$, and must also have all its directed edges directed in the
  same direction as in $D$.  This proves that $\mathcal{E}_{\mathcal{I}}(D)$
  satisfies all the conditions of \cref{thm:hb-i-essential}.  To complete the
  proof, we now show that it is the only graph satisfying all the conditions of the theorem.

  For if not, then let $G$ and $H$ be two different graphs satisfying all
  the conditions of \cref{thm:hb-i-essential}.  Thus, $G$ and $H$ have the same
  skeleton as $D$, all their directed edges are in the same direction as in $D$,
  and further, all v-structures of $D$ are directed in both $G$ and $H$.  If
  $G \neq H$, the set $E'$ of edges that are directed in $G$ but not in $H$ is
  therefore non-empty (possibly after interchanging the labels $G$ and $H$).
  Fix a topological ordering $\sigma$ of $D$, and let $a \rightarrow b \in E'$
  be such that $\sigma(b) \leq \sigma(b')$ for all $a' \rightarrow b' \in
  E'$. Since $a \rightarrow b$ in $G$, $a \rightarrow b$ must be strongly
  $\mathcal{I}$-protected in $G$. Now, there cannot exist $J \in \mathcal{I}$
  such that $|J \cap \inb{a, b}| = 1$, since in that case $a \rightarrow b$
  would be directed in $H$ as well (by \cref{item:3} of
  \cref{thm:hb-i-essential}).  Thus, at least one of the four graphs in
  \Cref{fig:strong-protection} must appear as an induced subgraph of $G$, with
  $a \rightarrow b$ appearing in that induced subgraph in the configuration
  indicated in the figure. If subgraph (i) appears as an induced subgraph of
  $G$, then we must have $c \rightarrow a$ in $H$ since $\sigma(a) < \sigma(b)$,
  but this means that $c \rightarrow a - b$ would be an induced subgraph of $H$,
  contradicting \cref{item:directed-by-Meek-rule-1} of
  \cref{thm:hb-i-essential}.  Similarly, $a \rightarrow b$ cannot be in the
  configuration indicated in subgraph (ii), since any v-structure in $G$ is
  directed in $H$, so that $a \rightarrow b$ would be directed in $H$ as
  well. If subgraph (iii) appears as an induced subgraph of $G$,
  $a \rightarrow c$ must be directed in $H$, as $\sigma(c) < \sigma(b)$, but
  this would mean that $H$ contains a directed cycle $a, c, b, a$ (since $a - b$
  is undirected in $H$), and this contradicts that $H$ is a chain
  graph.\footnote{Recall that a directed cycle in a general graph is a cycle in
    which all directed edges point in the same direction, and in which at least
    one edge is directed.  The formal definition is given in
    \Cref{sec:preliminaries}.} If subgraph (iv) appears as an induced subgraph
  of $G$, then $c_1 \rightarrow b \leftarrow c_2$ appears in $H$ as well since
  any v-structure of $G$ must also be directed in $H$.  Further, at least one of
  the following four configurations must appear in $H$: (a) $a \rightarrow c_1$
  (b) $a \rightarrow c_2$ (c) $a - c_1$ (d) $a - c_2$ (for if not, then
  $c_1 \rightarrow a \leftarrow c_2$ would be a v-structure in $H$ that is not
  directed in $G$, contradicting that all v-structures of $D$ are directed in
  both $G$ and $H$).  However, if any of the four configuration appears in $H$,
  we get a directed cycle in $H$ (since $a - b$ is undirected in $H$), which
  contradicts the fact that $H$ is a chain graph.

  We conclude therefore that $E'$ must in fact be empty and hence $G = H$.
  Thus, given a DAG $D$, the unique graph satisfying all conditions of
  \cref{thm:hb-i-essential} is the $\mathcal{I}$-essential graph
  $\mathcal{E}_{\mathcal{I}}(D)$ of $D$.
\end{proof}

\subsection{Proof of Corollary~\ref{corollary:supp:HB14Lemma1}}
\label{supp:sec:separate-chain-corollary}
\begin{proof}[Proof of~\cref{corollary:supp:HB14Lemma1}]
  Let $H$ be the graph with the same skeleton as $D$ in which exactly the edges
  satisfying one of the two conditions of the corollary are directed.  We prove
  that $H$ satisfies all the conditions of \Cref{thm:hb-i-essential}, and must
  therefore be the same as $\mathcal{E}_{\mathcal{I}}(D)$ (see also
  \cref{supp:rem:hauser-buhlman}).  This will complete the proof of the
  corollary.

  Recall that by construction, any edge of $H$
  is directed if and only if
  \begin{enumerate}
  \item the endpoints of the edge are in different chain components of
    $\mathcal{E}(D)$, so that it is already directed in $\mathcal{E}(D)$, or
  \item the endpoints of the edge lie in the same chain component $S$ of
    $\mathcal{E}(D)$, and the edge is directed in
    $\mathcal{E}_{\mathcal{I}_S}(D_S)$.
  \end{enumerate}
  In particular, item 1 implies that any edge that is directed in
  $\mathcal{E}(D)$ is also directed in $H$ (since all directed edges of a chain
  graph have their endpoints in different chain components).

  We now verify that $H$ satisfies all the conditions of
  \cref{thm:hb-i-essential}. By construction, $H$ has the same skeleton as $D$,
  and all its directed edges are directed in the same direction as $D$.
  Further, all the v-structures of $D$ are directed in $H$, since these are
  directed in $\mathcal{E}(D)$.

  Any directed cycle $C$ in $H$ would be a directed cycle either in
  $\mathcal{E}(D)$ (in case $C$ includes vertices from at least two different
  chain components of $\mathcal{E}(D)$), or in
  $\mathcal{E}_{\mathcal{I}_S}(D_S)$ for some chain component $S$ of
  $\mathcal{E}(D)$ (in case $C$ is contained within a single chain component $S$
  of $\mathcal{E}(D)$).  Since both $\mathcal{E}(D)$ and
  $\mathcal{E}_{\mathcal{I}_S}(D)$ are chain graphs (from
  \cref{thm:hb-i-essential}), they do not have any directed cycles.  It
  therefore follows that $H$ cannot have a directed cycle either, and hence is
  a chain graph.  Further, the chain components of $H$ are induced subgraphs of
  the chain components of $\mathcal{E}(D)$.  Since the chain components of
  $\mathcal{E}(D)$ are chordal (again from \cref{thm:hb-i-essential}), it
  follows that the chain components of $H$ are also chordal.  Thus, $H$
  satisfies \cref{item:chain-chordal} of \cref{thm:hb-i-essential}.

  Suppose now that, if possible, $H$ has an induced subgraph of the form
  $a \rightarrow b - c$. Thus, the edge $b - c$ must be undirected in
  $\mathcal{E}(D)$ as well, so that $b$ and $c$ are in the same chain component
  $S$ of $\mathcal{E}(D)$.  If $a$ is also in $S$, then $a \rightarrow b - c$
  would be an induced sub-graph of the interventional essential graph
  $\mathcal{E}_{\mathcal{I}_S}(D_S)$, which would contradict
  \cref{item:directed-by-Meek-rule-1} of \cref{thm:hb-i-essential}.  Similarly,
  if $a$ is not in $S$, then $a \rightarrow b$ would be directed in
  $\mathcal{E}(D)$, so that $a \rightarrow b - c$ would be an induced sub-graph
  of the essential graph $\mathcal{E}(D) = \mathcal{E}_{\inb{\emptyset}}(D)$,
  again contradicting \cref{item:directed-by-Meek-rule-1} of
  \cref{thm:hb-i-essential}.  We conclude that an induced subgraph of the form
  $a \rightarrow b - c$ cannot occur in $H$. Thus, $H$ satisfies
  \cref{item:directed-by-Meek-rule-1} of \cref{thm:hb-i-essential}.

  To verify \cref{item:directed-by-intervention}, consider any two adjacent
  vertices $a$ and $b$ in $H$ such that $\abs{I \cap \inb{a, b}} = 1$ for some
  $I \in \mathcal{I}$.  If $a$ and $b$ are in different chain components of
  $\mathcal{E}(D)$, then the edge between them is directed in $\mathcal{E}(D)$
  and hence also in $H$.  On the other hand, if $a$ and $b$ are in the same
  chain component $S$ of $\mathcal{E}(D)$, then we have
  $\abs{(I \cap S) \cap \inb{a, b}} = \abs{I \cap \inb{a, b}} = 1$ for
  $I \cap S \in \mathcal{I}_S$, so that the edge between $a$ and $b$ is directed
  in $\mathcal{E}_{\mathcal{I}_S}(D_S)$ (by \cref{item:directed-by-intervention}
  of \cref{thm:hb-i-essential}) and hence also in $H$.  It thus follows that $H$
  satisfies \cref{item:directed-by-intervention} of \cref{thm:hb-i-essential}.

  Finally, we show that any directed edge in $H$ is $\mathcal{I}$-strongly
  protected.  Consider first a directed edge $a \rightarrow b$ in $H$ where $a$
  and $b$ belong to the same chain component $S$ of $\mathcal{E}(D)$.  Then,
  since $a \rightarrow b$ is directed also in
  $\mathcal{E}_{\mathcal{I}_S}(D_S)$, it must be $\mathcal{I}_S$-strongly
  protected in $\mathcal{E}_{\mathcal{I}_S}(D_S)$.  It follows directly from the
  definition of interventional strong protection and the construction of $H$
  then that $a \rightarrow b$ is $\mathcal{I}$-strongly protected in $H$ (since
  any of the configurations of \Cref{fig:strong-protection}, if present as an
  induced subgraph of $\mathcal{E}_{\mathcal{I}_S}(D_S)$, is also present as an
  induced subgraph in $H$).

  Consider now a directed edge $a \rightarrow b$ in $H$ when $a$ and $b$ are in
  different chain components of $\mathcal{E}(D)$.  Then $a \rightarrow b$ must
  be directed, and hence also $\inb{\emptyset}$-strongly protected, in
  $\mathcal{E}(D)$.  Now, if $a \rightarrow b$ appears as part of an induced
  subgraph of $\mathcal{E}(D)$ of the forms (i), (ii) or (iii) of
  \Cref{fig:strong-protection}, then the same configurarion also appears as an
  induced subgraph of $H$ (since all directed edges of $\mathcal{E}(D)$ are
  directed in $H$), so that $a \rightarrow b$ is also $\mathcal{I}$-strongly
  protected in $H$.  Suppose then that $a \rightarrow b$ appears as part of an
  induced subgraph of the form (iv) of \Cref{fig:strong-protection}.  Then, the
  vertices $a$, $c_1$ and $c_2$ appearing in the configuration must be in the
  same chain component $S$ of $\mathcal{E}(D)$ (since they are in a connected
  component of undirected edges).  It follows that the configurations
  $c_1 \rightarrow a - c_2$ and $c_2 \rightarrow a - c_1$ cannot appear in $H$.
  For, if they did, then they would also appear in the $\mathcal{I}_S$ essential
  graph $\mathcal{E}_{\mathcal{I}_S}(D_S)$, thereby contradicting
  \cref{item:directed-by-Meek-rule-1} of \cref{thm:hb-i-essential} (when applied
  to the interventional essential graph $\mathcal{E}_{\mathcal{I}_S}(D_S)$).
  The configuration $c_1 \rightarrow a \leftarrow c_2$ also cannot occur in $H$,
  since otherwise, the v-structure $c_1 \rightarrow a \leftarrow c_2$ of $D$
  could not have remained undirected in $\mathcal{E}(D)$.  It follows that at
  least one of the following three configurations must appear in $H$:
  $a \rightarrow c_1$, $a \rightarrow c_2$ or $c_1 - a - c_2$.  In the last
  case, $a \rightarrow b$ is $\mathcal{I}$-strongly protected in $H$ as
  configuration (iv) of \Cref{fig:strong-protection} appears as an induced
  subgraph of $H$ (exactly as it does in $\mathcal{E}(D)$).  In the first two
  cases, $a \rightarrow b$ is $\mathcal{I}$-strongly protected in $H$ as
  configurations (iii) of \Cref{fig:strong-protection} appears as an induced
  subgraph of $H$ (with the role of the vertex $c$ in that configuration played
  by either $c_1$ or $c_2$, as the edges $c_1 \dir b$ and $c_2 \dir b$ are both
  directed in $H$, since they are directed in $\mathcal{E}(D)$).  Thus, we see
  that every directed edge in $H$ is $\mathcal{I}$-strongly protected in $H$,
  and hence $H$ satisfies \cref{item:strong-i-protection} of
  \cref{thm:hb-i-essential} as well.

  It follows from \cref{thm:hb-i-essential} therefore that
  $H = \mathcal{E}_{\mathcal{I}}(D)$. As discussed at the beginning of the
  proof, this completes the proof of the corollary.
\end{proof}

\subsection{Proof of Corollary~\ref{cor:partition-intervention}}
\label{sec:proof-coroll-partition}
\begin{proof}[Proof of \cref{cor:partition-intervention}]
  Suppose that there exists an edge $a \rightarrow b$ which is directed in
  $\mathcal{E}_{\mathcal{I}}(D)$ but undirected in
  $\mathcal{E}_{\mathcal{I}'}(D)$.  Fix a topological ordering $\sigma$ of $D$,
  and among all such edges, choose any one with the smallest possible value of
  $\sigma(b)$.  Thus, we have $a \rightarrow b \in \mathcal{E}_{\mathcal{I}}(D)$
  and $a \undir b \in \mathcal{E}_{\mathcal{I}'}(D)$.  Also, by the choice of
  $b$, if $c \rightarrow d$ is directed in $\mathcal{E}_{\mathcal{I}}(D)$, and
  $\sigma(d) < \sigma(b)$, then $c \rightarrow d$ is also directed in
  $\mathcal{E}_{\mathcal{I}'}(D)$.

  Now, we will derive a contradiction to the requirement in
  \cref{item:strong-i-protection} of \cref{thm:hb-i-essential}, which says that
  $a \rightarrow b$ must be strongly $\mathcal{I}$-protected in
  $\mathcal{E}_{\mathcal{I}}(D)$.  We consider all the five possible ways in
  which $a \rightarrow b$ may be strongly $\mathcal{I}$-protected in
  $\mathcal{E}_{\mathcal{I}}(D)$, and derive a contradiction in each case.

  \begin{description}
  \item[Case 1] There exists $J \in \mathcal{I}$ such that
    $\abs{J \cap \inb{a, b}} = 1$.  In this case there exists
    $J' \in \mathcal{I}'$ such that $\abs{J' \cap \inb{a, b}} = 1$: in case
    $J \neq I$ we can take $J' = J$, while if $J = I$, we can take $J'$ to be
    one of $I^1$ and $I^2$, since $I = I^1 \cup I^2$.  But then, by
    \cref{item:directed-by-intervention} of \cref{thm:hb-i-essential},
    $a \rightarrow b$ must be directed in $\mathcal{E}_{\mathcal{I}'}(D)$ as
    well, which is a contradiction.
  \item [Case 2] The subgraph (i) (of the form $c \rightarrow a \rightarrow b$)
    in \Cref{fig:strong-protection} appears as an induced subgraph of
    $\mathcal{E}_{\mathcal{I}}(D)$.  In this case, $c \rightarrow a$ must be
    directed in $\mathcal{E}_{\mathcal{I}'}(D)$, since $\sigma(a) < \sigma(b)$.
    But then, the induced subgraph $c \rightarrow a \undir b$ of
    $\mathcal{E}_{\mathcal{I}'}(D)$ contradicts
    \cref{item:directed-by-Meek-rule-1} of \cref{thm:hb-i-essential}.
  \item [Case 3] The subgraph (ii) (of the form $c \rightarrow b \leftarrow a$)
    an \Cref{fig:strong-protection} appears as in induced subgraph of
    $\mathcal{E}_{\mathcal{I}}(D)$.  In this case,
    $c \rightarrow b \leftarrow a$ is a v-structure in
    $\mathcal{E}_{\mathcal{I}}(D)$, and hence also (by
    \cref{thm:hb-i-essential}) in $D$.  Thus, again by
    \cref{thm:hb-i-essential}, it must also be directed in
    $\mathcal{E}_{\mathcal{I'}}(D)$.  This contradicts the assumption that
    $a \undir b$ is undirected in $\mathcal{E}_{\mathcal{I}}(D)$.
  \item [Case 4] The subgraph (iii) in \Cref{fig:strong-protection} appears as
    an induced subgraph of $\mathcal{E}_{\mathcal{I}}(D)$.  In this case,
    $a \rightarrow c$ is directed in $\mathcal{E}_{\mathcal{I}'}(D)$, since
    $\sigma(c) < \sigma(b)$ (because of the presence of the edge
    $c \rightarrow b$ in $\mathcal{E}_{\mathcal{I}}(D)$, and hence also in $D$).
    Now, irrespective of whether the edge between $c$ and $b$ is directed or not
    in $\mathcal{E}_{\mathcal{I}'}(D)$, we have a directed cycle in
    $\mathcal{E}_{\mathcal{I}'}(D)$: this directed cycle is
    $a \rightarrow c \undir b \undir a$ in case $c \undir b$ is undirected in
    $\mathcal{E}_{\mathcal{I}}(D)$, and $a \rightarrow c \rightarrow b \undir a$
    in case $c \rightarrow b$ is directed in $\mathcal{E}_{\mathcal{I}}(D)$
    (note that \cref{thm:hb-i-essential} implies that since
    $c \rightarrow b \in \mathcal{E}_{\mathcal{I}}(D)$, it must also be present
    in $D$, so that $b \rightarrow c$ cannot be present in
    $\mathcal{E}_{\mathcal{I}'}(D)$).  However, this is a contradiction, since,
    by \cref{thm:hb-i-essential}, $\mathcal{E}_{\mathcal{I}'}(D)$ must be a
    chain graph, and hence cannot contain any directed cycles.

  \item [Case 5] The subgraph (iv) in \Cref{fig:strong-protection} appears as an
    induced subgraph of $\mathcal{E}_{\mathcal{I}}(D)$.  In this case,
    $c_1 \rightarrow b \leftarrow c_2$ is a v-structure in
    $\mathcal{E}_{\mathcal{I}}(D)$, and hence, by \cref{thm:hb-i-essential},
    must be directed in $\mathcal{E}_{\mathcal{I}'}(D)$ as well.  Now, if any
    one of the edges $a \undir c_1$ or $a \dir c_1$ or $a \undir c_2$ or
    $a \rightarrow c_2$ are present in $\mathcal{E}_{\mathcal{I}'}(D)$, we get a
    contradiction to the fact that $\mathcal{E}_{\mathcal{I}'}(D)$ must be a
    chain graph (exactly as in Case 4 above).  The only remaining possibility is
    that $c_1 \rightarrow a$ and $c_2 \rightarrow a$ are both present in
    $\mathcal{E}_{\mathcal{I}'}(D)$.  But this is a contradiction to the fact
    that the edges $c_1 \undir a $ and $c_2 \undir a$ are undirected in
    $\mathcal{E}_{\mathcal{I}}(D)$.  To see this, note that since
    $c_1 \rightarrow a \leftarrow c_2$ is a v-structure in
    $\mathcal{E}_{\mathcal{I}'}(D)$, it must (by \cref{thm:hb-i-essential}) also
    be directed in $\mathcal{E}_{\mathcal{I}}(D)$.
  \end{description}
  Thus, we conclude that every edge that is directed in
  $\mathcal{E}_{\mathcal{I}}(D)$ must also be directed in
  $\mathcal{E}_{\mathcal{I}'}(D)$.
\end{proof}

\vskip 0.2in
\bibliography{causal}

\begin{thebibliography}{}

\bibitem[Addanki et~al., 2020]{addanki_efficient_2020}
Addanki, R., Kasiviswanathan, S.~P., McGregor, A., and Musco, C. (2020).
\newblock Efficient {Intervention} {Design} for {Causal} {Discovery} with
  {Latents}.
\newblock {\em Proceedings of the 37th International Conference on Machine
  Learning (ICML 2020), PMLR}, 119:63--73.
\newblock arXiv:2005.11736.

\bibitem[Addanki et~al., 2021]{addanki_intervention_2020}
Addanki, R., McGregor, A., and Musco, C. (2021).
\newblock Intervention {Efficient} {Algorithms} for {Approximate} {Learning} of
  {Causal} {Graphs}.
\newblock {\em Proceedings of the 32nd International Conference on Algorithmic
  Learning Theory (ALT 2021), PMLR}, 132:151--184.
\newblock arXiv:2012.13976.

\bibitem[Agrawal et~al., 2019]{AgrawalSYSU19}
Agrawal, R., Squires, C., Yang, K.~D., Shanmugam, K., and Uhler, C. (2019).
\newblock {ABCD}-{Strategy}: {Budgeted} {Experimental} {Design} for {Targeted}
  {Causal} {Structure} {Discovery}.
\newblock {\em Proceedings of the 22nd International Conference on Artificial
  Intelligence and Statistics (AISTATS 2019), PMLR}, 89:3400--3409.

\bibitem[AhmadiTeshnizi et~al., 2020]{AhmadiTeshniziS20}
AhmadiTeshnizi, A., Salehkaleybar, S., and Kiyavash, N. (2020).
\newblock {LazyIter}: {A} {Fast} {Algorithm} for {Counting} {Markov}
  {Equivalent} {DAGs} and {Designing} {Experiments}.
\newblock {\em Proceedings of the 37th International Conference on Machine
  Learning (ICML 2020), PMLR}, 119:125--133.

\bibitem[Andersson et~al., 1997]{andersson_characterization_1997}
Andersson, S.~A., Madigan, D., and Perlman, M.~D. (1997).
\newblock A {Characterization} of {Markov} {Equivalence} {Classes} for
  {Acyclic} {Digraphs}.
\newblock {\em Annals of Statistics}, 25(2):505--541.

\bibitem[Blair and Peyton, 1993]{blair_introduction_1993}
Blair, J. R.~S. and Peyton, B. (1993).
\newblock An {Introduction} to {Chordal} {Graphs} and {Clique} {Trees}.
\newblock In {\em Graph {Theory} and {Sparse} {Matrix} {Computation}},
  volume~56 of {\em The {IMA} {Volumes} in {Mathematics} and its
  {Applications}}, pages 1--29. Springer.

\bibitem[Bottou et~al., 2013]{Bottou2013}
Bottou, L., Peters, J., Qui{{\~n}}onero-Candela, J., Charles, D.~X.,
  Chickering, D.~M., Portugaly, E., Ray, D., Simard, P., and Snelson, E.
  (2013).
\newblock Counterfactual {Reasoning} and {Learning} {Systems}: {The} {Example}
  of {Computational} {Advertising}.
\newblock {\em Journal of Machine Learning Research}, 14(65):3207--3260.

\bibitem[Chickering, 1995]{chickering_transformational_1995}
Chickering, D.~M. (1995).
\newblock A {Transformational} {Characterization} of {Equivalent} {Bayesian}
  {Network} {Structures}.
\newblock {\em Proceedings of the 11th Conference on Uncertainty in Artificial
  Intelligence (UAI 1995)}, pages 87--98.
\newblock arXiv:1302.4938.

\bibitem[Eberhardt, 2008]{eberhardt_almost_2012}
Eberhardt, F. (2008).
\newblock Almost {Optimal} {Intervention} {Sets} for {Causal} {Discovery}.
\newblock {\em Proceedings of the 24th Conference on Uncertainty in Artificial
  Intelligence (UAI 2008)}.
\newblock arXiv:1206.3250.

\bibitem[Eberhardt et~al., 2005]{eberhardt_number_2012}
Eberhardt, F., Glymour, C., and Scheines, R. (2005).
\newblock On the {Number} of {Experiments} {Sufficient} and in the {Worst}
  {Case} {Necessary} to {Identify} {All} {Causal} {Relations} {Among} {$N$}
  {Variables}.
\newblock {\em Proceedings of the 21st Conference on Uncertainty in Artificial
  Intelligence (UAI 2005)}.
\newblock arXiv:1207.1389.

\bibitem[Friedman, 2004]{friedman_inferring_2004}
Friedman, N. (2004).
\newblock Inferring {Cellular} {Networks} {Using} {Probabilistic} {Graphical}
  {Models}.
\newblock {\em Science}, 303(5659):799--805.

\bibitem[Ghassami et~al., 2018]{ghassami2018budgeted}
Ghassami, A., Salehkaleybar, S., Kiyavash, N., and Bareinboim, E. (2018).
\newblock {Budgeted} {Experiment} {Design} for {Causal} {Structure} {Learning}.
\newblock {\em Proceedings of the 35th International Conference on Machine
  Learning, (ICML 2018), PMLR}, 80:1719--1728.

\bibitem[Glymour et~al., 2019]{Glymour2019}
Glymour, C., Zhang, K., and Spirtes, P. (2019).
\newblock Review of {Causal} {Discovery} {Methods} {Based} on {Graphical}
  {Models}.
\newblock {\em Frontiers in Genetics}, 10:524.

\bibitem[Greenewald et~al., 2019]{greenewald_sample_2019}
Greenewald, K., Katz, D., Shanmugam, K., Magliacane, S., Kocaoglu, M.,
  Boix~Adsera, E., and Bresler, G. (2019).
\newblock Sample {Efficient} {Active} {Learning} of {Causal} {Trees}.
\newblock {\em Proceedings of 33rd Annual Conference on Neural Information
  Processing Systems (NeurIPS 2019)}.

\bibitem[Hagberg et~al., 2008]{hagberg08:_explor_networ}
Hagberg, A.~A., Schult, D.~A., and Swart, P.~J. (2008).
\newblock {Exploring} {Network} {Structure}, {Dynamics}, and {Function} using
  {NetworkX}.
\newblock {\em Proceedings of the 7th Python in Science Conference (SciPy
  2008)}, pages 11--15.

\bibitem[Hauser and B\"{u}hlmann, 2012]{HB12}
Hauser, A. and B\"{u}hlmann, P. (2012).
\newblock Characterization and {Greedy} {Learning} of {Interventional} {Markov}
  {Equivalence} {Classes} of {Directed} {Acyclic} {Graphs}.
\newblock {\em Journal of Machine Learning Research}, 13:2409--2464.

\bibitem[Hauser and B\"{u}hlmann, 2014]{hauser_two_2014}
Hauser, A. and B\"{u}hlmann, P. (2014).
\newblock Two {Optimal} {Strategies} for {Active} {Learning} of {Causal}
  {Models} from {Interventional} {Data}.
\newblock {\em International Journal of Approximate Reasoning}, 55(4):926--939.

\bibitem[He and Geng, 2008]{He2008}
He, Y.-B. and Geng, Z. (2008).
\newblock {Active} {Learning} of {Causal} {Networks} with {Intervention}
  {Experiments} and {Optimal} {Designs}.
\newblock {\em Journal of Machine Learning Research}, 9(84):2523--2547.

\bibitem[Hyttinen et~al., 2013a]{hyttinen_experiment_2013}
Hyttinen, A., Eberhardt, F., and Hoyer, P.~O. (2013a).
\newblock Experiment {Selection} for {Causal} {Discovery}.
\newblock {\em Journal of Machine Learning Research}, 14(57):3041--3071.

\bibitem[Hyttinen et~al., 2013b]{hyttinen_discovering_2013}
Hyttinen, A., Hoyer, P.~O., Eberhardt, F., and Jarvisalo, M. (2013b).
\newblock Discovering {Cyclic} {Causal} {Models} with {Latent} {Variables}: {A}
  {General} {SAT}-{Based} {Procedure}.
\newblock {\em Proceedings of the 29th Confrence on Uncertainty in Artificial
  Intelligence (UAI 2013)}.
\newblock arXiv:1309.6836.

\bibitem[Kocaoglu et~al., 2017]{kocaoglu_cost-optimal_2017}
Kocaoglu, M., Dimakis, A., and Vishwanath, S. (2017).
\newblock Cost-{Optimal} {Learning} of {Causal} {Graphs}.
\newblock {\em Proceedings of the 34th International Conference on Machine
  Learning (ICML 2017), PMLR}, 70:1875--1884.
\newblock arXiv:1703.02645.

\bibitem[Meek, 1995]{meek_causal_1995}
Meek, C. (1995).
\newblock Causal {Inference} and {Causal} {Explanation} with {Background}
  {Knowledge}.
\newblock {\em Proceedings of the 11th Conference on Uncertainty in Artificial
  Intelligence (UAI 1995)}, pages 403--410.
\newblock arXiv:1302.4972.

\bibitem[Pearl, 2009]{Pearl2009}
Pearl, J. (2009).
\newblock {\em Causality: Models, Reasoning and Inference}.
\newblock Cambridge University Press, USA, 2nd edition.

\bibitem[Shanmugam et~al., 2015]{shanmugamKDV15}
Shanmugam, K., Kocaoglu, M., Dimakis, A.~G., and Vishwanath, S. (2015).
\newblock Learning {Causal} {Graphs} with {Small} {Interventions}.
\newblock {\em Proceedings of the 29th Annual Conference on Neural Information
  Processing Systems (NeurIPS 2015)}, pages 3195--3203.
\newblock arXiv:1511.00041.

\bibitem[Shen et~al., 2020]{Shen2020}
Shen, X., Ma, S., Vemuri, P., Simon, G., and {The Alzheimer's Disease
  Neuroimaging Initiative} (2020).
\newblock Challenges and {Opportunities} with {Causal} {Discovery}
  {Algorithms}: {Application} to {Alzheimer's} {Pathophysiology}.
\newblock {\em Scientific Reports}, 10:2975.

\bibitem[Squires, 2018]{squires18:_python}
Squires, C. (2018).
\newblock \texttt{Causaldag} {Python} {Package}.
\newblock BSD License. Available from https://github.com/uhlerlab/causaldag.

\bibitem[Squires et~al., 2020]{squires2020}
Squires, C., Magliacane, S., Greenewald, K.~H., Katz, D., Kocaoglu, M., and
  Shanmugam, K. (2020).
\newblock {Active} {Structure} {Learning} of {Causal} {DAGs} via {Directed}
  {Clique} {Trees}.
\newblock {\em Proceedings of 34th Annual Conference on Neural Information
  Processing Systems (NeurIPS 2020)}.
\newblock arXiv:2011.00641.

\bibitem[Tarjan and Yannakakis, 1984]{tarjan_simple_1984}
Tarjan, R.~E. and Yannakakis, M. (1984).
\newblock Simple {Linear}-{Time} {Algorithms} to {Test} {Chordality} of
  {Graphs}, {Test} {Acyclicity} of {Hypergraphs}, and {Selectively} {Reduce}
  {Acyclic} {Hypergraphs}.
\newblock {\em SIAM Journal on Computing}, 13(3):566--579.

\bibitem[Verma and Pearl, 1990]{verma_equivalence_1990}
Verma, T.~S. and Pearl, J. (1990).
\newblock Equivalence and {Synthesis} of {Causal} {Models}.
\newblock {\em Proceedings of the 6th {Conference} on {Uncertainty} in
  {Artificial} {Intelligence} (UAI 1990)}, pages 220--227.
\newblock arXiv:1304.1108.

\bibitem[Wienöbst et~al., 2021]{wienobst_polynomial-time_2020}
Wienöbst, M., Bannach, M., and Liśkiewicz, M. (2021).
\newblock {Polynomial-Time} {Algorithms} for {Counting} and {Sampling} {Markov}
  {Equivalent} {DAGs}.
\newblock {\em Proceedings of the 35th AAAI Conference on Artificial
  Intelligence (AAAI 2021)}, pages 12198--12206.
\newblock arXiv:2012.09679.

\end{thebibliography}

\end{document}